\DeclareMathOperator{\R}{\mathbb{R}}
\newcommand{\mbf}[1]{\mathbf{#1}}
\newcommand{\mbb}[1]{\mathbb{#1}}
\newcommand{\mcal}[1]{\mathcal{#1}}
\newcommand{\statef}{g}
\newcommand{\defeq}{\doteq}
\theoremstyle{plain}
\newtheorem{theorem}{Theorem}[section]
\newtheorem{proposition}[theorem]{Proposition}
\newtheorem{lemma}[theorem]{Lemma}
\theoremstyle{definition}
\newtheorem{definition}[theorem]{Definition}
\theoremstyle{remark}
\title{Real-Time Recurrent Learning using\\ Trace Units in Reinforcement Learning}
\author{%
Esraa Elelimy, Adam White$^*$, Michael Bowling$^*$, Martha White$^*$\\
University of Alberta, Alberta Machine Intelligence Institute (Amii)\\
$^*$Canada CIFAR AI Chair\\
\texttt{elelimy,amw8,mbowling,whitem@ualberta.ca} 
}
\begin{document}

\maketitle

\begin{abstract}
  Recurrent Neural Networks (RNNs) are used to learn representations in partially observable environments. For agents that learn online and continually interact with the environment, it is desirable to train RNNs with real-time recurrent learning (RTRL); unfortunately, RTRL is prohibitively expensive for standard RNNs. A promising direction is to use linear recurrent architectures (LRUs), where dense recurrent weights are replaced with a complex-valued diagonal, making RTRL efficient. In this work, we build on these insights to provide a lightweight but effective approach for training RNNs in online RL. We introduce Recurrent Trace Units (RTUs), a small modification on LRUs that we nonetheless find to have significant performance benefits over LRUs when trained with RTRL. We find RTUs significantly outperform other recurrent architectures across several partially observable environments while using significantly less computation.\footnote{Code available at \url{https://github.com/esraaelelimy/rtus}}
\end{abstract}

\section[Introduction]{Introduction}\label{sec: introduction}
Agents, animals, and people perceive their surrounding environment through imperfect sensory observations. 
%For example, animals cannot perceive objects outside their field of view. Similarly, for a self-driving car, the sensors attached to the car limit the perceived information about its surrounding environment. To successfully deploy artificial agents for the real world, we need to equip them with the ability to predict and control aspects of the environment under partial observability.
%
When the state of the environment is partially observable, agents construct and maintain their own state from the stream of observations. The constructed \emph{agent state} summarizes past environment-agent interactions in a form that is useful to predict and control future interactions~\citep{sutton2020mbrl}. 
Recurrent Neural Networks (RNNs) provide a flexible architecture for constructing agent state  \citep{kapturowski2018recurrent,li2015recurrent,hausknecht2015deep,espeholt2018impala,gruslys2018the}. 

%An RNN can learn to summarize and abstract a long trajectory of interactions in one vector, its recurrent state. 

While standard RNN architectures have been mainly supplanted by Transformers \citep{vaswani2017attention}, in online reinforcement learning settings where the agent learns while interacting with the environment, RNNs remain a promising direction to pursue~\citep{irie2023exploring,hafner2023mastering}. There are two main issues that limit the use of self-attention mechanisms from Transformers in online learning. First, calculating the similarity between each pair of points results in a computational complexity that is a function of $k^2$, where $k$ is the sequence length. Moreover, calculating the similarity between all pairs ignores the temporal order of the data points, which limits the usefulness of self-attention when the data is temporally correlated~\cite{zeng2022transformers}. Second, we need access to the whole sequence of observations before taking an action or updating the learnable parameters, which is impractical in continual learning. While recent works have reduced the complexity of transformers from quadratic in the sequence length to linear~\cite{irie2021going,irie2022modern,pramanik2023recurrent}, the entire sequence length is still needed to train such architectures. Gated Transformer-XL attempts to overcome this issue by keeping a moving window of previous observations~\citep{parisotto2020stabilizing}. A window of past observations does not scale well to long sequences---the computation is quadratic in the sequence length---and a window is one particular fixed function of history. The simpler recursive form in RNNs, on the other hand, can learn a great variety of functions of history from the data and is well suited for updating the state online from a sequential data stream and have been shown to outperform transformers in such settings~\cite{lurethinking}.

%Significant progress has been made in developing computationally efficient ways to train RNNs online. In this regard, 
A key open question is how to efficiently train RNNs in online RL.\@
We can divide the literature into methods that approximate Real-Time Recurrent Learning (RTRL) and those that restrict the recurrent architecture. RTRL \citep{williams1989rtrl} exploits the recursive nature of the gradient for RNNs, carrying forward the needed gradient information instead of unrolling the recurrent dynamics back in time like Truncated Backpropagation Through Time (T-BPTT)~\citep{williams1990tbptt}. RTRL avoids storing past data but is so computationally expensive and is intractable for even moderately sized networks. Several methods approximate the RTRL gradient update, including NoBackTrack \citep{ollivier2015NoBacktrack}, Unbiased Online Recurrent Optimization \citep{tallec2017unbiased,cooijmans2019variance}, Sparse N-Step Approximation (SnAp) \citep{menick2021practical}. All of these methods produce a biased gradient estimate.
Other works have tried to approximate an unbiased gradient estimate of BPTT specifically for the case of policy gradient updates in RL.\@ However, the approximation resulted in a high variance due to added stochasticity to the policy~\cite{altime}.

Methods in the second category usually restrict the RNN architecture to a diagonal RNN~\cite{GoriBPS,Mozer1989AFB}, including Columnar Networks \citep{javed2023online}, the element-wise LSTM \citep{irie2023exploring}, and Independently Recurrent Neural Networks (IndRNNs) \citep{li2018independently}. The RTRL algorithm is computationally efficient for such architectures. However, this approach sacrifices representation power and can perform poorly \citep{javed2023online}. Recent work suggests overcoming the poor performance of diagonal RNNs with a small modification: having a complex-valued recurrent state instead of restricting it to real values \citep{orvieto2023resurrecting}. In fact, as we will show in section~\ref{sec:complex}, there exists an equivalence between using a dense linear recurrent layer and a diagonal recurrent layer with complex values, indicating no loss of representational capacity. LRUs have been combined with RTRL \citep{zucchet2023online}, though only empirically explored for supervised learning datasets.

In this work, we extend the insights from LRUs into the online RL setting. Our primary contribution are our experiments showing that such a lightweight recurrent architecture can outperform standard approaches like Gated Recurrent Units (GRUs)~\citep{cho2014properties} in RL, with significantly less computation. To obtain this result, we propose a small extension on LRUs, which we call Recurrent Trace Units (RTUs). RTUs incorporate nonlinearity into the recurrence and use a slightly different parameterization than LRUs, but one we find is more amenable to the use of RTRL in online RL than LRUs. We extend Proximal Policy Optimization (PPO) \citep{schulman2017proximal} to use RTRL, ablating the decision choices we propose.
% with RTUs, we provide the firstWe additionally show how to effectively incorporate RTRL into PPO The primary difference is that RTUs use a cosine representation and convert the trace to an equivalent real-valued form as opposed to using complex-values for the hidden state with an exponential form. Though theoretically equivalent, this seemingly small change avoids conversion issues between complex and real numbers inside RTRL and empirically is more stable. 
We provide an in-depth study in an animal-learning prediction benchmark, showing that RTUs scale better than GRUs with increasing compute and number of parameters and that RTUs outperform alternative diagonal recurrent architectures trained with RTRL. We then show across numerous control environments that RTUs have comparable or better performance, compared to GRUs and LRUs.
%, have the best performance under similar computation budgets to other algorithms and comparable performance even when GRUs use 20x the computation.

%Add somewhere: In the original work, the focus was on parallel training and controlling gradients, still use truncated BPTT for training \citep{orvieto2023resurrecting}. They achieved non-linearity using a multi-layer NN after the LRU layer and by stacking many of these LRU-NN pairs. Follow-up worked focused on deriving an RTRL algorithm for this deep LRU architecture; this online LRU has an update similar to above with one layer and a clever approximation to deep LRUs \citep{zucchet2023online}. They
%% focus of that work was on handling long-range temporal tasks, 
%investigated performance of their approach with deep LRUs in a suite of supervised learning datasets from the Long Range Arena benchmark \citep{tay2020long}, which was designed for testing Transformer approaches. They did not investigate their algorithm in partially observable online RL environments. 

\section{Background}\label{sec: Background}
We formalize the problem setting as a Partially Observable Markov Decision Process (POMDP). 
At each time step $t=0,1,2,\ldots$, the agent perceives an observation $\mbf{x}_t$, a limited view of the state $\mbf{s}_t \in \mcal{S}$, and takes an action $A_t \in \mcal{A}(\mbf{s}_t)$. Depending on the action taken, the agent finds itself in a new state $\mbf{s}_{t+1} \in \mcal{S}$, observes the corresponding observation $\mbf{x}_{t+1}$ and a reward $R_{t+1} \in \R$.
In the online control setting, the agent's goal is to maximize the discounted sum of the received rewards. It may also make predictions about its environment, such as future observations' outcomes. 

For prediction and control in a partially observable environment, the agent should use the stream of observations to construct its \emph{agent state}. The agent state summarizes information from the history of the agent-environment interactions that are useful for prediction and control~\citep{sutton2020mbrl}. We could use the whole history up to $t$, namely $(\mbf{x}_0, A_1, R_1, \mbf{x}_1, A_2, R_2, \ldots \mbf{x}_t)$, as the agent state. Though the history preserves all the information, it is not feasible to use directly. We want the agent to have constant memory and computation per time step and storing the whole history causes the memory and the computation to grow with time. Instead, the agent needs to compress this history into a concise representation. We refer to the agent's internal representation of the history at time $t$ as its agent state or its hidden state $\mbf{h}_t$.
The agent constructs its current agent state $\mbf{h_t} \in \R^n$ from its previous agent state $\mbf{h}_{t-1} \in \R^n$ and the recent observation $\mbf{x}_t \in \R^d$ using a state-update function $\mbf{\statef} : \R^n \times \R^d  \rightarrow  \R^n$:
$\mbf{h}_t = \mbf{\statef}(\mbf{h}_{t-1},\mbf{x}_t)$.

One way to learn this state-update function $\mbf{\statef}$ is with a recurrent neural network (RNN). A simple form is a linear recurrent layer, where $\mbf{\statef}(\mbf{h}_{t-1},\mbf{x}_t) = \mbf{W}_{x} \mbf{x}_{t}  + \mbf{W}_{h}  \mbf{h}_{t-1}$ for weight matrices $\mbf{W}_{x}$ and $ \mbf{W}_{h}$. We can also add a nonlinear activation, such as ReLU.

In general, we will write \[\mbf{h}_t = \mbf{\statef}(\mbf{h}_{t-1}, \mbf{x}_t,\pmb{\psi}),\] where $\pmb{\psi}$ are the learnable parameters in the network. The agent maps the agent state $\mbf{h}_t$ to an output $\hat{y}_t$ and then receives a loss $\mcal{L}_t \doteq \mcal{L}(\hat{y}_t, y_t)$ indicating how far the output is from a target $y_t$.
The agent updates $\pmb{\psi}$ to minimize this loss over time. 

Two main gradient-based algorithms are widely used to train RNNs: Truncated Backpropagation Through Time (TBPTT) and Real-Time Recurrent Learning (RTRL).
T-BPTT specifies a truncation length $T$, which controls the number of steps considered when calculating the gradient~\citep{williams1990tbptt}.
As a result, the computation and memory complexities of T-BPTT are functions of the truncation length. Learning with T-BPTT involves a trade-off between the network's ability to look further back in time and its compute and memory requirements. Picking a large $T$ can be expensive, or require us to limit the network size, but picking too small of a $T$ can cause the agent to miss important relationships and so result in poor performance.

\citeauthor{williams1989rtrl} (\citeyear{williams1989rtrl}) introduced the Real-time Recurrent Learning algorithm (RTRL) as a learning algorithm for continual recurrent learning.
Instead of unrolling the recurrent dynamics back in time, RTRL computes the gradient using the most recent observation, and the gradient is calculated and carried from the last step~\citep{williams1989rtrl}. Assuming the network parameters have not changed, this recursive form gives the exact gradient and does not suffer from the truncation bias inherent to T-BPTT.\@ We provide a more detailed background on the BPTT and RTRL in Appendix~\ref{app:bptt}. In reality, the agent updates its parameters frequently, so the gradient information saved from previous time steps is stale, i.e., calculated w.r.t old parameters; yet, under the assumption of small learning rates, RTRL is known to converge~\citep{williams1989rtrl}. These properties make RTRL ideal for online learning, but unfortunately, there is a catch: its computational complexity is quartic, of fourth order, in the size of $\mbf{h}_t$, which can be prohibitively expensive. For this reason, we pursue a restricted diagonal form in this work, for which RTRL is efficient and linear in $\mbf{h}_t$.

\section{Recurrent Trace Units}\label{sec: RTU}
In this section, we introduce Recurrent Trace Units (RTUs). We start by revisiting why complex-valued diagonals represent dense recurrent layers, and why using real-valued diagonals is insufficient. We then introduce the specific form for RTUs that leverages this relationship. 
% a particular parameterization for a complex-valued diagonal RNN. 
We then provide the RTRL update for RTUs, highlighting that it is simple to implement and linear in the hidden dimension. We finally contrast RTUs to LRUs and motivate why this small extension beyond LRUs is worthwhile.
% contrast to LRUs, providing some motivation for why the extension to RTUs beyond LRthis extension if worthwhile in our experiments. 
%why we find that RTUs perform better with RTRL in our experiments. 

\subsection{Revisiting Complex-valued Diagonal Recurrence}~\label{sec:complex}

Assume we have the recurrence relationship, with learnable parameters $\mbf{W}_{h} \in \mathbb{R}^{n \times n}$ and $\mbf{W}_{x} \in \mathbb{R}^{n \times d}$, $\mbf{h}_{t} \doteq \ \mbf{W}_{h}  \mbf{h}_{t-1} + \mbf{W}_{x} \mbf{u}(\mbf{x}_{t})$, 
where $\mbf{u}$ can be any transformation of the inputs $\mbf{x}_t$ before they are inputted into the recurrent layer. 
We can rewrite the square matrix $\mathbf{W}_{h}$ using an eigenvalue decomposition
$\mathbf{W}_{h} \ =\ \mathbf{P} \ \mathbf{\Lambda } \ \mathbf{P}^{-1}$, where $\mathbf{P} $ contains the $n$ linearly independent eigenvectors and $\mbf{\Lambda} \in \mathbb{C}^{n \times n}$ is a diagonal matrix with the corresponding eigenvalues.
Then we have that
\begin{equation*}
    \mathbf{h}_{t} = \mathbf{P} (\mathbf{\Lambda } \ \mathbf{P}^{-1} \ \mbf{h}_{t-1} \ +  \mathbf{P}^{-1} \mbf{W}_{x} \ \mbf{u}(\mathbf{x}_{t})) \implies 
    \mathbf{P}^{-1}  \mathbf{h}_{t}  = \mathbf{\Lambda P}^{-1} \ \mbf{h}_{t-1} \ +\ \mbf{P}^{-1} \ \mbf{W}_{x} \ \mbf{u}(\mathbf{x}_{t})
\end{equation*}
By defining $ \overline{\mathbf{h}}_{t} \doteq \mathbf{P}^{-1} \ \mathbf{h}_{t} \in \mathbb{C}^{n}$ and $ \overline{\mathbf{W}}_{x} \ \doteq\ \mathbf{P}^{-1} \mathbf{W}_{x} \in \mathbb{C}^{n \times d}$, we get a new recurrence $\overline{\mbf{h}}_{t} = \ \boldsymbol{\Lambda} \overline{\mbf{h}}_{t-1} + \overline{\mbf{W}}_{x} \mbf{u}(\mbf{x}_{t})$. 

We can see $\overline{\mathbf{h}}_{t}$ and $\mbf{h}_{t}$ are representationally equivalent: they are linearly weighted for downstream predictions, and so the linear transformation on $\overline{\mathbf{h}}_{t}$ can fold into this downstream linear weighting. But it is more computationally efficient to use $\overline{\mathbf{h}}_{t}$ with a diagonal weight matrix $\mathbf{\Lambda}$, meaning
each hidden unit only has one recurrent relation instead of n. LRUs precisely leverage this equivalence \cite{orvieto2023resurrecting}. Specifically, they learn a complex-valued $\overline{\mbf{h}}_{t}$, and use $\text{Re}(\overline{\mathbf{W}} \ \overline{\mbf{h}}_{t})$ as an input to an MLP for downstream nonlinearity. 

Since we did not impose constraints on the matrix $\mbf{W}_h$, other than being diagonalizable, the eigenvalues of $\mbf{W}_h$ can be complex or real numbers. Previous diagonal RNNs such as eLSTM~\citep{irie2023exploring}, Columnar networks~\citep{javed2023online}, and IndRNN~\citep{li2018independently} use only real-valued diagonal matrices. Having only real-valued diagonals assumes that the matrix $\mbf{W}_h$ is symmetric. We provide a small experiment in Appendix~\ref{app:complex_exp} showing that this assumption does not hold even in the simplest setting and that complex eigenvalues do arise. We also investigate whether this result can be extended beyond linear recurrence, and largely obtain a negative theortical result (see Appendix~\ref{app_theory1} and ~\ref{app_theory} ).

%A natural question is if the above reformulation with complex-valued diagonals extends beyond linear recurrence. If we add a nonlinearity $f$ around the recurrence, then the eigenvector needs to be able to commute with that nonlinearity; intuitively, this seems unlikely to be true for anything but linear functions. Surprisingly, for the more restricted case of symmetric $\mbf{W}_h$, we can show a slightly more general class of activations can be used, proving an if-and-only-if relationship (see Appendix \ref{app_theory}). However, even for this restricted setting, this generalized class is limited and such activations unlikely to be preferable to a linear recurrence. We see this as a negative result, that suggests this equivalence only holds for the linear setting. 

\subsection{The RTU Parameterization}

A complex number can be represented in three ways: $a + bi$ (the real representation), $r \exp(i \theta)$ (the exponential representation), and $r (\cos(\theta) + i \sin(\theta))$ (the cosine representation). 
Mathematically, these three representations are equivalent, but do they affect learning differently?
~\citet{orvieto2023resurrecting} empirically showed that using the exponential representation resulted in a better-behaved loss function than the real representation on a simple task; we provide some discussion in Appendix \ref{app_instability} further motivating why the real representation is less stable. We chose instead to pursue the cosine representation, because it allows us to represent the complex hidden vector as two real-valued vectors. The remainder of this section outlines RTUs, with and without nonlinearity in the recurrence. 

%The cosine representation, however, has yet to be explored. We pursue the cosine representation, because it has similar stability properties to the exponential representation, and makes it simple to derive an RTRL algorithm using an equivalent real-valued form for the diagonal rather than explicitly using complex numbers. We discuss the utility of this more in Section \ref{ref_contrast}.

Our goal is to learn a complex-valued diagonal matrix with weights 
%$\lambda_k = a_k + b_ki = r_k (\cos(\theta_k) + i \sin(\theta_k))$ 
$\lambda_k = r_k (\cos(\theta_k) + i \sin(\theta_k))$ 
on the diagonal, for $k = 1, \ldots, n$. Multiplying by a complex number is equivalent to multiplying by a 2x2 block matrix with a rescaling. We can use this rotational form to avoid explicitly using complex numbers, and instead use two real-values for each complex-valued hidden node. 
We write this real-valued matrix $\mbf{\Lambda} \in \mathbb{R}^{2n \times 2n}$ as blocks of rotation matrices\footnote{We assume the matrix $\mbf{\Lambda}$ has only complex eigenvalues, as the network can easily turn a complex eigenvalue into a real one by setting the imaginary component to $0$.}
%The learning update can choose which ones to convert to real and which remain complex.} 
\begin{equation}
  \mbf{\Lambda} = \begin{bmatrix}
    \mathbf{c}_{1} &  &  & \\
      &  & \cdots  & \\
      &  &  & \mathbf{c}_{n}
    \end{bmatrix}
%    \label{eq:lambda_matrix}
\quad \quad \text {where } \quad
  \mbf{c}_k = \mbf{r}_{k}\begin{bmatrix}
      \cos( \theta _{k}) & -\sin( \theta _{k})\\
      \sin( \theta _{k}) & \cos( \theta _{k})
      \end{bmatrix}.
\end{equation}\label{eq:lambda_matrix}
Each element of $ \mathbf{h}_{t} = \mathbf{\Lambda} \mathbf{h}_{t-1} + \mathbf{W}_{x} \ \mathbf{x}_{t} \in \mathbb{R}^{2n}$ has two components $\mathbf{h}_{t}^{c_1}, \mathbf{h}_{t}^{c_2}$, updated recursively:
\begin{equation*}
  \begin{split}
  \!\!\mathbf{h}_{t}^{c_1} & = \mbf{r}\cos( \pmb{\theta}) \odot  \mathbf{h}_{t-1}^{c_1}  - \mbf{r}  \sin( \pmb{\theta} )  \odot \mathbf{h}_{t-1}^{c_2}  +\mathbf{W}_{x}^{c_1} \mathbf{x}_{t},\\
 \!\! \mathbf{h}_{t}^{c_2} & = \mbf{r}\cos( \pmb{\theta})\odot \mathbf{h}_{t-1}^{c_2}  +   \mbf{r}  \sin( \pmb{\theta} ) \odot  \mathbf{h}_{t-1}^{c_1} +\mathbf{W}_{x}^{c_2} \ \mathbf{x}_{t}.
\end{split}
%\label{rtu}
\end{equation*}
We finally combine the new recurrent states into one state $ \mathbf{h}_{t} \defeq [\mbf{f}(\mathbf{h}_{t}^{c_1});\mbf{f}(\mathbf{h}_{t}^{c_2})]$, potentially using a non-linearity $f$ after the recurrence.
%\begin{equation*}
%    \mathbf{h}_{t} = [\mbf{f}(\mathbf{h}_{t}^{c_1});\mbf{f}(\mathbf{h}_{t}^{c_2})].
%%\label{rtu_combined}
%\end{equation*}

\newcommand{\logpone}{\mbf{\nu}_1}
\newcommand{\logptwo}{\mbf{\nu}_2}

We also adopt two parameterization choices made in LRUs that showed improved performance. 
%Besides having complex-valued diagonals, there were two additional contributors to the improved performance of LRUs. 
The first is learning logarithmic representations of the parameters rather than learning them directly: instead of learning $\mbf{r}$ and $\pmb{\theta}$, the network learns $\pmb{\nu}^{\log}$ and $\pmb{\theta}^{\log}$, where $\mbf{r} \doteq \exp(-\pmb{\nu})$, $\pmb{\nu} = \exp(\pmb{\nu}^{\log})$, and $\pmb{\theta}^{\log} \doteq \log(\pmb{\theta})$. This re-parametrization restricts the $\mbf{r}$ to be $\in (0,1]$, required for stability. We found these modifications to improve stability of RTUs (see Appendix \ref{appendix_rtu}). 
The second parameterization choice we adopt from LRUs is to multiply the input $(\mathbf{W}_{x} \mathbf{x}_t)_k$ by a normalization factor of $\gamma_k = {(1-r_k^2)}^{1/2}$. 

Putting this all together, the final formulation of RTUs is:
\begin{align}
    \mathbf{h}^{c_1}_{t}  &= \mathbf{g}(\pmb{\nu}^{\log},\pmb{\theta}^{\log}) \odot \mathbf{h}^{c_1}_{t-1} - \pmb{\phi}(\pmb{\nu}^{\log},\pmb{\theta}^{\log}) \odot  \mathbf{h}^{c_2}_{t-1} 
    + \pmb{\gamma } \odot \mathbf{W}_{x}^{c_1} \mathbf{x}_{t},  \nonumber \\
    \mathbf{h}^{c_2}_{t}  & = \mathbf{g}(\pmb{\nu}^{\log},\pmb{\theta}^{\log}) \odot \mathbf{h}^{c_2}_{t-1} + \pmb{\phi}(\pmb{\nu}^{\log},\pmb{\theta}^{\log}) \odot  \mathbf{h}^{c_1}_{t-1}  
    + \pmb{\gamma } \odot \mathbf{W}_{x}^{c_2} \mathbf{x}_{t}, \label{Vanilla_RTU}\\
    \mathbf{h}_{t} & = [\mbf{f}(\mathbf{h}_{t}^{c_1});\mbf{f}(\mathbf{h}_{t}^{c_2})], \nonumber
\end{align}
where $\pmb{\gamma } \in \mathbb{R}^n$ is the vector composed of $\gamma_k = (1-\exp(-\exp({\nu}^{\log}_k))^2)^{1/2}$ and
\begin{equation}
  \begin{split}
    g({\nu}_k,\theta_k)  &\defeq \exp(-\exp({\nu}^{\log}_k)) \cos(\exp(\theta^{\log}_k)), \\
    \phi({\nu}_k,\theta_k)  &\defeq \exp(-\exp({\nu}^{\log}_k)) \sin(\exp(\theta^{\log}_k)).
  \end{split}\label{g_phi}
\end{equation}%
Note that $\pmb{\gamma}$ can be absorbed by $\mbf{W}$, and so does not change representation capacity.

%The name RTU comes from the fact that each element in the recurrent state in~\eqref{Vanilla_RTU} is a decaying trace of two elements from the previous time step.

There are two ways to incorporate non-linearity into RTUs: inside the recurrence or after the recurrence. In the above, in Equation \eqref{Vanilla_RTU}, the non-linearity is after the recurrence. These RTUs maintain the equivalence to a dense linear RNN, because the recurrence itself remains linear. We refer to this definition of RTUs as \emph{Linear RTUs}, because the recurrence is linear, even though we have the ability to represent nonlinear functions by allowing for any nonlinear activation after the recurrence. We also evaluated a different variation of RTUs where the non-linearity is added to the recurrence directly. These \emph{Nonlinear RTUs} are written as: 
\begin{align}
    \mathbf{h}^{c_1}_{t}  & = \mathbf{f}(\mathbf{g}(\pmb{\nu}^{\log},\pmb{\theta}^{\log}) \odot \mathbf{h}^{c_1}_{t-1} - \pmb{\phi}(\pmb{\nu}^{\log},\pmb{\theta}^{\log}) \odot  \mathbf{h}^{c_2}_{t-1}  
    + \pmb{\gamma } \odot \mathbf{W}_{x}^{c_1} \mathbf{x}_{t}), \nonumber \\
    \mathbf{h}^{c_2}_{t}  & = \mathbf{f}(\mathbf{g}(\pmb{\nu}^{\log},\pmb{\theta}^{\log}) \odot \mathbf{h}^{c_2}_{t-1} + \pmb{\phi}(\pmb{\nu}^{\log},\pmb{\theta}^{\log}) \odot  \mathbf{h}^{c_1}_{t-1}  
    + \pmb{\gamma } \odot \mathbf{W}_{x}^{c_2} \mathbf{x}_{t}), \label{nonLinear_RTU} \\
    \mathbf{h}_{t} & = [\mathbf{h}_{t}^{c_1};\mathbf{h}_{t}^{c_2}].\nonumber
\end{align}
Notice now $f$---a nonlinear activation like ReLU---is used in the update to $\mathbf{h}^{c_1}_{t}$ and $\mathbf{h}^{c_2}_{t}$, and the final $\mathbf{h}_{t}$ simply stacks these two components. Nonlinear RTUs lose the equivalence to dense RNNs, though in our experiments, we find they perform as well or better than Linear RTUs.

\subsection{The RTRL Update for RTUs}

This section shows the RTRL updates for RTUs with more in-depth derivations in Appendix~\ref{appendix_rtu}. 
%and set up to contrast to the online LRU algorithm that uses RTRL updates for LRUs \citep{zucchet2023online}. 
To keep notation simpler, we write the updates as if we are directly updating $r$ and $\theta$; the updates for $\pmb{\nu}^{\log}$ and $\pmb{\theta}^{\log}$ are easily obtained then using the chain rule. The full derivation is in Appendix \ref{app_rtrl_linear}. %particularly giving the recurrent updates directly in terms of $\pmb{\nu}^{\log}$ and $\pmb{\theta}^{\log}$.

Consider the partial derivative with respect to $r_1$ for the first RTU with input $\bar{x}_1 \defeq {(\mathbf{W}_{x}^{c_1} \mathbf{x}_{t})}_1$:
\begin{equation*}
h_{t,1}^{c_1} = r_1 \cos(\theta_1)   h_{t-1,1}^{c_1}  - r_1 \sin( \theta_1 )h_{t-1,1}^{c_2}  + \sqrt{(1-r_1^2)} \bar{x}_1.
\end{equation*} 
Then
\vspace{-0.3cm}
%and similar formula for $h_{t,1}^{c_2}$. Then  
\begin{equation*}
\frac{\partial \mcal{L}_t}{\partial r_1} = \frac{\partial \mcal{L}_t}{\partial h_{t,1}^{c_1}}  \frac{\partial h_{t,1}^{c_1}}{\partial r_1} + \frac{\partial \mcal{L}_t}{\partial h_{t,1}^{c_2}}  \frac{\partial h_{t,1}^{c_2}}{\partial r_1}.
\end{equation*}
Since $r_1$ only impacts the two units in the first RTU, and derivatives w.r.t.\ the remaining hidden units are zero.
Therefore, we just need to keep track of the vector of partial derivatives for these two components, $\mathbf{e}^{r,c_1}_t \defeq [\frac{\partial h_{t,1}^{c_1}}{\partial r_1}, \ldots, \frac{\partial h_{t,n}^{c_1}}{\partial r_n}]$ and $\mathbf{e}^{r,c_2}_t \defeq [\frac{\partial h_{t,1}^{c_2}}{\partial r_1}, \ldots, \frac{\partial h_{t,n}^{c_2}}{\partial r_n}]$ with recursive formulas:
 \begin{align*}
 \!\!\mathbf{e}^{r,c_1}_t \!\!=& \cos(\mbf{\theta}) \!\odot \mbf{h}_{t-1}^{c_1} \!+\! \mbf{r} \!\odot  \cos(\mbf{\theta}) \odot \mathbf{e}^{r,c_1}_{t-1}
 - \sin(\mbf{\theta}) \odot \mbf{h}_{t-1}^{c_2} - \mbf{r} \odot  \sin(\mbf{\theta}) \odot \mathbf{e}^{r,c_2}_{t-1} -  \tfrac{\mbf{r}}{\sqrt{\mbf{1}-\mbf{r}^2}} \odot \mathbf{W}_{x}^{c_1} \mathbf{x}_{t}\\
 \!\!\mathbf{e}^{r,c_2}_t \!\!=& \cos(\mbf{\theta}) \!\odot \mbf{h}_{t-1}^{c_2} \!+\! \mbf{r} \!\odot  \cos(\mbf{\theta}) \odot \mathbf{e}^{r,c_2}_{t-1}
 + \sin(\mbf{\theta}) \odot \mbf{h}_{t-1}^{c_1} + \mbf{r} \odot  \sin(\mbf{\theta}) \odot \mathbf{e}^{r,c_1}_{t-1} - \tfrac{\mbf{r}}{\sqrt{\mbf{1}-\mbf{r}^2}} \odot \mathbf{W}_{x}^{c_2} \mathbf{x}_{t}
\end{align*}
We can similarly derive such traces for $\mbf{\theta}$. 
The update to $\mbf{r}$ involves first computing $\frac{\partial \mcal{L}_t}{\partial h_{t}^{c_1}}$, using backpropagation to compute gradients back from the output layer to the hidden layer; this step involves no gradients back-in-time. Then $\mbf{r}$ is updated using the gradient $\frac{\partial \mcal{L}_t}{\partial \mbf{h}_{t}^{c_1}} \odot \mathbf{e}^{r,c_1}_t + \frac{\partial \mcal{L}_t}{\partial \mbf{h}_{t}^{c_2}} \odot \mathbf{e}^{r,c_2}_t$, which is linear in the size of $\mbf{r} \in \mathbb{R}^n$, as the vectors $\mathbf{e}^{r,c_1}_t, \mathbf{e}^{r,c_2}_t \in \mathbb{R}^n$ can be updated with linear computation in the above recursion. This update is the RTRL update, with no approximation.%We provide a more in-depth derivation for RTRL with RTUs along with the complexity analysis and implementation details in Appendix~\ref{appendix_rtu}

\subsection{Contrasting to LRUs}\label{ref_contrast}
RTUs are similar to LRUs, with two small differences. First, RTUs have real-valued hidden units, because the cosine representation is used instead of the exponential representation. 
%Such a reparameterization could still be thought of as a Linear Recurrent Unit, since the only difference is in how the complex number is represented. We do find that this seemingly minor shift makes it easier to use these complex-valued diagonal RNNs, which we explain further below, and so we advocate for this parameterization rather than simply using the one proposed for LRUs. 
Second, RTUs use nonlinear activations in the recurrence, making them no longer linear. Though again a minor difference, we find that incorporating nonlinearity in the recurrence can be beneficial.\@ RTUs can be seen as a small generalization of LRUs, moving away from strict linearity---and thus motivating the name change---but nonetheless a generalization we find performs notably better in practice.
 
Let us now motivate the utility of moving to a cosine representation and real-valued traces. LRUs parameterize each hidden unit with $\lambda_k = r_k\exp(i \theta_k) = \exp(-\exp({\nu}^{\log}_k) + i \exp(\theta^{\log}_k))$ and directly work with complex numbers. Consequently, the hidden layer cannot be directly used to predict real-values. It would be biased to take $\text{Re}(\overline{\mbf{h}}_{t})$ (see Appendix~\ref{app:biased_gradient}), and instead an additional weight matrix $\overline{\mathbf{W}} \in \mathbb{C}^{n\times n}$ must be learned, to get $\text{Re}(\overline{\mathbf{W}} \ \overline{\mbf{h}}_{t})$. To understand why this works, assume that we took the original $\mbf{h}_{t}$ from the dense NN, and handed it to an MLP.\@ This would involve multiplying $\mathbf{W} \mbf{h}_{t}$ for some $ \mathbf{W}$. If we set $\overline{\mathbf{W}} = \mathbf{W} \mathbf{P}$, then $\overline{\mathbf{W}} \ \overline{\mbf{h}}_{t} = \mathbf{W} \mathbf{P} \mathbf{P} ^{-1}\mbf{h}_{t} =  \mathbf{W} \mbf{h}_{t}$ and we did not introduce any bias. In fact, if $\overline{\mathbf{W}}$ is set this way, we do not need to take the real-valued part, because the output of $\overline{\mathbf{W}} \overline{\mbf{h}}_{t}$ is real-valued. Of course, learning does not force this equivalence---in fact this parameterization is more flexible than the original---and so it is necessary to take the real-part. 

RTUs avoid some of these complications by explicitly writing the recurrence and updates with real-valued hidden states. Implicitly, the relationship between the two real-valued hidden vectors forces them to behave like complex numbers (as rotations), but all equations and learning stay in real-valued space. RTUs consequently avoid the need to post multiply by the matrix, removing a small number of learnable parameters, allowing the use of a nonlinear activation directly on the output, and allowing the hidden state to be immediately passed to a downstream MLP.\@ We acknowledge that others may argue that working directly with complex numbers is preferable. 
The preference for real-valued hidden layers may simply be our own limitations, but we suspect much of the reinforcement learning community is similarly more comfortable to work in real-valued space. We found small choices in our implementation for LRU did not always behave as expected, partially due to how auto-differentiation is implemented in packages for complex numbers\footnote{Autodiff can give unexpected results when dealing with complex numbers. For example, see the discussion \url{https://github.com/google/jax/discussions/6817}.}.In the end, our goal is to make these simple recurrent traces easy to use, and providing updates with real numbers may remove some barriers.

\section{Online Prediction Learning}
In this section, we explore different architectural variants of RTUs and LRUs in a online prediction task and then move on to study the tradeoffs between computational resources and performance when using RTUs with RTRL compared to GRUs and LRUs with T-BPTT.\@ %We also compare RTUs to two other approaches that use RTRL: online LRUs and a real-valued diagonal RNN.
  
\subsection{Ablation Study on Architectural Choices for RTUs and LRUs}
In this first experiment, we investigate the impact of several architectural choices on the performance of RTUs and LRUs varying where nonlinearity is applied. We use a simple multi-step prediction task called \emph{Trace conditioning}~\cite{rafiee2020eye} inspired by experiments in animal learning. The agent's objective is to predict a signal---called the Unconditional Stimulus (US)---conditioned on an earlier signal---the Conditional Stimulus (CS). The prediction is formulated as a return, $G_t \doteq \sum_{k=0}^{\infty} \gamma^{k} {\text{US}}_{t+k+1}$, where the agent's goal is to estimate the value function for this return. More details on this environment and experimental settings are in Appendix~\ref{app_trace_env}. Figure \ref{fig:arch_ablation} summarizes the results.

\begin{figure*}[h]
  \vspace{-0.1cm}
  \begin{center}
  \includegraphics[width=\textwidth]{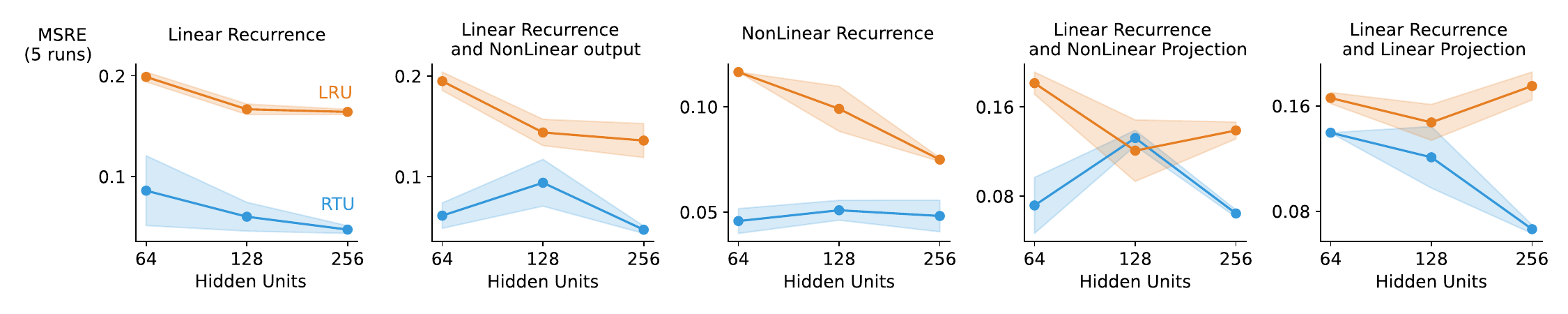}
   \vspace{-0.6cm}
  \caption{Ablation over different architectural choices for RTUs and LRUs. The RTU variants are blue, and the LRU variants are orange. In each subplot, we restrict both architectures in a particular way, reporting prediction error (MSRE) as a function of hidden state size. Across variations, RTUs are often better and, at worst, tie LRU\@. Here, both architectures were using RTRL.}\label{fig:arch_ablation}
  \end{center}
\end{figure*}

%\textbf{Projecting the hidden states from the eigenspace to the real space:}
%
%\textbf{Nonlinearity:}

\subsection{Learning under resources constraints}
In this section, we investigate the tradeoffs between computational resources and performance when using RTUs with RTRL compared to GRUs and LRUs with T-BPTT.  

In the following experiments, all agents consist of a recurrent layer followed by a linear layer generating the prediction. We measure performance of the agents online by calculating the Mean Square Return Errors (MSRE), which is the mean squared error between the agent's prediction at time $t$ and $G_t$. In all the experiments, we used the Adam optimizer. We first ran each agent with different step sizes for $5$ runs and $2$ million steps. We then averaged the MSRE over the $2$ million steps and selected each agent's best step size value. Finally, we ran the agents with the best step size value for $10$ runs, which we report here. We also report all agents' step size sensitivity curves in Appendix~\ref{app:lr_sensitivity_tc}.
\begin{figure}[htb!]
  %\vspace{-1cm}
  %\begin{center}
  \includegraphics[width=\textwidth]{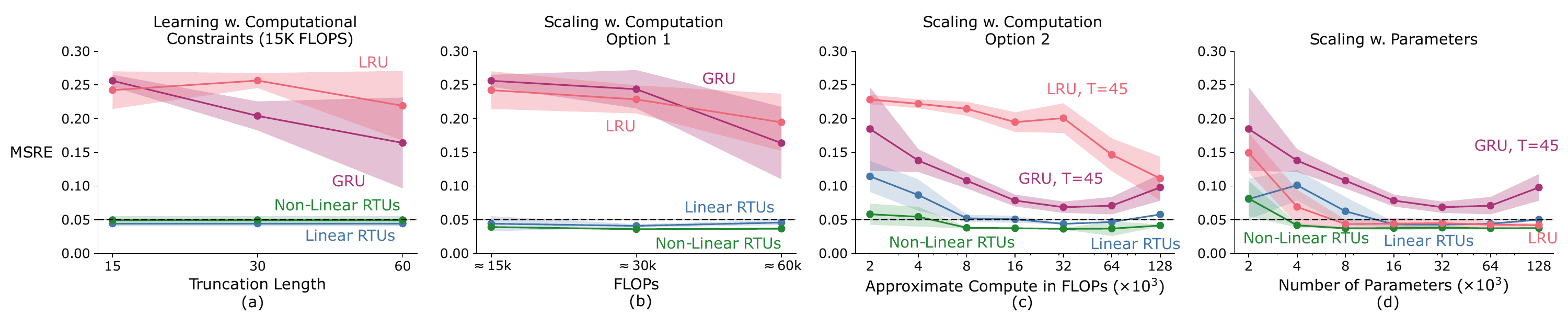}
  %\vspace{-cm}
  \caption{{\bf Learning under resources constraints in Trace Conditioning.} Each of the four subplots shows how each algorithm's performance varies as a function of resources. (a) LRU and GRU with T-TBTT is not competitive with RTUs even as $T$ is increased while restricting the number of hidden units in LRU and GRU so that all algorithms use about the same computation per step. (b) If we allow GRU and LRU's computation to increase (fixed network size) while increasing $T$, the performance gap remains. (c) Fixing $T$ to a large value to solve the task, we can increase the number of parameters, holding the computation equal for all methods. (d) If we do not require compute to be equal across methods as we scale parameters, then the LRU can eventually match the error of RTU, but GRU cannot.
  The black dashed line represents the near perfect prediction performance.}
  %\end{center}
  \label{fig:animal_learning}
\end{figure}

\textbf{Learning under computational constraints:}
We first investigate: \emph{how well do different agents exploit the available computational resources?} We specified a fixed computational budget of $~15000$ FLOPs. Since RTUs are learning using RTRL and have a linear computational complexity, the computational budget only determines the number of hidden units in the architecture. 
For GRU and LRU, both the truncation length and the hidden dimension contribute to the budget. We tested several configurations of truncation lengths and parameters such that the overall computations fit the computational budget.
Figure~\ref{fig:animal_learning}.a shows the results of this experiment. As we move along the horizontal axis, the number of parameters for GRU and LRU decreases as $T$ increases to fit the computational constraints. However, the RTU agents do not depend on $T$, so their performance and computation is constant. 

\textbf{Scaling with computation:}
The computational complexity of T-BPTT depends on the truncation length and the number of parameters in the neural network. Thus, the agent can use the additional resources in two ways: (1) Increasing the truncation length, and (2) Increasing the number of parameters.
On the other hand, RTUs use all the computations to have more parameters.

Now, we move to our second question: \emph{how well do different methods scale with increasing the computational budget?}
We answer this question in two stages: Firstly, we study T-BPTT with increasing $T$ and a fixed number of parameters. For RTU, the computation increases by adding more parameters such that all corresponding points from GRU and RTU use the same amount of computation.
Secondly, we fixed the truncation length for GRU to 45, which is more than the maximum distance between the CS and the US, and increased the computation by increasing the number of parameters for GRU. Again, for RTU, we increased the computation by increasing the number of parameters. 

Figure of~\ref{fig:animal_learning}.b shows the first experiment's results. While GRU's performance improved as the truncation length increased, RTU outperformed GRU across all different computational budgets. Figure~\ref{fig:animal_learning}.c shows the results of the second experiment. The RTU agent's performance consistently improves as we increase the computation available. However, the performance improvement for the GRU agent is inconsistent. The inconsistency of GRU performance could be associated with the trade-off between the truncation length and the number of parameters.

\textbf{Scaling With Parameters:}
Finally, we study the performance of RTU and GRU when given the same number of parameters and allow the GRU agent to use more computation. 
We fixed the truncation length for GRU to $45$ as before and used the same number of parameters for both agents.
Figure~\ref{fig:animal_learning}.d shows the results of this experiment. For RTU, we see the same consistent performance improvement as we increase the number of parameters. For GRU, the performance improvement is also consistent, though it degrades slightly towards the end. The RTU agent outperforms the GRU agent even though the GRU uses more computation.

We provide additional experiments comparing RTUs to two other approaches that use RTRL: online LRUs and a real-valued diagonal RNN in Appendix~\ref{app_trace_exp}.
\section{Real-Time Recurrent Policy Gradient}
This section first highlights some differences in using \emph{linear RTRL} methods, i.e., RTRL with linear complexity, in incremental and batch settings. We then investigate different ways of integrating linear RTRL methods with policy gradient approaches, and we use PPO as a case study for this investigation. Finally, we compare the performance of RTRL methods with T-BPTT methods and other baselines.
\subsection{Linear RTRL Methods in Incremental and Batch Settings}
The benefits of linear RTRL methods over T-BPTT are more evident in the incremental rather than the batch setting. 
In the incremental learning setting, where the agent updates its parameters after each interaction step, linear RTRL methods have a constant computational complexity per update that depends only on the number of parameters. In contrast, T-BPTT methods have a complexity proportional to the truncation length T since T-BPTT methods require storing a sequence of past T activations to perform one gradient update. 
Figure~\ref{fig:wall_time} shows the time it takes to make one update with linear RTRL and T-BPTT given the same number of parameters. For T-BPTT, the time to make one update scales with the truncation length T, while for linear RTRL, it is constant. 
\begin{figure}[htb]
  \begin{minipage}[c]{0.5\columnwidth}
\includegraphics[width=\textwidth]{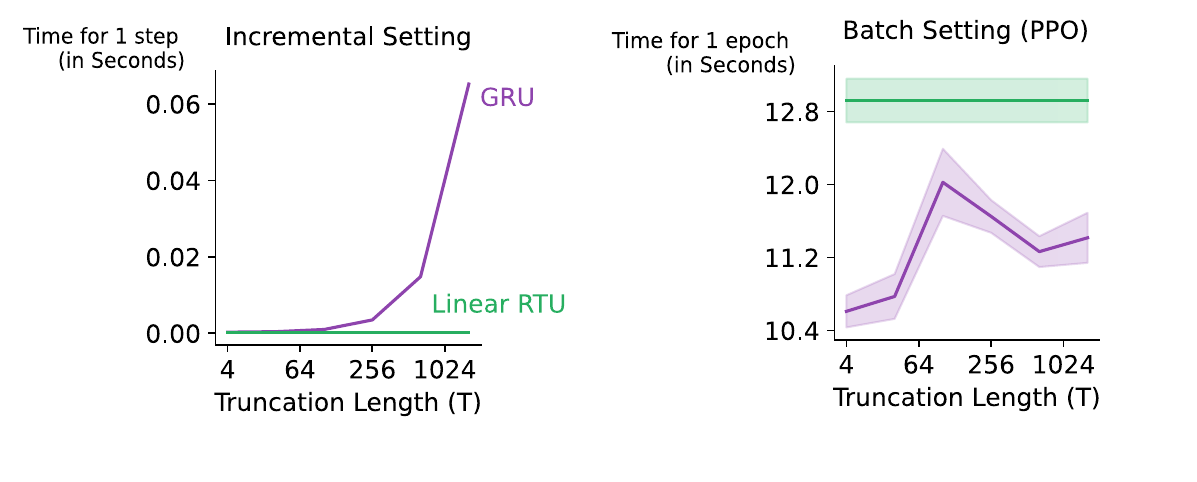}
  \end{minipage}\hfill
    \begin{minipage}[c]{0.45\columnwidth}    
  \caption{Contrasting runtime in incremental and batch settings. In the incremental setting, evaluated in the animal-learning prediction task, T-BPTT updates scale with truncation length, whereas linear RTRL is constant. With batch updates, evaluated in Ant-P with PPO, linear RTRL remains linear and T-BPTT is slightly more efficient.}\label{fig:wall_time}
  \end{minipage}
  \vspace{-0.5cm}
\end{figure}

%AdamC: is this essential 
%The gradients T-BPTT uses carry information only from the last T steps, which means that this gradient estimate is more biased toward recent interactions and ignores past interactions. On the other hand, the gradient traces used by RTRL methods are un-truncated and carry information from all previously seen timesteps. While these gradient traces are stale (i.e., computed w.r.t old parameters), they provide a better approximation for the gradient from previous steps than simply assuming it is zero, as with T-BPTT.\@

The computational analysis for the batch setting is different than the incremental setting. 
In the batch setting, linear RTRL still have a constant cost per update and provide an untruncated yet stale gradient for all the samples. When using T-BPTT in the batch setting, there are two possibilities for the gradient updates. The first option, the typical strategy, is to divide the batch into non-overlapping sequences, each of length T, and perform T-BPTT on each sequence. In this case, the cost of one gradient update per sequence is a function of T, resulting in an effective update cost per sample independent of T. As a result, T-BPTT is computationally efficient in this case, albeit at the expense of a worse gradient estimate; in each sequence, only the last sample has a gradient estimate with T steps~\citep{marschall2020unified}.
Figure~\ref{fig:wall_time} shows the time it takes to make one batch update with linear RTRL and T-BPTT given the same number of parameters. In this case, both methods use similar time per update.
The second option is to divide the batch into overlapping sequences, where each gradient uses a sequence of T steps~\citep{marschall2020unified}. This approach increases the cost of updates per sample to be proportional to T, as in the incremental setting, with the benefit of better gradient estimates. However, all standard implementations of RL methods with T-BPTT use the computationally efficient option \citep{raffin2021stable,huang2022cleanrl,lu2022discovered}.

\textbf{Integrating Linear RTRL Methods with PPO}
When performing batch updates, as with PPO, the RTRL gradients used to update the recurrent parameters will be stale, as they were calculated during the interaction with the environment w.r.t old policy and value parameters. One solution to mitigate the gradient staleness is to go through the whole trajectory after each epoch update and re-compute the gradient traces. However, this can be computationally expensive. 
In Appendix~\ref{app:staleness}, Algorithm~\ref{alg:rtrl_ppo}, we provide the pseudocode for integrating RTRL methods with PPO with optional steps for re-running the network to update the RTRL gradient traces, the value targets, and the advantage estimates. 
We also performed an ablation study to investigate the effect of the gradient staleness in RTRL when combined with PPO, Appendix~\ref{app:staleness}. The results from the ablation study show that using a stale gradient results in better performance with RTUs and suggests that the staleness might help PPO maintain the trust region. 

\section{Experiments in Memory-Based Control}
In this section, we evaluate the memory capabilities of RTUs when solving challenging RL control problems. We divide the problems in this section based on the source of partial observability. 1) Missing sensory data, where we mask out parts of the agent's observation. The agent must accumulate and integrate the sensory observations over time to account for the missing information. 
2) Remembering important cues, where the agent must remember an essential cue about the environment that happened many steps in advance.
%We use stale gradients for all RTRL agents (i.e., RTUs and Online LRUs).

\textbf{Integrating Sensory Observations:} \ \ \
We use the standard Mujoco POMDP benchmark widely used in prior work for evaluating memory-based RL agents~\citep{ni2022recurrent,han2019variational,meng2021memory,ni2023transformers}. The benchmark consists of several challenging tasks where the agent controls a multi-joint dynamical body while only observing the joints' positional (Mujoco-P) or velocity information (Mujoco-V).
To increase experiment throughput, we use the Jax implementation of Mujoco from the Brax library~\citep{brax2021github} and implemented wrappers to mask either the velocity (Mujoco-P) or positional information (Mujoco-V).

We evaluated our Linear and Non-linear RTUs against GRU, LRU, and Online LRU. All agents use PPO~\citep{schulman2017proximal} as the control algorithm, and the difference between the agents is the recurrent component. For all agents, we fixed the number of parameters for the recurrent part to be $\sim 24$k. We tuned the learning rate for all agents in all environments and selected the best learning rate for each agent per environment.
We also included a GPT2-transformer baseline. We followed the implementation details in previous work ~\citep{ni2023transformers}, and used a GPT2 variant with $200$k parameters. We provide the results for GPT2 in Appendix~\ref{app:brax}.
\begin{figure*}[h]
  \vspace{-0.5cm}
  %\begin{center}
  \includegraphics[width=\textwidth]{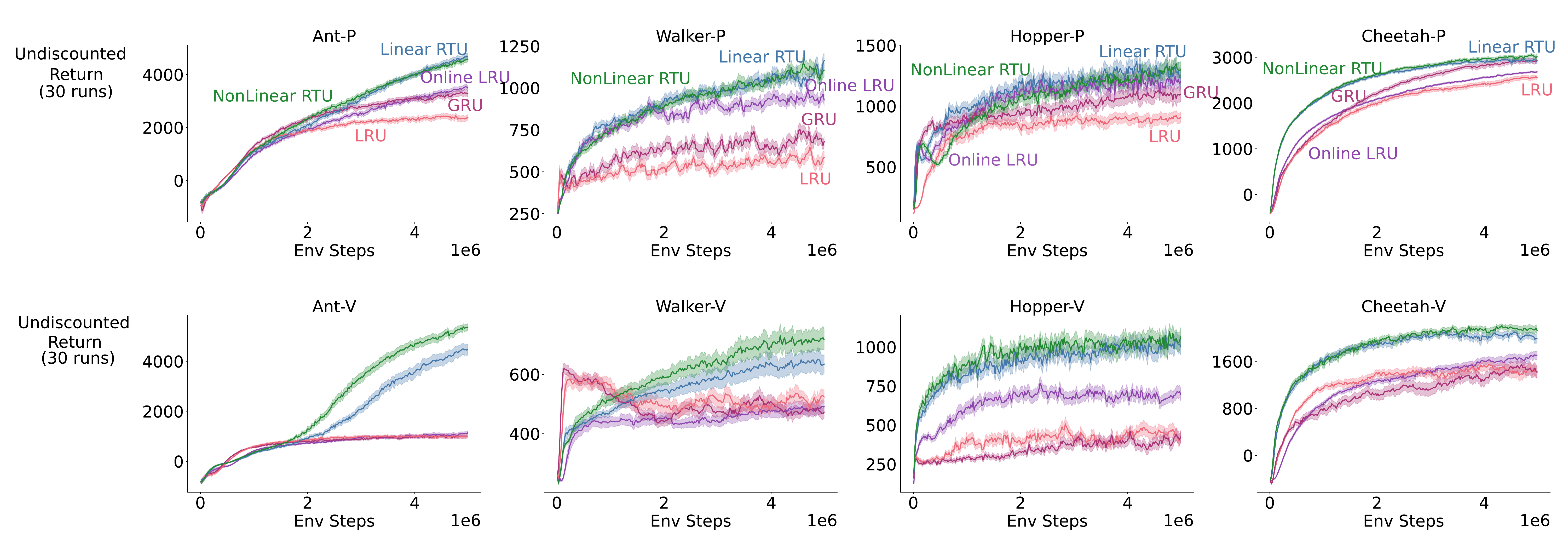}
  \vspace{-.5cm}
  \caption{Learning curves on the Mujoco POMDP benchmark. Environments with -P mean that velocity components are occluded from the observations, while -V means that the positions and angles are occluded. All architectures have the same number of recurrent parameters ($~24$k parameter). For each architecture, we show the performance of its best-tuned variant.}\label{fig:mujoco_all}
  %\end{center}
  \vskip -0.2in\label{fig:mujoco_all}
\end{figure*}

%We show the results of these experiments in Figures~\ref{fig:small_t} and~\ref{fig:large_t}. 
When given the same number of parameters, RTU agents outperform other baselines in all environments in Figure~\ref{fig:mujoco_all}. Furthermore, we show in Appendix~\ref{app:brax} that 
even when increasing the truncation length of both GRU and LRU agents to use significantly longer history, they outperform RTUs in only one task. Of particular note is again that RTUs outperform online LRUs, highlighting again that our simple modifications have a large impact on performance in this online RL setting. 

%These results suggest that RTUs are more efficient in integrating sensory observations than other recurrent architectures.

\textbf{Remembering Important Cues:}\\
Next, we test the agents' ability to remember essential environmental cues. We use several tasks from the POPGym benchmark~\citep{morad2023popgym} in addition to the Reacher POMDP task, a modified version of Mujoco Reacher where the agent observes the target position only at the beginning of the episode.  
\begin{wrapfigure}[9]{l}{0.4\columnwidth}
\begin{center}
\vspace{-.6cm}
  \includegraphics[width=0.3\columnwidth]{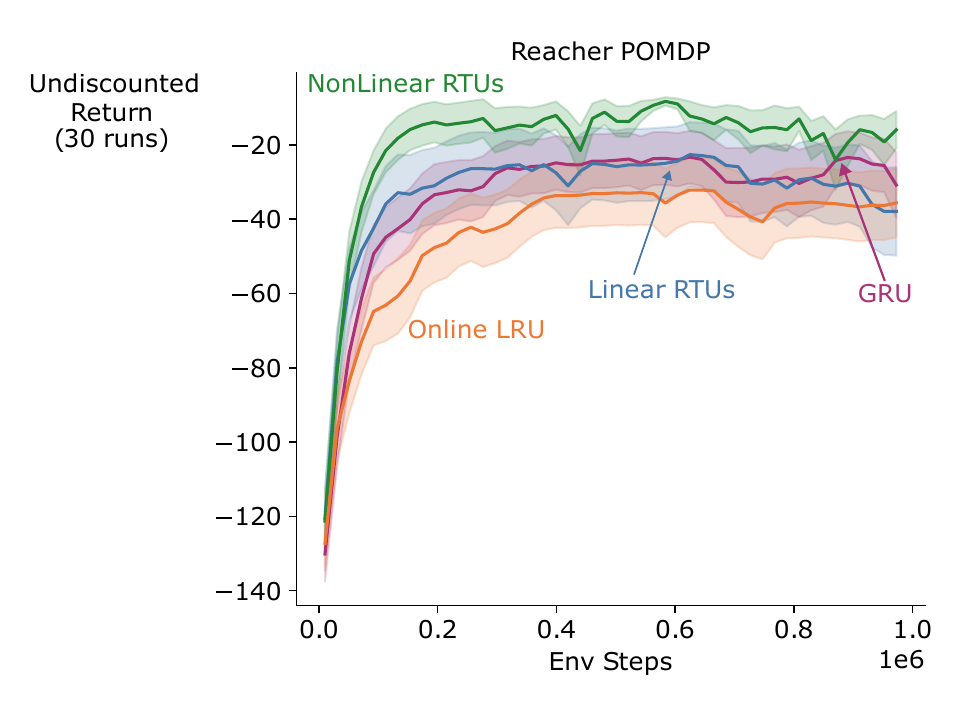}
\end{center}
    %\vskip -0.2in
\vspace{-.8cm}
  \caption{Reacher, $30$ runs with standard errors. }\label{fig:reacher_pomdp}
  %\vskip -0.2in 
\end{wrapfigure}
The POPGym tasks we consider along with the Reacher POMDP are all long-term memory tasks~\citep{ni2023transformers} as the agent must remember and carry the information for a long time.

Figure~\ref{fig:reacher_pomdp} summarizes the results for the reacher POMDP task and the POPGym results can be found in Figure~\ref{fig:popgym}. In both cases, we can see that RTUs outperform the other approaches. Non-linear RTUs achieve a better performance than linear RTUs in reacher POMDP, and both achieve a better performance in all tasks than online LRUs. In Reacher POMDP, GRU was able to achieve a similar performance to that of linear RTUs.
\begin{figure}[h]
  %\vskip 0.2in
   %\begin{center}
   \includegraphics[width=\columnwidth]{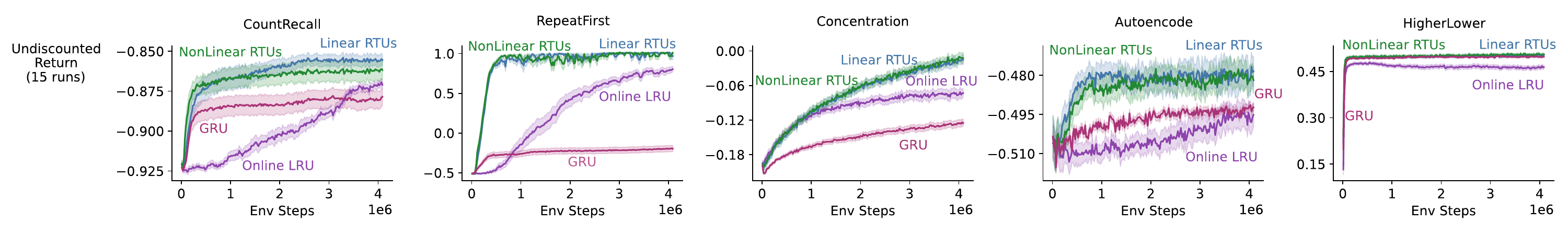}
   \vspace{-0.5cm}
   \caption{Results across several tasks from the POPGym benchmark.}\label{fig:popgym}
   %\end{center}
   \vspace{-0.5cm}
 \end{figure}
 
%We investigate the staleness effect when integrating RTRL methods with Policy gradient approaches. A stale gradient might help maintain the trust region for PPO.
\section{Conclusion and Limitations}
In this work, we investigated using complex-valued diagonal RNNs for online RL. 
We built on LRUs, to provide a small modification (RTUs) that we found performed significantly better in online RL across various partially observable prediction and control settings. We also found RTUs performed better than the more computationally intensive GRUs. 
%Our experiments highlighted that RTUs scale with computation and parameters more effectively than GRUs. We also showed that RTUs outperform a similar LRU architecture when trained by truncated BPTT and RTRL. RTUs are easy to use, using pairs of hidden units with a specific way to share parameters and a simple way to incorporate nonlinearity into the recurrence. 
Overall, RTUs are a promising, lightweight approach to online learning in partially observable RL environments. 

A primary limitation of RTUs is the extension to multilayer recurrence. This limitation is inherent to all RTRL approaches; with multilayers, we need to save the gradient traces of the hidden state w.r.t the weights from all the preceding layers ~\citep{irie2023exploring}. Previous work~\citep{irie2023exploring,zucchet2023online} showed that using stop gradient operations between the layers and not tracing the time dependencies across layers is a viable choice. However, we need a more principled approach for tracing the gradient across layers.

One advantage of the linearity restriction in LRUs is that it allows the use of parallel scans for training~\cite{martin2018parallelizing}. However, recent works have shown the possibility of employing parallel scans to non-linear RNNs~\cite{gonzalez2024towards,limparallelizing}. A future direction is to investigate the use of parallel scans for training RTUs.
%Finally, while we thoroughly investigated the gradient staleness effect on PPO, we plan to extend this analysis to other RL algorithms in future work.
\section{Acknowledgments}
We would like to thank Nicolas Zucchet for advice about the online LRU implementation, and Subhojeet Pramanik for many discussions on transformers and RNNs. 
We would like to thank NSERC, CIFAR and Amii for research funding and the Digital Research Alliance of Canada for the computational resources.
%%%%%%%%%%%%%%%%%%%%%%%%%%%%%%%%%%%%%%%%%%%%%%%%%%%%%%%%%%%%%
\medskip

{
\small
\bibliography{refs}
\bibliographystyle{plainnat}
}

%%%%%%%%%%%%%%%%%%%%%%%%%%%%%%%%%%%%%%%%%%%%%%%%%%%%%%%%%%%%

%\section{Appendix}
%Optionally include supplemental material (complete proofs, additional experiments and plots) in appendix.
%All such materials \textbf{SHOULD be included in the main submission.}

\newpage
\appendix

\section{Background on BackPropagation Through Time and Real-Time Recurrent Learning}\label{app:bptt}
This section provides a brief background on BackPropagation Through Time (BPTT) and Real-Time Recurrent Learning (RTRL) algorithms.
\subsection{BackPropagation Through Time}
BPTT calculates the gradient, $\nabla_{\pmb{\psi}} \mcal{L}$, by unfolding the recurrent dynamics through time and incorporating the impact of the parameters on the loss from all observed time steps.
Formally, we can write $\nabla_{\pmb{\psi}} \mcal{L}$ as:
\begin{equation}
    \label{eq:grad_loss}
    \nabla_{\pmb{\psi}} \mcal{L} = \frac{1}{t}\sum_{i=0}^{{t-1}} \nabla_{\pmb{\psi}} \mcal{L}_{i}.
\end{equation}
Applying the chain rule, we re-write Eq.\ref{eq:grad_loss} as:
\begin{equation}
    \begin{split}
    \label{eq:grad_loss_unrolled}
    \nabla_{\pmb{\psi}} \mcal{L} & = \frac{1}{t} \sum_{i=0}^{t-1} \nabla_{\pmb{\psi}} \mcal{L}_i\\
    & = \frac{1}{t}  \sum_{i=0}^{t-1} \frac{\partial \mcal{L}_i}{\partial \mbf{h}_i} \frac{\partial \mbf{h}_i}{\partial \pmb{\psi}}.
    \end{split}
\end{equation}

When calculating $\frac{\partial \mbf{h}_i}{\partial \pmb{\psi}}$, we need to consider the effect of $\pmb{\psi}$ from all the time steps. To illustrate this effect, consider unrolling the last $2$ steps of the RNN dynamics:
\begin{equation}
    \begin{split}
    \label{eq:dynamics_unrolled}
    \mbf{h}_t & = \mbf{f}({\mbf{h}_{t-1}}, \mbf{x}_t,\pmb{\psi}) \\
    &\text{Re-write ${\mbf{h}_{t-1}}$ as $\mbf{f}({\mbf{h}_{t-2}}, \mbf{x}_{t-1},\pmb{\psi})$}\\
              & = \mbf{f}({\mbf{f}({\mbf{h}_{t-2}}, \mbf{x}_{t-1},\pmb{\psi})}, \mbf{x}_t,\pmb{\psi})\\
    &\text{Re-write ${\mbf{h}_{t-2}}$ as $\mbf{f}({\mbf{h}_{t-3}}, \mbf{x}_{t-2},\pmb{\psi})$}\\
              & = \mbf{f}({\mbf{f}({\mbf{f}({\mbf{h}_{t-3}}, \mbf{x}_{t-2},\pmb{\psi})}, \mbf{x}_{t-1},\pmb{\psi})}, \mbf{x}_t,\pmb{\psi}).\\
    \end{split}
\end{equation}
Equation~\ref{eq:dynamics_unrolled} shows that the network parameters $\pmb{\psi}$ affect the construction of the recurrent state $\mbf{h}_t$ through two pathways: a direct pathway, i.e., using $\pmb{\psi}$ to evaluate $\mbf{f}({\mbf{h}_{t-1}}, \mbf{x}_t,\pmb{\psi})$, and an implicit pathway, i.e., $\pmb{\psi}$ affected constructing all previous recurrent states, ${\mbf{h}_{t-1}},\ldots,{\mbf{h}_{1}}$, and all those recurrent states affected $\mbf{h}_t$ construction. Thus, to calculate $\frac{\partial \mbf{h}_t}{\partial \pmb{\psi}}$,
we need to consider those two pathways:
\begin{equation}
    \label{eq:bptt}
    \frac{\partial \mbf{h}_t}{\partial \pmb{\psi}} = \frac{\partial \mbf{f}(\mbf{h}_{t-1}, \mbf{x}_t,\pmb{\psi}) }{\partial \pmb{\psi}} + \frac{\partial \mbf{f}(\mbf{h}_{t-1}, \mbf{x}_t,\pmb{\psi}) }{\partial \mbf{h}_{t-1}} \frac{\partial \mbf{h}_{t-1}}{\partial \pmb{\psi}}.
\end{equation}
Once again, we need to consider the two pathways when evaluating $\frac{\partial \mbf{h}_{t-1}}{\partial \pmb{\psi}}$ in~\ref{eq:bptt}. For simplicity, let $\mbf{J}_t \doteq \frac{\partial \mbf{h}_t}{\partial \pmb{\psi}} $, 
$\mbf{B}_t = \frac{\partial \mbf{f}(\mbf{h}_{t-1}, \mbf{x}_t,\pmb{\psi}) }{\partial \pmb{\psi}}$,
$\mbf{C}_t = \frac{\partial \mbf{f}(\mbf{h}_{t-1}, \mbf{x}_t,\pmb{\psi}) }{\partial \mbf{h}_{t-1}}$, and re-write~\ref{eq:bptt}: 
\begin{equation}
    \begin{split}
    \label{eq:re-bptt}
        \mbf{J}_t  &= \mbf{B}_t + \mbf{C}_t \mbf{J}_{t-1}\\
            & = \mbf{B}_t + \mbf{C}_t \left (\mbf{B}_{t-1} + \mbf{C}_{t-1} \mbf{J}_{t-2}\right ) \qquad \text{Unrolling $\mbf{J}_{t-1}$}\\
            & = \mbf{B}_t + \mbf{C}_t \mbf{B}_{t-1} + \mbf{C}_t \mbf{C}_{t-1} \mbf{J}_{t-2}\\
            & = \mbf{B}_t + \mbf{C}_t \mbf{B}_{t-1} + \mbf{C}_t \mbf{C}_{t-1} \left (\mbf{B}_{t-2} + \mbf{C}_{t-2} \mbf{J}_{t-3}\right ) \qquad \text{Unrolling $\mbf{J}_{t-2}$}\\
            & = \mbf{B}_t + \mbf{C}_t \mbf{B}_{t-1} + \mbf{C}_t \mbf{C}_{t-1} \mbf{B}_{t-2} + \mbf{C}_t \mbf{C}_{t-1} \mbf{C}_{t-2} \mbf{J}_{t-3} \\    
            & = \mbf{B}_t + \mbf{C}_t \mbf{B}_{t-1} + \mbf{C}_t \mbf{C}_{t-1} \mbf{B}_{t-2} + \cdots +\mbf{C}_t \mbf{C}_{t-1} \mbf{C}_{t-2} \ldots \mbf{C}_{2} \mbf{B}_1 + \mbf{C}_t \mbf{C}_{t-1} \mbf{C}_{t-2} \ldots \mbf{C}_{1} \mbf{J}_0  \qquad \text{Keep unrolling}\\
            & = \sum_{k=1}^{t} \left (\prod_{i=k+1}^{t} \mbf{C}_i \right ) \mbf{B}_k + \left (\prod_{i=1}^{t} \mbf{C}_i \right ) \mbf{J}_0.\\
    \end{split}
\end{equation}
%where $\mbf{J}_0 = 0$, assuming we initialize $\mbf{h}_0 = \mbf{0}$.
Writing $\frac{\partial \mbf{h}_t}{\partial \pmb{\psi}}$ using the results from~\ref{eq:re-bptt}: 
\begin{equation}
    \begin{split}
    \label{eq:re-re-bptt}
    \frac{\partial \mbf{h}_t}{\partial \pmb{\psi}} & = \frac{\partial \mbf{f}(\mbf{h}_{t-1}, \mbf{x}_t,\pmb{\psi}) }{\partial \pmb{\psi}} + \frac{\partial \mbf{f}(\mbf{h}_{t-1}, \mbf{x}_t,\pmb{\psi}) }{\partial \mbf{h}_{t-1}} \frac{\partial \mbf{h}_{t-1}}{\partial \pmb{\psi}} \\
    & = \sum_{k=1}^{t} \left (\prod_{i=k+1}^{t} \frac{\partial \mbf{f}(\mbf{h}_{i-1}, \mbf{x}_i,\pmb{\psi}) }{\partial \mbf{h}_{i-1}}  \right ) \frac{\partial \mbf{f}(\mbf{h}_{k-1}, \mbf{x}_k,\pmb{\psi}) }{\partial \pmb{\psi}} + \left (\prod_{i=1}^{t} \frac{\partial \mbf{f}(\mbf{h}_{i-1}, \mbf{x}_i,\pmb{\psi}) }{\partial \mbf{h}_{i-1}} \right ) \frac{\partial \mbf{h}_{0}}{\partial \pmb{\psi}}.\\
    \end{split}
\end{equation}
According to Eq.~\ref{eq:re-re-bptt}, the agent needs to store all the previous inputs to calculate 
$\frac{\partial \mbf{h}_t}{\partial \pmb{\psi}}$ which is impractical; the computation and memory complexity will be increasing with $t$. 
\subsubsection{Truncated-BackPropagation Through Time}\label{sec:tbptt}
\citeauthor{williams1990tbptt} (\citeyear{williams1990tbptt}) introduced Truncated-BackPropagation Through Time (T-BPTT) which solves the issue of increasing memory and computational complexities of BPTT\@. 
In T-BPTT, we specify a truncation length $T$, which controls the number of steps considered when calculating the gradient in~\ref{eq:re-re-bptt}.
We now write the truncated version of~\ref{eq:re-bptt} which takes into consideration the gradient from the last $T$ steps only:
\begin{equation}
    \begin{split}
        \label{eq:t-bptt}
        %\mbf{J}_t & = \mbf{B}_t + {\sum_{k=t-T}^{t-1}} \left (\prod_{i=k+1}^{t} \mbf{C}_i \right ) \mbf{B}_k.\\
        \mbf{J}_t & = \sum_{k=t-T}^{t} \left (\prod_{i=k+1}^{t} \mbf{C}_i \right ) \mbf{B}_k
    \end{split}
\end{equation}
Using results from~\ref{eq:t-bptt}, we then write the approximated gradient of the loss w.r.t the learnable parameters:
\begin{equation}
    \begin{split}
    \label{eq:gradT_loss_unrolled_full}
    \nabla_{\pmb{\psi}} \mcal{L} & = {\sum_{j=t-T}^{t}} \frac{\partial \mcal{L}_j}{\partial \mbf{h}_j} \frac{\partial \mbf{h}_j}{\partial \pmb{\psi}}\\
    & = {\sum_{j=t-T}^{t}} \frac{\partial \mcal{L}_j} {\partial \mbf{h}_j} \sum_{k=j-T}^{j} \left (\prod_{i=k+1}^{t} \frac{\partial \mbf{f}(\mbf{h}_{i-1}, \mbf{x}_i,\pmb{\psi}) }{\partial \mbf{h}_{i-1}}  \right ) \frac{\partial \mbf{f}(\mbf{h}_{k-1}, \mbf{x}_k,\pmb{\psi}) }{\partial \pmb{\psi}} + \left (\prod_{i=1}^{t} \frac{\partial \mbf{f}(\mbf{h}_{i-1}, \mbf{x}_i,\pmb{\psi}) }{\partial \mbf{h}_{i-1}} \right ) \frac{\partial \mbf{h}_{0}}{\partial \pmb{\psi}}\\
    %&\left( \frac{\partial \mbf{f}(\mbf{h}_{i-1}, \mbf{x}_i,\pmb{\psi}) }{\partial \pmb{\psi}} + {\sum_{k=i-T}^{i-1}} \left (\prod_{j=k+1}^{i} \frac{\partial \mbf{f}(\mbf{h}_{j-1}, \mbf{x}_j,\pmb{\psi}) }{\partial \mbf{h}_{j-1}} \right ) \frac{\partial f(\mbf{h}_{k-1}, \mbf{x}_k,\pmb{\psi}) }{\partial \pmb{\psi}}\right).\\
    \end{split}
\end{equation}

\subsection{Real-Time Recurrent Learning}
\citeauthor{williams1989rtrl} (\citeyear{williams1989rtrl}) introduced the Real-time Recurrent Learning algorithm (RTRL) as a learning algorithm for continual recurrent learning. %The idea behind RTRL is simple, instead of saving all previous inputs , we can save only the gradient information we need from previous time steps. 
RTRL employs the recurrent formulation of the gradient in~\ref{eq:bptt}; instead of unrolling 
$\frac{\partial \mbf{h}_{t-1}}{\partial \pmb{\psi}}$ further back in time, RTRL saves its calculated value from the previous time step and use it later when needed.
It is worth emphasizing that after the agent updates its parameters, the gradient information saved from previous time steps would be stale, i.e., calculated w.r.t old parameters, however, under the assumption of small learning rates, RTRL is known to converge.
The gradient formulation of RTRL can be written as: 
\begin{equation}
    \begin{split}
    \label{eq:rtrl}
    \nabla_{\pmb{\psi}} \mcal{L} & = \sum_{i=0}^{{t}} \frac{\partial \mcal{L}_i}{\partial \mbf{h}_i} \frac{\partial \mbf{h}_i}{\partial \pmb{\psi}}\\
    & = \sum_{i=0}^{{t}} \frac{\partial \mcal{L}_i} {\partial \mbf{h}_i} \\
    & \left( \frac{\partial \mbf{f}(\mbf{h}_{i-1}, \mbf{x}_i,\pmb{\psi}) }{\partial \pmb{\psi}} + \frac{\partial \mbf{f}(\mbf{h}_{i-1}, \mbf{x}_i,\pmb{\psi}) }{\partial \mbf{h}_{i-1}} \frac{\partial \mbf{h}_{i-1}}{\partial \pmb{\psi}}  \right)\\
    \end{split}
\end{equation}

\section{More Details on Representability with Complex-valued Diagonal Recurrence}\label{app_complex}
%\subsection{The Utility of Complex-valued Diagonals}

This section explains why we need complex-valued diagonals to represent dense recurrent layers. We first show when it is equivalent to use complex-valued diagonal and a dense recurrent layer. We highlight that using a real-valued diagonal is like restricting the weights to be symmetric---because the (complex) diagonal corresponds to the eigenvalues of the weight matrix---which can severely limit representability. We provide a small experiment to show that complex eigenvalues naturally arise when training both a dense linear and nonlinear RNN, further motivating the utility of moving towards complex-valued diagonals. 

\subsection{Representability with Complex-valued Diagonals}\label{app_theory1}
Let us first consider when we can perfectly represent a dense, linear recurrent layer with a complex-valued diagonal recurrent layer.  
Assume we have the recurrence relationship, with learnable parameters $\mbf{W}_{h} \in \mathbb{R}^{n \times n}$ and $\mbf{W}_{x} \in \mathbb{R}^{n \times d}$,
\begin{equation}
     \mbf{h}_{t} \doteq \ f(\mbf{W}_{h}  \mbf{h}_{t-1} + \mbf{W}_{x} \mbf{u}(\mbf{x}_{t}))
  \label{eq: linear_rnn}
\end{equation}
where $f$ is a potentially nonlinear function that inputs a vector and outputs the same-sized vector and $\mbf{u}$ can be any transformation of the inputs $\mbf{x}_t$ before they are inputted into the recurrent layer. 
The following equivalence result is straightforward but worthwhile formalizing.

\begin{proposition}
Assume $f \circ \mathbf{P}  = \mathbf{P} \circ f$ for any full rank, potentially complex-valued $\mathbf{P} \in \mathbb{C}^{n \times n}$ with unit-length column vectors. Then given any $\mbf{W}_{h}$ and $\mbf{W}_{x}$ for Equation \eqref{eq: linear_rnn}, there is a corresponding 
complex-valued diagonal weight matrix $\mathbf{\Lambda}\in \mathbb{C}^{n \times n}$ and $\overline{\mathbf{W}}_{x} \in \mathbb{C}^{n \times d}$
\begin{equation}
  \overline{\mathbf{h}}_{t}  = f(\mathbf{\Lambda} \overline{\mathbf{h}}_{t-1} + \overline{\mathbf{W}}_{x} \ \mbf{u}(\mathbf{x}_{t})). 
  \label{diagonal_rnn}
\end{equation}
where $\overline{\mathbf{h}}_{t} \in \mathbb{C}^{n}$ is a linear transformation of $\mbf{h}_{t} \in \mathbb{R}^n$.
\end{proposition}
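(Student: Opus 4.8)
The plan is to mirror the linear change-of-basis derivation of Section~\ref{sec:complex}, but now carry the nonlinearity $f$ through the recurrence by invoking the commutation hypothesis. First I would diagonalize $\mathbf{W}_h = \mathbf{P}\mathbf{\Lambda}\mathbf{P}^{-1}$, where $\mathbf{\Lambda}\in\mathbb{C}^{n\times n}$ is the diagonal eigenvalue matrix and the columns of $\mathbf{P}$ are the corresponding eigenvectors. Since eigenvectors are determined only up to scaling, I can normalize each column of $\mathbf{P}$ to unit length, so that $\mathbf{P}$ meets the hypothesis of the proposition. As in the linear case, this tacitly requires $\mathbf{W}_h$ to be diagonalizable (to have $n$ linearly independent eigenvectors); I would flag this as the standing assumption inherited from Section~\ref{sec:complex}.

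Next I would define the candidate linear transformation $\overline{\mathbf{h}}_t \defeq \mathbf{P}^{-1}\mathbf{h}_t$ and $\overline{\mathbf{W}}_x \defeq \mathbf{P}^{-1}\mathbf{W}_x$, exactly as before, and substitute the recurrence $\mathbf{h}_t = f(\mathbf{W}_h\mathbf{h}_{t-1} + \mathbf{W}_x\mathbf{u}(\mathbf{x}_t))$ to get $\overline{\mathbf{h}}_t = \mathbf{P}^{-1} f(\mathbf{W}_h\mathbf{h}_{t-1} + \mathbf{W}_x\mathbf{u}(\mathbf{x}_t))$. The crux is to move $\mathbf{P}^{-1}$ past $f$. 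The hypothesis gives $f\circ\mathbf{P} = \mathbf{P}\circ f$; composing on the left and right by $\mathbf{P}^{-1}$ yields $\mathbf{P}^{-1}\circ f = f\circ\mathbf{P}^{-1}$, so the commutation transfers automatically to the inverse even though $\mathbf{P}^{-1}$ need not have unit-length columns. Applying this gives $\overline{\mathbf{h}}_t = f(\mathbf{P}^{-1}\mathbf{W}_h\mathbf{h}_{t-1} + \mathbf{P}^{-1}\mathbf{W}_x\mathbf{u}(\mathbf{x}_t))$, and using $\mathbf{P}^{-1}\mathbf{W}_h = \mathbf{\Lambda}\mathbf{P}^{-1}$ together with the definitions of $\overline{\mathbf{h}}_{t-1}$ and $\overline{\mathbf{W}}_x$ collapses this to $\overline{\mathbf{h}}_t = f(\mathbf{\Lambda}\overline{\mathbf{h}}_{t-1} + \overline{\mathbf{W}}_x\mathbf{u}(\mathbf{x}_t))$, which is exactly Equation~\eqref{diagonal_rnn}.

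The main obstacle is not the algebra---which is a short extension of the linear argument---but justifying the commutation step cleanly. The subtle point is that the hypothesis is stated for $\mathbf{P}$ with unit-length columns, whereas the derivation needs it for $\mathbf{P}^{-1}$; I would therefore make explicit the one-line identity above showing the property is self-dual under inversion, so that only the normalized eigenvector matrix $\mathbf{P}$ and its inverse are ever used. I would also emphasize that this is where the result is genuinely fragile: requiring $f\circ\mathbf{P} = \mathbf{P}\circ f$ for such a rich family of linear maps is extremely restrictive, forcing $f$ to be equivariant under a large class of changes of basis, which is precisely why the companion analysis obtains only a negative result for standard activations such as ReLU.
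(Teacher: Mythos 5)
Your proof is correct and follows essentially the same route as the paper's: eigendecompose $\mathbf{W}_h = \mathbf{P}\mathbf{\Lambda}\mathbf{P}^{-1}$, define $\overline{\mathbf{h}}_t = \mathbf{P}^{-1}\mathbf{h}_t$ and $\overline{\mathbf{W}}_x = \mathbf{P}^{-1}\mathbf{W}_x$, and use the commutation hypothesis to move the change of basis past $f$ (the paper pulls $\mathbf{P}$ out of $f$ and then left-multiplies by $\mathbf{P}^{-1}$, whereas you push $\mathbf{P}^{-1}$ in, but these are the same step). Your explicit observations that the columns of $\mathbf{P}$ can be normalized to meet the unit-length hypothesis and that $f\circ\mathbf{P}=\mathbf{P}\circ f$ implies $f\circ\mathbf{P}^{-1}=\mathbf{P}^{-1}\circ f$ are minor points the paper glosses over, and they tighten rather than change the argument.
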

\begin{proof} 
We can rewrite the square matrix $\mathbf{W}_{h}$ using an eigenvalue decomposition
$\mathbf{W}_{h} \ =\ \mathbf{P} \ \mathbf{\Lambda } \ \mathbf{P}^{-1}$, where $\mathbf{P} $ contains the $n$ linearly independent eigenvectors and $\mbf{\Lambda}$ is a diagonal matrix with the corresponding eigenvalues.
Then, we can re-write~\eqref{eq: linear_rnn} as:
\begin{equation}
    \begin{split}
    \mathbf{h}_{t}  & = f(\mathbf{P} \ \mathbf{\Lambda } \ \mathbf{P}^{-1} \ \mbf{h}_{t-1} \ +\ \mathbf{P} \mathbf{P}^{-1} \mbf{W}_{x} \ \mbf{u}(\mathbf{x}_{t}))\\
    & = \mathbf{P} f( \mathbf{\Lambda } \ \mathbf{P}^{-1} \ \mbf{h}_{t-1} \ +\ \mathbf{P}^{-1} \mbf{W}_{x} \ \mbf{u}(\mathbf{x}_{t}))\\
    \mathbf{P}^{-1}  \mathbf{h}_{t}  & = f(\mathbf{\Lambda P}^{-1} \ \mbf{h}_{t-1} \ +\ \mbf{P}^{-1} \ \mbf{W}_{x} \ \mbf{u}(\mathbf{x}_{t}))
    \label{eq:subs_linear_rnn}
    \end{split}
\end{equation}
where $\mathbf{P}$ came outside of $f$ under our assumption that it commutes with such matrices of eigenvectors.
By defining $ \overline{\mathbf{h}}_{t} \doteq \mathbf{P}^{-1} \ \mathbf{h}_{t}$ and $ \overline{\mathbf{W}}_{x} \ \doteq\ \mathbf{P}^{-1} \mathbf{W}_{x}$, we get Eq.~\eqref{diagonal_rnn}.
%\begin{equation}
 % \overline{\mathbf{h}}_{t}  = f(\mathbf{ \Lambda} \overline{\mathbf{h}}_{t-1} + \overline{\mathbf{W}}_{x} \ \mathbf{x}_{t}). 
 % \label{diagonal_rnn}
%\end{equation}
\end{proof}
We can see $\overline{\mathbf{h}}_{t}$ and $\mbf{h}_{t}$ are representationally equivalent: they are linearly weighted for downstream predictions, and so the linear transformation on $\overline{\mathbf{h}}_{t}$ can fold into this downstream linear weighting. But it is more computationally efficient to use $\overline{\mathbf{h}}_{t}$ with a diagonal weight matrix $\mathbf{\Lambda}$, meaning
each hidden unit only has one recurrent relation instead of n.

Since we did not impose constraints on the matrix $\mbf{W}_h$, other than being diagonalizable, the eigenvalues of $\mbf{W}_h$ can be complex or real numbers. Previous diagonal RNNs such as eLSTM~\citep{irie2023exploring}, Columnar networks ~\citep{javed2023online}, and IndRNN~\citep{li2018independently} use only real-valued diagonal matrices. Having only real-valued diagonals implicitly assumes that the matrix $\mbf{W}_h$ is a symmetric matrix. \cite{orvieto2023resurrecting} suggested using complex-valued diagonal matrices for better performance. 

The above equivalence has only been used without any activation, namely in linear-recurrent units (LRUs) \cite{orvieto2023resurrecting}. A natural question is if only the identity $f(\mathbf{x}) = \mathbf{x}$ (and linear functions) satisfy this property of commuting with eigenvector matrices.
%\footnote{Of course, $f$ can actually be any affine function, $f(\mathbf{x}) = \mathbf{A}\mathbf{x}$, but this does not change the form because $\mathbf{A}$ can be absorbed into the weight matrices.} 
Intuitively, this seems like the only option, as imagining a nonlinearity that commutes is hard. Surprisingly, for the more restricted case of symmetric $\mbf{W}_h$, we can show a slightly more general class of activations can be used, proving an if-and-only-if relationship (see Appendix \ref{app_theory}). However, even for this restricted setting, this generalized class is limited and such activations unlikely to be preferable to a linear recurrence. We see this as a negative result, that suggests this equivalence only holds for the linear setting.

\subsection{Complex Eigenvalues in Vanilla RNNs}\label{app:complex_exp}
We empirically investigate whether complex eigenvalues appear when training dense RNNs in a simple task. The goal is to show that the weight matrix, $\mbf{W}_h$, is not a symmetric matrix, even in the simplest tasks. Hence, having only real-valued diagonals is too restrictive. 

We used the Three State POMDP~\citep{sutton2020state}, depicted in Figure~\ref{fig:3states_task}, for this experiment. 
In this task, the agent needs to remember one cue from the previous time-step ago, to make a prediction about the next time-step. The MDP has three states, $s_1$, $s_2$, and $s_3$, and no actions. If the agent is in either $s_1$ or $s_2$, it transitions to any of the three states with equal probability. However, if the agent is in $s_3$, it transitions to the state preceded by $s_3$. A sequence of observations would look like $1,3,1,2,2,3,2,\cdots$, and we ask the agent to predict the next observation.
\begin{figure}[ht]
 % \vskip 0.2in
  \begin{center}
  \centerline{\includegraphics[width=0.5\columnwidth]{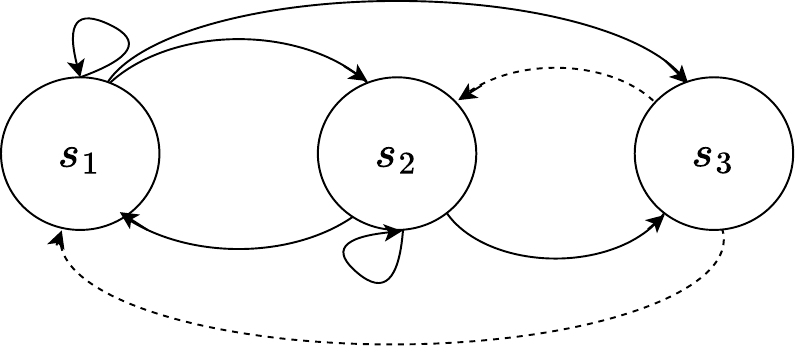}}
  \caption{Illustration of the Three State MDP\@. We used dashed lines for the transitions starting in $s_3$ to make them more visible.}\label{fig:3states_task}
\end{center}
%\vskip -0.2in
\end{figure}

We trained a vanilla RNN with $3$ hidden states with T-BPTT with truncation length $2$, which is a sufficient history length in this problem to predict the next observation. Since we have $3$ hidden states, the matrix $\mbf{W}_h$ is $\in \mathbb{R}^{3\times3}$ and could have at most $2$ complex eigenvalues. 

We measured the performance in terms of the percentage of correct predictions made in $S_3$. We recorded the number of complex eigenvalues of $\mbf{W}_h$ after each parameter update, shown in Figure~\ref{fig:3states_results}. This agent reaches $100\%$ accuracy in this problem relatively quickly. We can also see that the agent oscillates between having two complex eigenvalues and zero eigenvalues. The average number of complex eigenvalues across $30$ run is above 1.5, which means that on more than $\tfrac{3}{4}$ of the steps, the RNN has two complex eigenvalues. The primary point is that we see complex eigenvalues appear frequently.

\begin{figure}[ht]
\vspace{-0.2cm}
  \begin{center}
  \centerline{\includegraphics[width=\columnwidth]{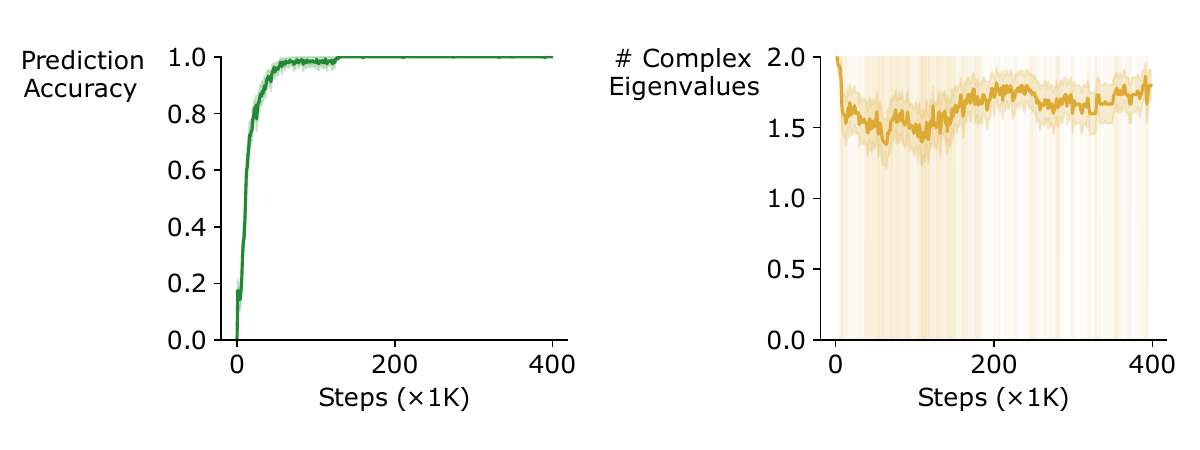}}
  \vspace{-0.5cm}
  \caption{\textbf{Left}: The percentage of correct predictions when training an RNN in the Three State MDP. 
  \textbf{Right}: Number of complex eigenvalues when training an RNN in the Three States MDP. The solid line is the mean over $30$ runs, the shaded region area is the standard error, and the lines are individual runs.}\label{fig:3states_results}
  \end{center}
\vspace{-0.5cm}
\end{figure}

\section{More on the Equivalence of Non-Linear RTUs and Dense RNNs}\label{app_theory}

As discussed in the main body, we likely only have an equivalence between using a full dense weight matrix and a complex-valued diagonal matrix for linear recurrent layers. However, we can obtain a slightly more general equivalence in the restricted setting where the weight matrix for the recurrence is symmetric. This restricted setting is not of general interest, but we include the result here because it could be of interest to a few.

For the case where we have a symmetric weight matrix, we need the activation to commute with orthonormal matrices. Consider again the form
\begin{equation*}
     \mbf{h}_{t} \doteq \ f(\mbf{W}_{h}  \mbf{h}_{t-1} + \mbf{W}_{x} \mbf{x}_{t})
   \tag{\ref{eq: linear_rnn}}
\end{equation*}
where $f$ is a potentially nonlinear function that inputs a vector and outputs the same-sized vector. To obtain the equivalence, assume that for any orthonormal matrix $\mathbf{A}$, $f \circ \mathbf{A} = \mathbf{A} \circ f$. 
We can rewrite the square and symmetric matrix $\mathbf{W}_{h}$ using an eigenvalue decomposition
$\mathbf{W}_{h} \ =\ \mathbf{A} \ \mathbf{\Lambda } \ \mathbf{A}^\top$, where $\mathbf{A} $ contains the $n$ linearly independent eigenvectors and is an orthonormal matrix and $\Lambda $ is a diagonal matrix with the corresponding eigenvalues.
Then, we can re-write~\eqref{eq: linear_rnn} as:
\begin{equation}
    \begin{split}
    \mathbf{h}_{t}  & = f(\mathbf{A} \ \mathbf{\Lambda } \ \mathbf{A}^\top \ \mbf{h}_{t-1} \ +\ \mathbf{A} \mathbf{A}^\top \mbf{W}_{x} \ \mathbf{x}_{t})\\
    & = \mathbf{A} f( \mathbf{\Lambda } \ \mathbf{A}^\top \ \mbf{h}_{t-1} \ +\ \mathbf{A}^\top \mbf{W}_{x} \ \mathbf{x}_{t})\\
    \mathbf{A}^\top  \mathbf{h}_{t}  & =\mathbf{\Lambda A}^\top \ \mbf{h}_{t-1} \ +\ \mbf{A}^\top \ \mbf{W}_{x} \ \mathbf{x}_{t}\\
    \label{eq:subs_linear_rnn}
    \end{split}
\end{equation}
where $\mathbf{A}$ came outside of $f$ because commutes with orthonormal matrices.
By defining $ \overline{\mathbf{h}}_{t} \doteq \mathbf{A}^\top \ \mathbf{h}_{t}$ and $ \overline{\mathbf{W}}_{x} \ \doteq\ \mathbf{A}^\top \mathbf{W}_{x}$, we get:
\begin{equation*}
  \overline{\mathbf{h}}_{t}  = f(\mathbf{ \Lambda} \overline{\mathbf{h}}_{t-1} + \overline{\mathbf{W}}_{x} \ \mathbf{x}_{t}). 
   \tag{\ref{diagonal_rnn}}
\end{equation*}
Each hidden unit now has one recurrent relation instead of n, because our weight matrix $\Lambda$ is diagonal.

A natural question is if only the identity $f(\mathbf{x}) = \mathbf{x}$---namely linear recurrence---satisfies this property of commuting with orthonormal matrices. We show below that it holds for a slightly more general class of recurrent layers, proving an if-and-only-if relationship. 
%However, beyond linear activations, it is not yet clear how to use this generalized class. We have tested these generalized activations, and found them to perform more poorly. In fact, moving to nonlinear activations, like ReLUs, which fall outside this theory, perform better with complex-valued diagonal approximations. 
We see the below result as a negative result, highlighting that this equivalence largely only holds for the linear setting and does not generalize to other activations of interest. It provides even further evidence that likely the only setting of interest for the general non-symmetric case is also with a linear recurrence. 

Nonetheless, let us obtain the if-and-only-if for completeness.
A simple extension that continues to satisfies this property is $f(\mathbf{x}) = \mathbf{x} c(|| \mathbf{x} ||_2)$ for any $c: \mathbb{R} \rightarrow \mathbb{R}$. We can see that for any orthonormal $\mathbf{A}$, we have 
\begin{equation*}
f(\mathbf{A}\mathbf{x}) = \mathbf{A}\mathbf{x} c(|| \mathbf{A}\mathbf{x} ||_2) = \mathbf{A}\mathbf{x} c(||\mathbf{x} ||_2) = \mathbf{A} f(\mathbf{x}).
\end{equation*}
This means that we can have activations that rescale the input $\mathbf{x}$ depending on the norm of that input. 

More generally, the activation can involve matrices and rotations. We can define $\boldsymbol{\theta}(\mathbf{x}) = [\mathbf{x} \ \mathbf{U}(\mathbf{x})]^\top$ where $\mathbf{U}(\mathbf{x}) \in \mathbb{R}^{n \times n-1}$ is a matrix where the columns are orthogonal vectors to each other and to $\mathbf{x}$. Then for any vector-valued $\mathbf{g}: \mathbb{R} \rightarrow \mathbb{R}^n$, we have that $f(\mathbf{x}) = \boldsymbol{\theta}(\mathbf{x})^\top g(||\mathbf{x}||_2)$ also satisfies this property: 
\begin{align*}
f(\mathbf{A}\mathbf{x}) &= \theta(\mathbf{A} \mathbf{x})^\top g(|| \mathbf{A}\mathbf{x} ||_2) = [\mathbf{A} \mathbf{x} ; U(\mathbf{A} \mathbf{x})] g(||\mathbf{x} ||_2) \\
&= \mathbf{A} [\mathbf{x} ; \mathbf{U}(\mathbf{x})] g(||\mathbf{x} ||_2) = \mathbf{A} f(\mathbf{x}).
\end{align*}
The last line follows because $\mathbf{A} \mathbf{U}(\mathbf{x}) = \mathbf{U}(\mathbf{A}\mathbf{x})$. Namely, for any orthogonal vector $\mathbf{u}$ with $\mathbf{u}^\top \mathbf{x} = 0$, we have that $\tilde{\mathbf{u}} \doteq \mathbf{A} \mathbf{u}$ satisfies $\tilde{\mathbf{u}}^\top \mathbf{A} \mathbf{x} = \mathbf{u}^\top \mathbf{A}^\top \mathbf{A} \mathbf{x} = \mathbf{u}^\top \mathbf{x} = 0$ because $\mathbf{A}^\top \mathbf{A} = \mathbf{I}$. 

%It is hard to imagine such an activation being useful. But more useful is that we can provide an if-and-only-if relationship to this class of $f$, to fully characterize which types of activations allow us to obtain equivalence when using a diagonal matrix instead of a symmetric, dense weighting.

Now we show this formally. 
Denote $O^{n} \subset \mbb{R}^{n \times n}: A \in O^{n} \iff A^{T}A = AA^{T} = I$.
Denote $e_i = \begin{bmatrix}
  0\\
  \vdots\\
  1\\
  \vdots\\
  0\\
\end{bmatrix}$, where the $i^{th}$ element is $1$.
\begin{definition}\label{def:theta}
  $\pmb{\theta}(\mbf{x})$ is a matrix such that:
  \begin{equation}
    \begin{split}
      & \pmb{\theta}(\mbf{x}) \in O^{n}\\
      & \pmb{\theta}(\mbf{x}) \mbf{x} = {\lVert \mbf{x} \rVert_2}^2 e_1\\
    \end{split}
  \end{equation}
\end{definition}
\begin{lemma}
  $\pmb{\theta}(\mbf{x})$ exits and $\forall \mbf{A} \in O^{n},  \pmb{\theta}(\mbf{A}\mbf{x}) = \pmb{\theta}(\mbf{x})\mbf{A}^{T}$ 
\end{lemma}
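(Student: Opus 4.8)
The plan is to establish the two assertions in turn: existence of a matrix $\pmb{\theta}(\mbf{x})$ satisfying Definition~\ref{def:theta}, and then the equivariance identity $\pmb{\theta}(\mbf{A}\mbf{x}) = \pmb{\theta}(\mbf{x})\mbf{A}^{T}$ for every $\mbf{A} \in O^{n}$.

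For existence, I would construct $\pmb{\theta}(\mbf{x})$ explicitly for nonzero $\mbf{x}$ (the zero vector being degenerate). Normalizing and completing $\mbf{x}/\lVert\mbf{x}\rVert_2$ to an orthonormal basis of $\mathbb{R}^n$ by Gram--Schmidt gives vectors that I place as the rows of $\pmb{\theta}(\mbf{x})$, with the normalized $\mbf{x}$ in the first row; equivalently, one can use the Householder reflection that sends $\mbf{x}$ onto the first coordinate axis. By orthonormality of the rows this matrix lies in $O^{n}$ and maps $\mbf{x}$ to a multiple of $e_1$, matching the normalization in Definition~\ref{def:theta}. This step is routine.

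For the equivariance, the natural strategy is to show that $\pmb{\theta}(\mbf{x})\mbf{A}^{T}$ is itself an admissible value for $\pmb{\theta}(\mbf{A}\mbf{x})$ and then appeal to uniqueness. Indeed $\pmb{\theta}(\mbf{x})\mbf{A}^{T}$ is a product of orthogonal matrices, hence in $O^{n}$, and $\bigl(\pmb{\theta}(\mbf{x})\mbf{A}^{T}\bigr)(\mbf{A}\mbf{x}) = \pmb{\theta}(\mbf{x})\mbf{x}$, which is the required multiple of $e_1$ because $\mbf{A}^{T}\mbf{A} = \mathbf{I}$ and $\mbf{A}$ preserves norms, so $\lVert\mbf{A}\mbf{x}\rVert_2 = \lVert\mbf{x}\rVert_2$. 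Thus both defining conditions of Definition~\ref{def:theta} hold for $\pmb{\theta}(\mbf{x})\mbf{A}^{T}$ at the point $\mbf{A}\mbf{x}$, and the identity follows provided $\pmb{\theta}$ is determined uniquely by those conditions.

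The main obstacle is precisely this uniqueness, and I expect it to absorb essentially all of the work. Definition~\ref{def:theta} fixes only the first row (the direction of $\mbf{x}$) and leaves the remaining rows free up to an orthogonal transformation of $e_1^{\perp}$, i.e. an $O(n-1)$ gauge freedom; equivalently, in the notation of the surrounding discussion, the complementary frame $\mathbf{U}(\mbf{x})$ spanning $\mbf{x}^{\perp}$ is not pinned down. The equivariance identity is then equivalent to selecting these frames consistently across each orbit, $\mathbf{U}(\mbf{A}\mbf{x}) = \mbf{A}\,\mathbf{U}(\mbf{x})$. Testing the relation on $\mbf{A}$ in the stabilizer of $\mbf{x}$ gives the sharp consistency check the construction must pass, and making such an equivariant choice of orthonormal frame is the delicate heart of the argument; the proof must either supply this selection directly or restrict to the regime in which it is available.
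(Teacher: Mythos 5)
Your existence argument---complete $\mbf{x}/\lVert\mbf{x}\rVert_2$ to an orthonormal basis and use its vectors as the rows of $\pmb{\theta}(\mbf{x})$---is the same construction the paper gives (it puts $\mbf{x}$ itself in the first row and fills the rest with vectors orthogonal to it and to each other). Note in passing that Definition~\ref{def:theta} as literally written is unsatisfiable for non-unit $\mbf{x}$: an orthogonal matrix preserves the $2$-norm, so $\pmb{\theta}(\mbf{x})\mbf{x}$ can equal $\lVert\mbf{x}\rVert_2^2 e_1$ only when $\lVert\mbf{x}\rVert_2\in\{0,1\}$; your normalization $\lVert\mbf{x}\rVert_2 e_1$ is the consistent one, and the paper itself switches between the two. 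For the equivariance, your strategy---check that $\pmb{\theta}(\mbf{x})\mbf{A}^{T}$ satisfies both defining conditions at the point $\mbf{A}\mbf{x}$, then appeal to uniqueness---is a cleaner packaging of what the paper does algebraically: it derives $\pmb{\theta}(\mbf{A}\mbf{x})\mbf{A}\mbf{x}=\pmb{\theta}(\mbf{x})\mbf{x}$, rearranges to $M\mbf{x}=\mbf{x}$ with $M\doteq\mbf{A}^{T}\pmb{\theta}(\mbf{A}\mbf{x})^{T}\pmb{\theta}(\mbf{x})$, and then asserts $M=I$.

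The obstacle you single out is real, and it is exactly where the paper's proof breaks: an orthogonal matrix that fixes one vector $\mbf{x}$ need not be the identity---it can be any element of the stabilizer of $\mbf{x}$, a copy of $O^{n-1}$ acting on $\mbf{x}^{\perp}$---so the step $M\mbf{x}=\mbf{x}\Rightarrow M=I$ is unjustified. Your stabilizer test sharpens this into a refutation: for $n\ge 2$ pick $\mbf{A}\ne I$ with $\mbf{A}\mbf{x}=\mbf{x}$; the claimed identity would force $\pmb{\theta}(\mbf{x})=\pmb{\theta}(\mbf{x})\mbf{A}^{T}$, hence $\mbf{A}=I$, a contradiction. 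So the uniqueness you would need does not hold, no single-valued choice of $\pmb{\theta}$ can satisfy the lemma for all of $O^{n}$, and the equivariant frame selection you identify as the delicate heart of the argument is genuinely unavailable. In short, your proposal does not close the gap, but you have correctly located it, whereas the paper's own proof conceals it behind an invalid implication; the most one can salvage is that $\pmb{\theta}(\mbf{A}\mbf{x})$ and $\pmb{\theta}(\mbf{x})\mbf{A}^{T}$ agree up to left multiplication by an element of the stabilizer of $e_1$, which is weaker than what Theorem~\ref{thm:commuting_funcs} subsequently uses.
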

\begin{proof}
  Let $\pmb{\theta}(\mbf{x})$ be $\in O^{n}$. Then, by definition of $O^{n}$, ${\pmb{\theta}(\mbf{x})}^{T}\pmb{\theta}(\mbf{x}) = \pmb{\theta}(\mbf{x}) {\pmb{\theta}(\mbf{x})}^{T} = I$.
  Let $\pmb{\theta}(\mbf{x})[1] = \mathbf{x}$, and $\pmb{\theta}(\mbf{x})[2:]$ be any orthogonal vectors, also orthogonal to $\mbf{x}$.
  Where $\pmb{\theta}(\mbf{x})[i]$ be the $i^{th}$ row of $\pmb{\theta}(\mbf{x})$.
  Then, for $\mbf{x} \in \mbb{R}^n$, $\pmb{\theta}(\mbf{x}) \mbf{x} = \lVert \mbf{x} \rVert_2 e_1$.

  \begin{equation}
    \begin{split}
      \pmb{\theta}(\mbf{Ax})\mbf{Ax} & = {\lVert \mbf{Ax} \rVert_2}^2 e_1 = {\lVert \mbf{x} \rVert_2}^2 e_1 = \pmb{\theta}(\mbf{x}) \mbf{x}\\
      {\pmb{\theta}(\mbf{Ax})}^{T}\pmb{\theta}(\mbf{Ax}) \mbf{Ax} & = {\pmb{\theta}(\mbf{Ax})}^{T}\pmb{\theta}(\mbf{x}) \mbf{x}\\
      \mbf{Ax} & = {\pmb{\theta}(\mbf{Ax})}^{T}\pmb{\theta}(\mbf{x}) \mbf{x}\\
      {\mbf{A}}^{T}\mbf{Ax} & = {\mbf{A}}^{T}{\pmb{\theta}(\mbf{Ax})}^{T}\pmb{\theta}(\mbf{x}) \mbf{x}\\
      \mbf{x} & = {\mbf{A}}^{T}{\pmb{\theta}(\mbf{Ax})}^{T}\pmb{\theta}(\mbf{x}) \mbf{x}\\
      {\mbf{A}}^{T}{\pmb{\theta}(\mbf{Ax})}^{T}\pmb{\theta}(\mbf{x}) = I \\
      {\pmb{\theta}(\mbf{Ax})}^{T}  = {\pmb{\theta}(\mbf{x})} {\mbf{A}}^{T} \\
    \end{split}
  \end{equation}
\end{proof}

\begin{theorem}\label{thm:commuting_funcs}
  Let $f: \mbb{R}^{n} \rightarrow \mbb{R}^{n}$ and $A \in O^{n}$. Then, $f \circ A = A \circ f \iff \exists g: \mbb{R} \rightarrow \mbb{R}^{n}, f(x) = {\theta(x)}^{T}g({\lVert x \rVert}_2)$
\end{theorem}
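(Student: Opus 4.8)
The statement is a biconditional, so the plan is to prove each implication separately; in both directions the engine is the transformation law for $\theta$ coming out of the Lemma, which I will use in the form $\theta(\mbf{A}\mbf{x})^{T} = \mbf{A}\,\theta(\mbf{x})^{T}$ (equivalently $\theta(\mbf{A}\mbf{x}) = \theta(\mbf{x})\mbf{A}^{T}$), together with the facts that $\theta(\mbf{x}) \in O^{n}$ so $\theta(\mbf{x})^{T}\theta(\mbf{x}) = \mbf{I}$, and that $\mbf{A} \in O^{n}$ preserves Euclidean norm, $\|\mbf{A}\mbf{x}\|_2 = \|\mbf{x}\|_2$. I state this orientation of the Lemma explicitly because getting the transpose right is what makes the two computations close.

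For the reverse implication ($\Leftarrow$) I would simply substitute and compute. Assuming $f(\mbf{x}) = \theta(\mbf{x})^{T} g(\|\mbf{x}\|_2)$ and fixing any $\mbf{A} \in O^{n}$, norm-preservation gives $g(\|\mbf{A}\mbf{x}\|_2) = g(\|\mbf{x}\|_2)$, and the Lemma gives $f(\mbf{A}\mbf{x}) = \theta(\mbf{A}\mbf{x})^{T} g(\|\mbf{x}\|_2) = \mbf{A}\,\theta(\mbf{x})^{T} g(\|\mbf{x}\|_2) = \mbf{A} f(\mbf{x})$, which is exactly $f \circ \mbf{A} = \mbf{A} \circ f$. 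This direction is a one-line calculation once the Lemma is available.

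For the forward implication ($\Rightarrow$) the plan is to construct $g$ explicitly and then check it reproduces $f$. Given that $f$ commutes with every $\mbf{A} \in O^{n}$, I would define $g \colon \mathbb{R} \to \mathbb{R}^{n}$ by $g(r) \defeq \theta(\mbf{x}) f(\mbf{x})$ for an arbitrary $\mbf{x}$ with $\|\mbf{x}\|_2 = r$. The real content is showing this is well defined, i.e.\ independent of the representative: if $\|\mbf{x}\|_2 = \|\mbf{x}'\|_2$ with $\mbf{x} \neq \mbf{0}$, then there exists $\mbf{A} \in O^{n}$ with $\mbf{x}' = \mbf{A}\mbf{x}$, and combining commutation $f(\mbf{x}') = \mbf{A} f(\mbf{x})$ with the Lemma in the form $\theta(\mbf{A}\mbf{x})\mbf{A} = \theta(\mbf{x})$ gives $\theta(\mbf{x}')f(\mbf{x}') = \theta(\mbf{A}\mbf{x})\mbf{A} f(\mbf{x}) = \theta(\mbf{x}) f(\mbf{x})$, so both representatives yield the same value. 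Once $g$ is well defined, left-multiplying the identity $g(\|\mbf{x}\|_2) = \theta(\mbf{x}) f(\mbf{x})$ by $\theta(\mbf{x})^{T}$ and using $\theta(\mbf{x})^{T}\theta(\mbf{x}) = \mbf{I}$ recovers $f(\mbf{x}) = \theta(\mbf{x})^{T} g(\|\mbf{x}\|_2)$.

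The main obstacle is exactly this well-definedness step: it is the only place the argument carries genuine content, and it is where the equivariance law for $\theta$ is indispensable, so I would take care to apply the Lemma in its correct transposed orientation. Two boundary points also need handling: the degenerate case $\mbf{x} = \mbf{0}$, where the direction of $\mbf{x}$ and hence $\theta$ is ill-posed and I would set $g(0)$ directly from $f(\mbf{0})$ (observing that $f(\mbf{0}) = \mbf{A} f(\mbf{0})$ for all $\mbf{A} \in O^n$ forces $f(\mbf{0}) = \mbf{0}$ when $n \ge 2$); and the elementary fact that any two vectors of equal Euclidean norm are related by some element of $O^{n}$, which is what guarantees the representative-swapping $\mbf{A}$ exists.
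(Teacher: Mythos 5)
Your proof is correct and follows essentially the same route as the paper's: both directions rest on the orthogonality ${\pmb{\theta}(\mbf{x})}^{T}\pmb{\theta}(\mbf{x}) = I$, the equivariance $\pmb{\theta}(\mbf{Ax}) = \pmb{\theta}(\mbf{x})\mbf{A}^{T}$ from the Lemma, and the commutation hypothesis, and your $\Leftarrow$ computation is line-for-line the paper's. The only cosmetic difference is in the $\Rightarrow$ direction: the paper defines $g$ on the canonical representative, $g(\alpha) = f(\alpha^{2}e_1)$, so well-definedness is automatic and the identity $f(\mbf{x}) = {\pmb{\theta}(\mbf{x})}^{T}g(\lVert \mbf{x}\rVert_2)$ follows by applying commutation with $\mbf{A} = \pmb{\theta}(\mbf{x})$ itself, whereas you define $g(r) = \pmb{\theta}(\mbf{x})f(\mbf{x})$ on an arbitrary representative and verify well-definedness by transporting between representatives --- the two definitions of $g$ coincide (since $\pmb{\theta}(\mbf{x})f(\mbf{x}) = f(\pmb{\theta}(\mbf{x})\mbf{x})$), and your explicit treatment of $\mbf{x}=\mbf{0}$ is a small point of care the paper omits.
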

\begin{proof}
  Define $g(\alpha) = f(\alpha^2 e_1)$.
  \begin{equation}
    \begin{split}
    \mbf{f(x)} & = {\pmb{\theta}(\mbf{x})}^{T}\pmb{\theta}(\mbf{x}) \mbf{f(x)} \\
    & = {\pmb{\theta}(\mbf{x})}^{T} \mbf{f(\pmb{\theta}(\mbf{x})x)} \\
    & = {\pmb{\theta}(\mbf{x})}^{T} \mbf{f({\lVert \mbf{x} \rVert_2}^2 e_1)} \\
    & =  {\pmb{\theta}(\mbf{x})}^{T} \mbf{g}({\lVert \mbf{x} \rVert_2})
  \end{split}
  \end{equation}

  \begin{equation}
    \begin{split}
      (\mbf{f} \circ \mbf{A}) (\mbf{x}) & = \mbf{f} (\mbf{Ax})\\ 
      & = {\pmb{\theta}(\mbf{Ax})}^{T} \mbf{g}({\lVert \mbf{Ax} \rVert_2})\\
      & = {\pmb{\theta}(\mbf{Ax})}^{T} \mbf{g}({\lVert \mbf{x} \rVert_2})\\
      & = \mbf{A} {\pmb{\theta}(\mbf{x})}^T  \mbf{g}({\lVert \mbf{x} \rVert_2})\\
      & = (\mbf{A} \circ \mbf{f}) (\mbf{x})\\
    \end{split}
  \end{equation}

\end{proof}

\section{Issues with Two Alternative Parameterizations}

In this section we provide a few additional insights on alternative ways to handle complex numbers within an RNN, and why they are not preferable. 

\subsection{Stability}\label{app_instability}
We look at the gradient when using each complex representation to understand how different representations affect learning stability. Since each hidden unit has only one recurrent relation in diagonal RNNs, it is sufficient to consider one unit, in isolation. To keep the below intuition simple, we also omit the input of $\mathbf{x}$, and consider  
\begin{equation}
    h_t = \lambda h_{t-1} = \ldots = \lambda^{t} h_{0} 
  \label{eq:single_unit}
\end{equation}
where $h_0$ is the initial hidden state. 

\textbf{Real Representation $a+bi$:}
Substituting $\lambda$ in~\eqref{eq:single_unit} with the real representation, we get:
\begin{equation*}
    h_t = {(a+bi)}^{t} h_{0}  = h_{0} \sum_{k=0}^{t} \binom{t}{k} {a}^{t-k} {b}^{k} {i}^k 
\end{equation*}
Then it follows that the gradient w.r.t the learnable parameters $a$ and $b$ is:
\begin{equation*}
  \begin{split}
    \frac{\partial{h_t}}{\partial{a}} & = h_{0} \sum_{k=0}^{t} \binom{t}{k} (t-k){a}^{t-k-1} {b}^{k} {i}^k\\
    \frac{\partial{h_t}}{\partial{b}} & = h_{0} \sum_{k=0}^{t} \binom{t}{k} k{a}^{t-k}{b}^{k-1} {i}^k\\
  \end{split}
%  \label{eq:grad_single_unit}
\end{equation*}
To prevent the gradient from vanishing/exploding, we need to restrict both $|a|$ and $|b|$ to be $\in (0,1]$.
%which restricts both the magnitude and the phase of the complex number to be $\in (0,1]$ and $\in (0, \frac{\pi}{2}]$, respectively.

\textbf{Exponential Representation $r\exp(i\theta)$:}
Substituting $\lambda$ in~\eqref{eq:single_unit} with the exponential representation, we get:
\begin{equation*}
    h_t = r^{t} {\exp(it\theta)} h_{0}
\end{equation*}
and the gradient w.r.t the learnable parameters $r$ and $\theta$ is:
\begin{equation*}
    \frac{\partial{h_t}}{\partial{r}} = t{r}^{t-1} {\exp(it\theta)} h_{0} , \ \ \  \frac{\partial{h_t}}{\partial{\theta}}  = {r}^{t} {\exp(it\theta)}i t h_{0}
 % \label{eq:grad_single_unit_exp}
\end{equation*}
To prevent the gradient from vanishing/exploding, we need to restrict $r \in (0,1]$.

\textbf{Cosine Representation $r(\cos(\theta)+i\sin(\theta))$:}
Substituting $\lambda$ in~\eqref{eq:single_unit} with the cosine representation, we get:
\begin{equation*}
    h_t =r^{t} {(\cos(t\theta) + i \sin(t\theta))} h_{0}
\end{equation*}
and the gradient w.r.t the learnable parameters $r$ and $\theta$ is:
\begin{equation*}
  \begin{split}
    \frac{\partial{h_t}}{\partial{r}} & = t{r}^{t-1} {(\cos(t\theta) + i \sin(t\theta))} h_{0}  \\
    \frac{\partial{h_t}}{\partial{\theta}} & = {r}^{t} {(it\cos(t\theta)-t\sin{t\theta})} h_{0}. \\
  \end{split}
%  \label{eq:grad_single_unit_sincos}
\end{equation*}
To prevent the gradient from vanishing/exploding, we need to restrict $r \in (0,1]$.
It is simpler to maintain stability with the exponential and cosine representations, since we only need to constrain $r \in (0,1]$. , whereas the real representation requires us to restrict both the complex number's magnitude and phase.
%However, it is still unclear whether there is an advantage to using exponential representation over the cosine representation and vice versa.

\subsection{Biased gradient when only using the real part of the hidden state}\label{app:biased_gradient}
We can attempt to get the benefits of having a real-valued hidden state by simply converting the complex-valued state to a real-valued one within the LRU. However, taking only the real part results in a biased gradient, as we show in this section.

Consider again the one recurrent unit example, but now also consider the output obtained by taking only the real part of the recurrent state: $y_t = w Re\{h_t\}$, 
where $w$ is a learnable parameter. The gradient w.r.t $r$ and $\theta$ is:
\begin{equation}
  \begin{split}
    \frac{\partial y_{t}}{\partial r} &= w \Big(\cos( \theta ) h_{t-1} +  r\cos( \theta ) \frac{\partial h_{t-1}}{\partial r}\Big)\\
    \frac{\partial y_{t}}{\partial \theta } &= w\Big(-r\sin( \theta )h_{t-1}+  r \cos( \theta ) \frac{\partial h_{t-1}}{\partial \theta }\Big)
  \end{split}\label{eq:incomplete_gradient.}  
\end{equation}
Multiplying by a complex number $z$ is equivalent to a rotation by the matrix
$\begin{bmatrix}
  Re\{z\} & -Img\{z\}\\ 
  Img\{z\} & Re\{z\}\\
\end{bmatrix}$
and a scale by $\sqrt{{Re\{z\}}^2+{Img\{z\}}^2}$.
We re-write the recurrent unit using this property as:
\begin{equation}
  \begin{split}
  h_{t}^{c_{1}} & = r\cos( \theta )h_{t-1}^{c_{1}} - r \sin( \theta )h_{t-1}^{c_{2}}\\
  h_{t}^{c_{2}} & =r\cos( \theta )h_{t-1}^{c2} + r \sin( \theta )h_{t-1}^{c_{1}}\\
  y_{t} & =w (h_{t}^{c_{1}} + h_{t}^{c_{2}})\\
  \end{split}
\end{equation}
Notice that we don't need to take the real part of the recurrent state in this formulation. We now write the gradient using this new formulation:
\begin{equation}
  \begin{split}
    \frac{\partial y_{t}}{\partial r} & = w(r \cos( \theta )(h_{t-1}^{c_{1}}+h_{t-1}^{c_{2}})+ r \cos( \theta )(\frac{\partial h_{t-1}^{c_{1}}}{\partial r }+\frac{\partial h_{t-1}^{c_{2}}}{\partial r }) \\
    & + r \sin(\theta )(h_{t-1}^{c_{1}}-h_{t-1}^{c_{2}}) + r\sin( \theta ) (\frac{\partial h_{t-1}^{c_{1}}}{\partial r } -\frac{\partial h_{t-1}^{c_{2}}}{\partial r })) \\
    \frac{\partial y_{t}}{\partial \theta } & = w(-r \sin( \theta )(h_{t-1}^{c_{1}}+h_{t-1}^{c_{2}})+ r \cos( \theta )(\frac{\partial h_{t-1}^{c_{1}}}{\partial \theta }+\frac{\partial h_{t-1}^{c_{2}}}{\partial \theta }) \\
    & + r \cos(\theta )(h_{t-1}^{c_{1}}-h_{t-1}^{c_{2}}) + r\sin( \theta ) (\frac{\partial h_{t-1}^{c_{1}}}{\partial \theta } -\frac{\partial h_{t-1}^{c_{2}}}{\partial \theta })) \\
    \end{split}
    \label{eq:comp_grad}
\end{equation}
Comparing Eq.~\ref{eq:comp_grad} and Eq.~\ref{eq:incomplete_gradient.}, we can see that using only the real part of the recurrent state leads to a loss of information in the gradient.

\section{Recurrent Trace Units}\label{appendix_rtu}
This appendix details the parametrization used for RTUs, the derivation of the RTRL update rules, and the extension of RTUs to multi-layers.
%\subsection{Single Layer Linear RTUs Formulation:}
%\begin{equation}
%  \begin{split}
%    \mathbf{h}^{c_1}_{t}  & = \mathbf{r}\cos(\pmb{\theta}) \odot \mathbf{h}^{c_1}_{t-1} - \mathbf{r}\sin(\pmb{\theta}) \odot  \mathbf{h}^{c_2}_{t-1}  + \pmb{\gamma } \odot \mathbf{W}_{x}^{c_1} \mathbf{x}_{t}. \\
%    \mathbf{h}^{c_2}_{t}  & = \mathbf{r}\cos(\pmb{\theta}) \odot \mathbf{h}^{c_2}_{t-1} + \mathbf{r}\sin(\pmb{\theta}) \odot  \mathbf{h}^{c_1}_{t-1}  + \pmb{\gamma } \odot \mathbf{W}_{x}^{c_2} \mathbf{x}_{t}. \\
%    \mathbf{h}_{t} & = [\mbf{f}(\mathbf{h}_{t}^{c_1});\mbf{f}(\mathbf{h}_{t}^{c_2})].
%  \end{split}\label{Vanilla_RTU_r_theta}
%\end{equation}

\subsection{Empirical Analysis for different $r$ and $\theta$ parameterizations:}
Since $r$ represents the magnitude of a complex number, then $ r \in \mbb{R}^{+}$, it is preferred to have $r \in (0,1]$ to avoid vanishing/exploding gradients as discussed in the previous section.
Let $w_r$ be a learnable parameter which could directly represent $r$ or represent a function of $r$, we can enforce the constraints on $r$ in several ways:
\begin{enumerate}
  \item Direct learning: Learn $r$ directly, and clip it after each parameter's update to be in $(0,1]$.
  \item Enforcing $ r \in \mbb{R}^{+}$: Learn $\nu$ such that $ r\doteq \exp(-\nu)$. This parameterization enforces $r$ to be positive. However, additional clipping is needed to enforce $r \in (0,1]$.
  \item Enforcing stability on $r$: We can enforce $r$ to be $\in (0,1]$ by using a positive non-linear function. For example, learn $\nu^{\log}$ such that $ r\doteq \exp(-\exp(\nu^{\log}))$, this parameterization ensures that $r \in (0,1]$ and is suggested by~\citet{orvieto2023resurrecting}.
  Another example is learning $r = \sigma(\nu)$, which also ensures $r \in (0,1]$.
\end{enumerate}
Finally, we can also enforce stability on $\theta$ by learning $\theta^{\log}$ such that $\theta \doteq \exp(\theta^{\log})$ to ensure that $\theta$ is always positive.
%We can RTUs as follows:
%\begin{equation}
%    \begin{split}
%      \mathbf{h}^{c_1}_{t}  & = \mathbf{g}({\pmb{\nu}^{\log}},\pmb{\theta}^{\log}) \odot \mathbf{h}^{c_1}_{t-1} - \pmb{\phi}(\pmb{\nu}^{\log},\pmb{\theta}^{\log}) \odot  \mathbf{h}^{c_2}_{t-1}  + \pmb{\gamma } \odot \mathbf{W}_{x}^{c_1} \mathbf{x}_{t}. \\
%      \mathbf{h}^{c_2}_{t}  & = \mathbf{g}({\pmb{\nu}^{\log}},\pmb{\theta}^{\log}) \odot \mathbf{h}^{c_2}_{t-1} + \pmb{\phi}(\pmb{\nu}^{\log},\pmb{\theta}^{\log}) \odot  \mathbf{h}^{c_1}_{t-1}  + \pmb{\gamma } \odot \mathbf{W}_{x}^{c_2} \mathbf{x}_{t}. \\
%      \mathbf{h}_{t} & = [\mbf{f}(\mathbf{h}_{t}^{c_1});\mbf{f}(\mathbf{h}_{t}^{c_2})].
%    \end{split}\label{Vanilla_RTU_g_phi}
%  \end{equation}
%  where 
%  \begin{equation}
%    \begin{split}
%      g({\nu}_j,\theta_j)  & = \exp(-\exp({\nu}^{\log}_j)) \cos(\exp(\theta^{\log}_j)), \\
%      \phi({\nu}_j,\theta_j)  & = \exp(-\exp({\nu}^{\log}_j)) \sin(\exp(\theta^{\log}_j)), \\
%    \end{split}\label{g_phi_a}
%  \end{equation}%
\begin{figure}[ht]
  \centering
  \includegraphics[width=0.5\columnwidth]{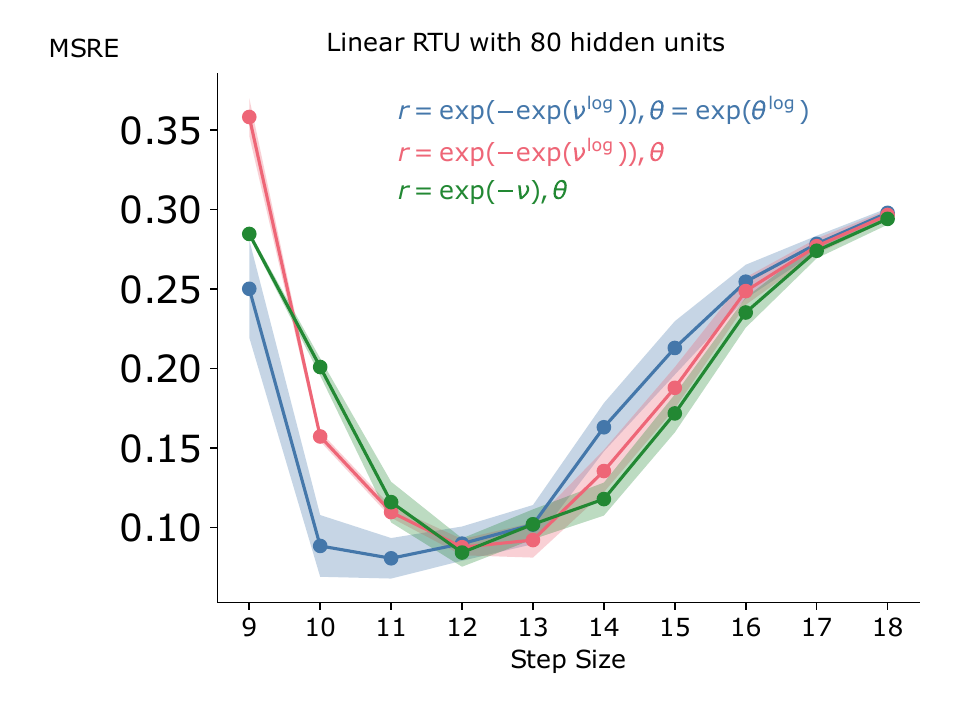}
  \caption{Learning rate sensitivity for different parameterizations of $r$ and $\theta$ for RTUs with $80$ hidden units.}
  \label{fig:learning_r_theta_80}
\end{figure}

\begin{figure}[ht]
  \centering
  \includegraphics[width=0.5\columnwidth]{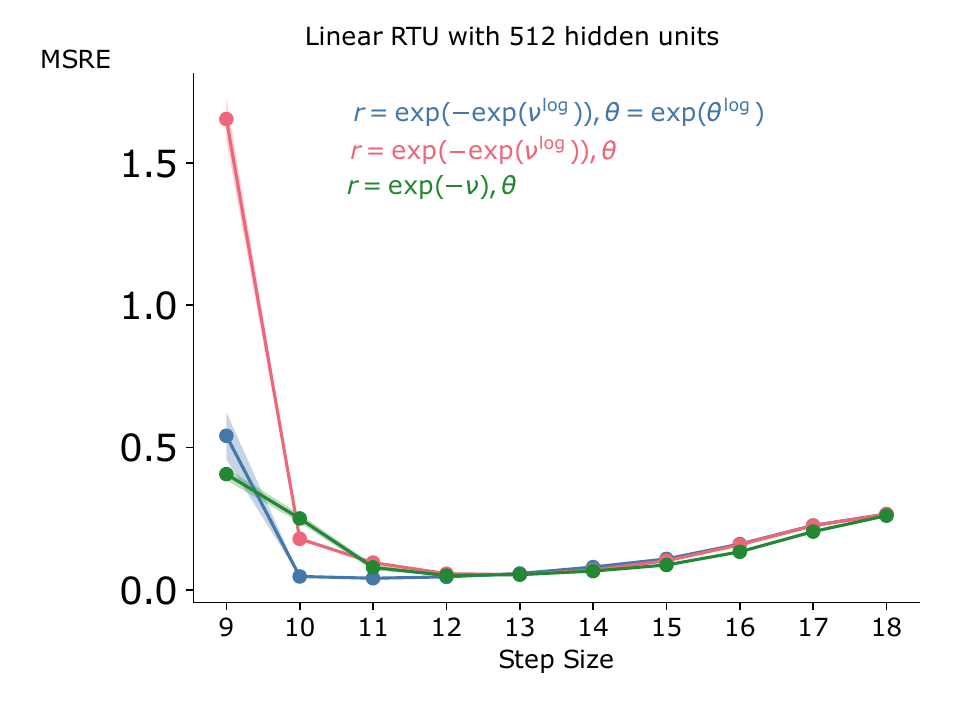}
  \caption{Learning rate sensitivity for different parameterizations of $r$ and $\theta$ for RTUs with $512$ hidden units.}
  \label{fig:learning_r_theta_512} 
\end{figure}
We empirically compare the different parameterizations of $r$ and $\theta$. In our experiments, learning $r$ directly resulted in unstable training where the MSRE diverges. We plot the learning rate sensitivity for the other parameterization for two different sizes of RTUs in Figures~\ref{fig:learning_r_theta_80} and~\ref{fig:learning_r_theta_512}.
While all the parametrizations produce similar performance, we notice that learning $\theta^{\log}$ and $\nu^{\log}$ has better learning rate sensitivity. 

\subsection{Real-Time Recurrent Learning for a Single Layer Linear RTUs}\label{app_rtrl_linear}
We now outline the details for the RTRL update rules for RTUs. 
The set of learnable parameters for RTUs is $\pmb{\psi} \doteq \{{\pmb{\nu}^{\log}},\pmb{\theta}^{\log},\mathbf{W}^{c_1}_x,\mathbf{W}^{c_2}_x\}$.
At each time step $t$, the learner receives a loss $\mcal{L}_t(\hat{y}_t,y_t;\pmb{\psi})$ where $y_t$ is the network output at time $t$, then
the gradient of the loss w.r.t the parameters is: 
\begin{equation}
  \begin{split}
   \frac{\partial{\mcal{L}_t}}{\partial{\pmb{\psi}}} & = \frac{\partial{\mcal{L}_t}}{\partial{\mathbf{h}_t}} \frac{\partial{\mathbf{h}_t}}{\partial{\mathbf{h}^{c_1}_t}} \frac{\partial{\mathbf{h}}^{c_1}_t}{\partial\pmb{\psi}} + \frac{\partial{\mcal{L}_t}}{\partial{\mathbf{h}_t}} \frac{\partial{\mathbf{h}_t}}{\partial{\mathbf{h}^{c_2}_t}} \frac{\partial{\mathbf{h}}^{c_2}_t}{\partial\pmb{\psi}},
  \end{split}
  \label{Final_df}
\end{equation}
where $\frac{\partial{\mathbf{h}}^{c_1}_t}{\partial\pmb{\psi}} = \left\{ \frac{\partial{\mathbf{h}^{c_1}_t}}{\partial{\mathbf{\pmb{\nu}}^{\log}}},\frac{\partial{\mathbf{h}^{c_1}_t}}{\partial{\mathbf{\pmb{\theta}}^{\log}}},\frac{\partial{\mathbf{h}^{c_1}_t}}{\partial{\mathbf{\pmb{W}}_x^{c_1}}},\frac{\partial{\mathbf{h}^{c_1}_t}}{\partial{\mathbf{\pmb{W}}_x^{c_1}}} \right\}$ and $\frac{\partial{\mathbf{h}}^{c_2}_t}{\partial\pmb{\psi}} = \left\{ \frac{\partial{\mathbf{h}^{c_2}_t}}{\partial{\mathbf{\pmb{\nu}}^{\log}}},\frac{\partial{\mathbf{h}^{c_2}_t}}{\partial{\mathbf{\pmb{\theta}}^{\log}}},\frac{\partial{\mathbf{h}^{c_2}_t}}{\partial{\mathbf{\pmb{W}}_x^{c_2}}},\frac{\partial{\mathbf{h}^{c_2}_t}}{\partial{\mathbf{\pmb{W}}_x^{c_2}}} \right\}$. 

We can derive the following gradients:

\begin{equation}
  \begin{split}
   \frac{\partial{\mathbf{h}^{c_1}_t}}{\partial{\mathbf{\pmb{\nu}}^{\log}}}   & = \frac{\partial{\mathbf{g(\pmb{\nu},\pmb{\theta})}}}{\partial{\mathbf{\pmb{\nu}}}^{\log}} \odot \mathbf{h}^{c_1}_{t-1} + \mathbf{g(\pmb{\nu},\pmb{\theta})} \frac{\partial{\mathbf{h}^{c_1}_{t-1}}}{\partial{\mathbf{\pmb{\nu}}^{\log}}} -  \frac{\partial{\mathbf{\pmb{\phi}(\pmb{\nu},\pmb{\theta})}}}{\partial{\pmb{\nu}^{\log}}} \odot \mathbf{h}^{c_2}_{t-1} - \mathbf{\pmb{\phi}(\pmb{\nu},\pmb{\theta})} \frac{\partial{\mathbf{h}^{c_2}_{t-1}}}{\partial{\mathbf{\pmb{\nu}}}^{\log}} + \frac{\partial{\pmb{\gamma}}}{\partial{\pmb{\nu}_{\log}}} \odot \mathbf{W}_{x}^{c_1} \mathbf{x}_{t} \\
   \frac{\partial{\mathbf{h}^{c_2}_t}}{\partial{\mathbf{\pmb{\nu}}^{\log}}}   & = \frac{\partial{\mathbf{g(\pmb{\nu},\pmb{\theta})}}}{\partial{\mathbf{\pmb{\nu}}}^{\log}} \odot \mathbf{h}^{c_2}_{t-1} + \mathbf{g(\pmb{\nu},\pmb{\theta})} \frac{\partial{\mathbf{h}^{c_2}_{t-1}}}{\partial{\mathbf{\pmb{\nu}}^{\log}}} +  \frac{\partial{\mathbf{\pmb{\phi}(\pmb{\nu},\pmb{\theta})}}}{\partial{\pmb{\nu}^{\log}}} \odot \mathbf{h}^{c_1}_{t-1} + \mathbf{\pmb{\phi}(\pmb{\nu},\pmb{\theta})} \frac{\partial{\mathbf{h}^{c_1}_{t-1}}}{\partial{\mathbf{\pmb{\nu}}}^{\log}} + \frac{\partial{\pmb{\gamma}}}{\partial{\pmb{\nu}_{\log}}} \odot  \mathbf{W}_{x}^{c_2} \mathbf{x}_{t} \\
  \end{split}
  \label{h_c1_nu}
\end{equation}

\begin{equation}
  \begin{split}
   \frac{\partial{\mathbf{h}^{c_1}_t}}{\partial{\pmb{\theta}^{\log}}}   & = \frac{\partial{\mathbf{g(\pmb{\nu},\pmb{\theta})}}}{\partial{\pmb{\theta}}^{\log}} \odot \mathbf{h}^{c_1}_{t-1} + \mathbf{g(\pmb{\nu},\pmb{\theta})} \frac{\partial{\mathbf{h}^{c_1}_{t-1}}}{\partial{\pmb{\theta}^{\log}}} -  \frac{\partial{\mathbf{\pmb{\phi}(\pmb{\nu},\pmb{\theta})}}}{\partial{\pmb{\theta}^{\log}}} \odot \mathbf{h}^{c_2}_{t-1} -  \mathbf{\pmb{\phi}(\pmb{\nu},\pmb{\theta})} \frac{\partial{\mathbf{h}^{c_2}_{t-1}}}{\partial{\pmb{\theta}^{\log}}}\\
   \frac{\partial{\mathbf{h}^{c_2}_t}}{\partial{\pmb{\theta}^{\log}}}   & = \frac{\partial{\mathbf{g(\pmb{\nu},\pmb{\theta})}}}{\partial{\pmb{\theta}}^{\log}} \odot \mathbf{h}^{c_2}_{t-1} + \mathbf{g(\pmb{\nu},\pmb{\theta})} \frac{\partial{\mathbf{h}^{c_2}_{t-1}}}{\partial{\pmb{\theta}^{\log}}} +  \frac{\partial{\mathbf{\pmb{\phi}(\pmb{\nu},\pmb{\theta})}}}{\partial{\pmb{\theta}^{\log}}} \odot  \mathbf{h}^{c_1}_{t-1} + \mathbf{\pmb{\phi}(\pmb{\nu},\pmb{\theta})} \frac{\partial{\mathbf{h}^{c_1}_{t-1}}}{\partial{\pmb{\theta}^{\log}}}\\
  \end{split}
  \label{h_c1_theta}
\end{equation}
where
\begin{equation}
  \begin{split}
    \frac{\partial{{\mathbf{g}(\pmb{\nu},\pmb{\theta})}}}{\partial{\mathbf{\pmb{\nu}}^{\log}}} & = -\mathbf{g(\pmb{\nu},\pmb{\theta})} \exp(\pmb{\nu}^{\log}) \\
    \frac{\partial{{\mathbf{g}(\pmb{\nu},\pmb{\theta})}}}{\partial{\pmb{\theta}^{\log}}} & = - {\pmb{\phi}(\pmb{\nu},\pmb{\theta})}\exp(\pmb{\theta}^{\log})  \\
    \frac{\partial{{\pmb{\phi}(\pmb{\nu},\pmb{\theta})}}}{\partial{\mathbf{\pmb{\nu}}^{\log}}} & = -\pmb{\phi}(\pmb{\nu},\pmb{\theta}) \exp(\pmb{\nu}^{\log})  \\
    \frac{\partial{{\pmb{\phi}(\pmb{\nu},\pmb{\theta})}}}{\partial{\pmb{\theta}^{\log}}} & = \mathbf{g(\pmb{\nu},\pmb{\theta})} \exp(\pmb{\theta}^{\log}) \\
  \end{split}
  \label{dg_phi}
\end{equation}

To efficiently compute the gradient w.r.t $\mbf{W}_x^{c_1}$ and $\mbf{W}_x^{c_2}$, we look at the influence of each when considering a single element from each recurrent state, $\mbf{h}_t^{c_1}$ and $\mbf{h}_t^{c_2}$:
\begin{equation}
  \begin{split}
    h^{c_1}_{t,i}  & = g({\nu}_i,\theta_i) h^{c_1}_{t-1,i} - \phi({\nu}_i,\theta_i) h^{c_2}_{t-1,i}  + \gamma_{i} \sum_{j=0}^{d} w^{c_1}_{x,(i,j)} x_{t,j} \\
    h^{c_2}_{t,i}  & = g({\nu}_i,\theta_i) h^{c_2}_{t-1,i} + \phi({\nu}_i,\theta_i) h^{c_1}_{t-1,i}  + \gamma_{i} \sum_{j=0}^{d} w^{c_2}_{x,(i,j)} x_{t,j}. \\
  \end{split}
  \label{rtu_single_feature}
\end{equation}
We then get: 
\begin{equation}
  \begin{split}
    \frac{\partial{h^{c_1}_{t,i}}}{\partial{W^{c_1}_{x,(i,j)}}}  & = g({\nu}_i,\theta_i) \frac{\partial{h^{c_1}_{t-1,i}}}{\partial{{W^{c_1}_{x,(i,j)}}}} - \phi({\nu}_i,\theta_i) \frac{\partial{h^{c_2}_{t-1,i}}}{\partial{{W^{c_1}_{x,(i,j)}}}}  + \gamma_{i} x_{t,j} \\
    \frac{\partial{h^{c_2}_{t,i}}}{\partial{W^{c_1}_{x,(i,j)}}}  & = g({\nu}_i,\theta_i) \frac{\partial{h^{c_2}_{t-1,i}}}{\partial{{W^{c_1}_{x,(i,j)}}}} + \phi({\nu}_i,\theta_i) \frac{\partial{h^{c_1}_{t-1,i}}}{\partial{{W^{c_1}_{x,(i,j)}}}}  \\
    \frac{\partial{h^{c_1}_{t,i}}}{\partial{W^{c_2}_{x,(i,j)}}}  & = g({\nu}_i,\theta_i) \frac{\partial{h^{c_1}_{t-1,i}}}{\partial{{W^{c_2}_{x,(i,j)}}}} - \phi({\nu}_i,\theta_i) \frac{\partial{h^{c_2}_{t-1,i}}}{\partial{{W^{c_2}_{x,(i,j)}}}}   \\
    \frac{\partial{h^{c_2}_{t,i}}}{\partial{W^{c_2}_{x,(i,j)}}}  & = g({\nu}_i,\theta_i) \frac{\partial{h^{c_2}_{t-1,i}}}{\partial{{W^{c_2}_{x,(i,j)}}}} + \phi({\nu}_i,\theta_i) \frac{\partial{h^{c_1}_{t-1,i}}}{\partial{{W^{c_2}_{x,(i,j)}}}} + \gamma_{i} x_{t,j}. \\
  \end{split}
  \label{drtu_single_feature}
\end{equation}
We see that each $h^{c_1}_{t,i}$ gets affected by weights from only one row of $\mathbf{W}^{c_1}_x$, thus,
$\frac{\partial{\mathbf{h}^{c_1}_{t}}}{\mathbf{W}^{c_1}_x}$ can be written as a matrix of the same dimension as $\mathbf{W}^{c_1}_x$. The same is true for $\frac{\partial{\mathbf{h}^{c_2}_{t}}}{\mathbf{W}^{c_2}_x}$,$\frac{\partial{\mathbf{h}^{c_1}_{t}}}{\mathbf{W}^{c_2}_x}$, and $\frac{\partial{\mathbf{h}^{c_2}_{t}}}{\mathbf{W}^{c_1}_x}$.

\subsection{Single Layer Non-Linear RTUs Formulation:}
We extend the linear RTUs to non-linear RTUs by adding a non-linear activation function $\mbf{f}$ to the recurrent states. We can write the non-linear RTUs as follows:
\begin{equation}
  \begin{split}
    \mathbf{h}^{c_1}_{t}  & = \mbf{f}(\mathbf{g(\pmb{\pmb{\nu}},\pmb{\theta})} \odot \mathbf{h}^{c_1}_{t-1} - \pmb{\phi}(\pmb{\pmb{\nu}},\pmb{\theta}) \odot  \mathbf{h}^{c_2}_{t-1}  + \pmb{\gamma } \odot \mathbf{W}_{x}^{c_1} \mathbf{x}_{t}) \\
    \mathbf{h}^{c_2}_{t}  & = \mbf{f}(\mathbf{g(\pmb{\pmb{\nu}},\pmb{\theta})} \odot \mathbf{h}^{c_2}_{t-1} + \pmb{\phi}(\pmb{\pmb{\nu}},\pmb{\theta}) \odot  \mathbf{h}^{c_1}_{t-1}  + \pmb{\gamma } \odot \mathbf{W}_{x}^{c_2} \mathbf{x}_{t}) \\
    \mathbf{h}_{t} & = [\mathbf{h}_{t}^{c_1};\mathbf{h}_{t}^{c_2}]
  \end{split}\label{Vanilla_non-linearRTU}
\end{equation}
where $\mbf{f}: \mbb{R}^{n} \rightarrow \mbb{R}^n$.
Following the same procedure as in the linear case, we can derive RTRL update rules for the non-linear RTUs.
\begin{equation}
  \begin{split}
   \frac{\partial{\mathbf{h}^{c_1}_t}}{\partial{\mathbf{\pmb{\nu}}^{\log}}}   & = \mbf{f}'(\cdot)(\frac{\partial{\mathbf{g(\pmb{\nu},\pmb{\theta})}}}{\partial{\mathbf{\pmb{\nu}}}^{\log}} \odot \mathbf{h}^{c_1}_{t-1} + \mathbf{g(\pmb{\nu},\pmb{\theta})} \frac{\partial{\mathbf{h}^{c_1}_{t-1}}}{\partial{\mathbf{\pmb{\nu}}^{\log}}} -  \frac{\partial{\mathbf{\pmb{\phi}(\pmb{\nu},\pmb{\theta})}}}{\partial{\pmb{\nu}^{\log}}} \odot \mathbf{h}^{c_2}_{t-1} - \mathbf{\pmb{\phi}(\pmb{\nu},\pmb{\theta})} \frac{\partial{\mathbf{h}^{c_2}_{t-1}}}{\partial{\mathbf{\pmb{\nu}}}^{\log}})\\
   \frac{\partial{\mathbf{h}^{c_2}_t}}{\partial{\mathbf{\pmb{\nu}}^{\log}}}   & = \mbf{f}'(\cdot)(\frac{\partial{\mathbf{g(\pmb{\nu},\pmb{\theta})}}}{\partial{\mathbf{\pmb{\nu}}}^{\log}} \odot \mathbf{h}^{c_2}_{t-1} + \mathbf{g(\pmb{\nu},\pmb{\theta})} \frac{\partial{\mathbf{h}^{c_2}_{t-1}}}{\partial{\mathbf{\pmb{\nu}}^{\log}}} +  \frac{\partial{\mathbf{\pmb{\phi}(\pmb{\nu},\pmb{\theta})}}}{\partial{\pmb{\nu}^{\log}}} \odot \mathbf{h}^{c_1}_{t-1} + \mathbf{\pmb{\phi}(\pmb{\nu},\pmb{\theta})} \frac{\partial{\mathbf{h}^{c_1}_{t-1}}}{\partial{\mathbf{\pmb{\nu}}}^{\log}})\\
  \end{split}
  \label{fh_c_nu}
\end{equation}

\begin{equation}
  \begin{split}
   \frac{\partial{\mathbf{h}^{c_1}_t}}{\partial{\pmb{\theta}^{\log}}}   & = \mbf{f}'(\cdot)(\frac{\partial{\mathbf{g(\pmb{\nu},\pmb{\theta})}}}{\partial{\pmb{\theta}}^{\log}} \odot \mathbf{h}^{c_1}_{t-1} + \mathbf{g(\pmb{\nu},\pmb{\theta})} \frac{\partial{\mathbf{h}^{c_1}_{t-1}}}{\partial{\pmb{\theta}^{\log}}} -  \frac{\partial{\mathbf{\pmb{\phi}(\pmb{\nu},\pmb{\theta})}}}{\partial{\pmb{\theta}^{\log}}} \odot \mathbf{h}^{c_2}_{t-1} -  \mathbf{\pmb{\phi}(\pmb{\nu},\pmb{\theta})} \frac{\partial{\mathbf{h}^{c_2}_{t-1}}}{\partial{\pmb{\theta}^{\log}}})\\
   \frac{\partial{\mathbf{h}^{c_2}_t}}{\partial{\pmb{\theta}^{\log}}}   & = \mbf{f}'(\cdot)(\frac{\partial{\mathbf{g(\pmb{\nu},\pmb{\theta})}}}{\partial{\pmb{\theta}}^{\log}} \odot \mathbf{h}^{c_2}_{t-1} + \mathbf{g(\pmb{\nu},\pmb{\theta})} \frac{\partial{\mathbf{h}^{c_2}_{t-1}}}{\partial{\pmb{\theta}^{\log}}} +  \frac{\partial{\mathbf{\pmb{\phi}(\pmb{\nu},\pmb{\theta})}}}{\partial{\pmb{\theta}^{\log}}} \odot  \mathbf{h}^{c_1}_{t-1} + \mathbf{\pmb{\phi}(\pmb{\nu},\pmb{\theta})} \frac{\partial{\mathbf{h}^{c_1}_{t-1}}}{\partial{\pmb{\theta}^{\log}}})\\
  \end{split}
  \label{fh_c_theta}
\end{equation}

\begin{equation}
  \begin{split}
    h^{c_1}_{t,i}  & = f(g({\nu}_i,\theta_i) h^{c_1}_{t-1,i} - \phi({\nu}_i,\theta_i) h^{c_2}_{t-1,i}  + \gamma_{i} \sum_{j=0}^{d} w^{c_1}_{x,(i,j)} x_{t,j}) \\
    h^{c_2}_{t,i}  & = f(g({\nu}_i,\theta_i) h^{c_2}_{t-1,i} + \phi({\nu}_i,\theta_i) h^{c_1}_{t-1,i}  + \gamma_{i} \sum_{j=0}^{d} w^{c_2}_{x,(i,j)} x_{t,j}) \\
  \end{split}
  \label{non_linrtu_single_feature}
\end{equation}
We then get: 
\begin{equation}
  \begin{split}
    \frac{\partial{h^{c_1}_{t,i}}}{\partial{W^{c_1}_{x,(i,j)}}}  & = f'(\cdot)(g({\nu}_i,\theta_i) \frac{\partial{h^{c_1}_{t-1,i}}}{\partial{{W^{c_1}_{x,(i,j)}}}} - \phi({\nu}_i,\theta_i) \frac{\partial{h^{c_2}_{t-1,i}}}{\partial{{W^{c_1}_{x,(i,j)}}}}  + \gamma_{i} x_{t,j}) \\
    \frac{\partial{h^{c_2}_{t,i}}}{\partial{W^{c_1}_{x,(i,j)}}}  & = f'(\cdot)(g({\nu}_i,\theta_i) \frac{\partial{h^{c_2}_{t-1,i}}}{\partial{{W^{c_1}_{x,(i,j)}}}} + \phi({\nu}_i,\theta_i) \frac{\partial{h^{c_1}_{t-1,i}}}{\partial{{W^{c_1}_{x,(i,j)}}}})  \\
    \frac{\partial{h^{c_1}_{t,i}}}{\partial{W^{c_2}_{x,(i,j)}}}  & = f'(\cdot)(g({\nu}_i,\theta_i) \frac{\partial{h^{c_1}_{t-1,i}}}{\partial{{W^{c_2}_{x,(i,j)}}}} - \phi({\nu}_i,\theta_i) \frac{\partial{h^{c_2}_{t-1,i}}}{\partial{{W^{c_2}_{x,(i,j)}}}})  \\
    \frac{\partial{h^{c_2}_{t,i}}}{\partial{W^{c_2}_{x,(i,j)}}}  & = f'(\cdot)(g({\nu}_i,\theta_i) \frac{\partial{h^{c_2}_{t-1,i}}}{\partial{{W^{c_2}_{x,(i,j)}}}} + \phi({\nu}_i,\theta_i) \frac{\partial{h^{c_1}_{t-1,i}}}{\partial{{W^{c_2}_{x,(i,j)}}}} + \gamma_{i} x_{t,j}) \\
  \end{split}
  \label{dnon_rtu_single_feature}
\end{equation}

\subsection{Complexity Analysis of RTUs}
We now move to calculate the computation and memory complexity of RTUs when learning using the RTRL rules introduced in the previous section.

For an input $\mathbf{x_t} \in \mathbb{R}^{d}$ and hidden states $\mbf{h}_t = [\mbf{f}(\mathbf{h}^{c_1}_{t});\mbf{f}(\mathbf{h}^{c_2}_{t})] \in \mathbb{R}^{2n}$,we have
$\mathbf{g(\pmb{\nu},\pmb{\theta})}, {\pmb{\phi}(\pmb{\nu},\pmb{\theta})},\pmb{\gamma}  \in \mathbb{R}^{n}$ and $\mathbf{W}_{x}^{c1},\mathbf{W}_{x}^{c2} \in \mathbb{R}^{d \times n}$. %Thus, doing a forward pass according to \ref{Vanilla_RTU} has the following computation cost:

An agent using the RTU with RTRL needs to store the gradient information,$\frac{\partial{\mathbf{h}^{c_1}_{t-1}}}{\partial{\pmb{\psi}}}$ and $\frac{\partial{\mathbf{h}^{c_2}_{t-1}}}{\partial{\pmb{\psi}}}$, from one step to the next. We denote the set of saved gradient information as:
\begin{equation}
  \begin{split}
  \nabla_{\pmb{\nu}^{t-1}} & \doteq \left\{\frac{\partial{\mathbf{h}^{c_1}_{t-1}}}{\partial{\pmb{\nu}^{\log}}}, \frac{\partial{\mathbf{h}^{c_2}_{t-1}}}{\partial{\pmb{\nu}^{\log}}}\right\} \\
  \nabla_{\pmb{\theta}^{t-1}} & \doteq \left\{\frac{\partial{\mathbf{h}^{c_1}_{t-1}}}{\partial{\pmb{\theta}^{\log}}}, \frac{\partial{\mathbf{h}^{c_2}_{t-1}}}{\partial{\pmb{\theta}^{\log}}}\right\} \\
  \nabla_{\pmb{W}_{x}^{t-1}} & \doteq \left\{\frac{\partial{\mathbf{h}^{c_1}_{t-1}}}{\partial{\mbf{W}_{x}^{c_1}}}, \frac{\partial{\mathbf{h}^{c_2}_{t-1}}}{\partial{\pmb{W}^{c_1}_{x}}},\frac{\partial{\mathbf{h}^{c_1}_{t-1}}}{\partial{\mbf{W}_{x}^{c_2}}}, \frac{\partial{\mathbf{h}^{c_2}_{t-1}}}{\partial{\pmb{W}^{c_2}_{x}}}\right\}. \\
  \end{split}
  \label{Memory}
\end{equation}
The saved gradient information has the following dimensions:
\begin{equation}
  \begin{split}
    & \nabla_{\pmb{\nu}^{t-1}}  \in \mathbb{R}^{2n}\\
    & \nabla_{\pmb{\theta}^{t-1}}  \in \mathbb{R}^{2n} \\
    & \nabla_{\pmb{W}_{x}^{t-1}} \in \mathbb{R}^{4(d \times n)}. \\
  \end{split}
  \label{m}
\end{equation}
Then, it follows that memory complexity for RTU with RTRL is $\mathcal{O}(n+nd)$\@. i.e., linear in the number of parameters. 

For the computational complexity, a forward pass according to~\ref{Vanilla_RTU} has a computational complexity of $\mathcal{O}(n+nd)$.
Additionally, after doing the forward pass, the learner needs to update the saved gradient information according to equations~\ref{h_c1_nu} through~\ref{drtu_single_feature} which has a computational complexity of $\mathcal{O}(n+nd)$.
To summarize, using Real-Time Recurrent Learning with RTUs has linear computational and memory complexities.

\subsection{Multi-Layers Recurrent Trace Units}
We now extend RTUs to a multilayer setting. We show that in the multilayer case, we lose the computational advantages. However, prior work suggested that treating each recurrent layer independently is a sensible choice and allows us to gain a computational advantage\citep{javed2023online}.
Consider an RTU with $n$ layers, we refer to the hidden dimension of a layer $i$ where $0<i \leq n$ as $d_i$.
The network has the following set of parameters:
\begin{equation}
  \begin{split}
    \pmb{\psi}\doteq\{\pmb{\psi}_1,\pmb{\psi}_2,\pmb{\psi}_3,\ldots,\pmb{\psi}_n\}, \\
    \pmb{\psi}_i\doteq\{{\pmb{\nu}^{\log,i}},\pmb{\theta}^{\log,i},\mathbf{W}^{c_1,i}_x,\mathbf{W}^{c_2,i}_x\} \\
  \end{split}
\end{equation}

To update the network parameters, we need to calculate the gradient of the loss w.r.t the parameters from all the layers:
\begin{equation}
  \begin{split}
  \frac{\partial{\mcal{L}_t}}{\partial{\pmb{\psi}}} & = \frac{\partial{\mcal{L}_t}}{\partial{\mathbf{h}^{n}_t}} \frac{\partial{\mathbf{h}}^{n}_t}{\partial\pmb{\psi}}\\
  \frac{\partial{\mathbf{h}}^{n}_t}{\partial\pmb{\psi}} &= \Bigg\{ \frac{\partial{\mathbf{h}}^{n}_t}{\partial{\pmb{\psi}_1}} , \frac{\partial{\mathbf{h}}^{n}_t}{\partial{\pmb{\psi}_2}}, \frac{\partial{\mathbf{h}}^{n}_t}{\partial{\pmb{\psi}_3}},\ldots,\frac{\partial{\mathbf{h}}^{n}_t}{\partial{\pmb{\psi}_n}} \Bigg\}
  \end{split}
\end{equation}
where 
\begin{equation}
  \mathbf{h}^{n}_t = \begin{bmatrix}
    \mathbf{f}(\mathbf{h}_{t}^{c1,{n}})\\
    \mathbf{f}(\mathbf{h}_{t}^{c2,{n}})\\
  \end{bmatrix}  
\end{equation}
Take as an example the gradient of $\mathbf{h}_{t}^{c1,{n}}$ w.r.t $\pmb{\nu}^{\log,{n}},\pmb{\nu}^{\log,{n-1}},\ldots,\pmb{\nu}^{\log,{1}}$.
Unrolling the last two layers of the network, we get:
\begin{equation}
    \begin{split}
      \mathbf{h}^{c_1,n}_{t}  & = \pmb{g}(\pmb{\nu}^{n},\pmb{\theta}^{n}) \odot \mathbf{h}^{c_1,n}_{t-1} - \pmb{\phi}(\pmb{\nu}^{n},\pmb{\theta}^{n}) \odot  \mathbf{h}^{c_2,n}_{t-1}  + \pmb{\gamma}^{n} \odot \mathbf{W}_{x}^{c_1,n} \mathbf{h}^{{n-1}}_{t}\\
      & = \pmb{g}(\pmb{\nu}^{n},\pmb{\theta}^{n}) \odot \mathbf{h}^{c_1,n}_{t-1} - \pmb{\phi}(\pmb{\nu}^{n},\pmb{\theta}^{n}) \odot  \mathbf{h}^{c_2,n}_{t-1}  + \pmb{\gamma}^{n} \odot \mathbf{W}_{x}^{c_1,n} \begin{bmatrix}
        \mathbf{f}(\mathbf{h}_{t}^{c1,{n-1}})\\
        \mathbf{f}(\mathbf{h}_{t}^{c2,{n-1}})\\
      \end{bmatrix} \\
      & = \pmb{g}(\pmb{\nu}^{n},\pmb{\theta}^{n}) \odot \mathbf{h}^{c_1,n}_{t-1} - \pmb{\phi}(\pmb{\nu}^{n},\pmb{\theta}^{n}) \odot  \mathbf{h}^{c_2,n}_{t-1}  + \pmb{\gamma}^{n} \odot \mathbf{W}_{x}^{c_1,n} \\
      &\begin{bmatrix}
        \mathbf{f}(\pmb{g}(\pmb{\nu}^{n-1},\pmb{\theta}^{n-1}) \odot \mathbf{h}^{c_1,{n-1}}_{t-1} - \pmb{\phi}(\pmb{\nu}^{n-1},\pmb{\theta}^{n-1}) \odot  \mathbf{h}^{c_2,{n-1}}_{t-1}  + \pmb{\gamma}^{n-1} \odot \mathbf{W}_{x}^{c_1,{n-1}} \mathbf{h}^{{n-2}}_{t})\\
        \mathbf{f}(\pmb{g}(\pmb{\nu}^{n-1},\pmb{\theta}^{n-1}) \odot \mathbf{h}^{c_2,{n-1}}_{t-1} + \pmb{\phi}(\pmb{\nu}^{n-1},\pmb{\theta}^{n-1}) \odot  \mathbf{h}^{c_1,{n-1}}_{t-1}  + \pmb{\gamma}^{n-1} \odot \mathbf{W}_{x}^{c_2,{n-1}} \mathbf{h}^{{n-2}}_{t})\\
      \end{bmatrix} \\
      %\mathbf{h}^{c_2,n}_{t}  & = \pmb{g}(\pmb{\nu}^{n},\pmb{\theta}^{n}) \odot \mathbf{h}^{c_2,n}_{t-1} + \pmb{\phi}(\pmb{\nu}^{n},\pmb{\theta}^{n}) \odot  \mathbf{h}^{c_1,n}_{t-1}  + \pmb{\gamma}^{n} \odot \mathbf{W}_{x}^{c_2,n} \mathbf{h}^{{n-1}}_{t} \\
      %& = \pmb{g}(\pmb{\nu}^{n},\pmb{\theta}^{n}) \odot \mathbf{h}^{c_2,n}_{t-1} + \pmb{\phi}(\pmb{\nu}^{n},\pmb{\theta}^{n}) \odot  \mathbf{h}^{c_1,n}_{t-1}  + \pmb{\gamma}^{n} \odot \mathbf{W}_{x}^{c_2,n} \begin{bmatrix}
      %  \mathbf{f}(\mathbf{h}_{t}^{c1,{n-1}})\\
      %  \mathbf{f}(\mathbf{h}_{t}^{c2,{n-1}})\\
      %\end{bmatrix} \\
    \end{split}\label{c1_multi_layer}
\end{equation}

The gradient of $\mathbf{h}_{t}^{c1,{n}}$ w.r.t $\pmb{\nu}^{\log,{n}}$ can be calculated in linear complexity as indicated in the previous section.

Calculating the gradient w.r.t the parameters from the earlier layer:
\begin{equation}
  \begin{split}
   \frac{\partial{\mathbf{h}^{c_1,n}_t}}{\partial{\mathbf{\pmb{\nu}}^{\log,{n-1}}}}   & =  \mathbf{g}(\pmb{\nu}^n,\pmb{\theta}^n)\frac{\partial{\mathbf{h}^{c_1,n}_{t-1}}}{\partial{\mathbf{\pmb{\nu}}^{\log,{n-1}}}}  - {\pmb{\phi}(\pmb{\nu}^{n},\pmb{\theta}^{n})} \frac{\partial{\mathbf{h}^{c_2,n}_{t-1}}}{\partial{{\pmb{\nu}}}^{\log,{n-1}}} + \pmb{\gamma}^{n} \odot \mathbf{W}_{x}^{c_1,n} \frac{\partial{\mathbf{h}^{{n-1}}_{t}}}{\partial{\pmb{\nu}^{\log,{n-1}}}}\\
   & \frac{\partial{\mathbf{h}^{{n-1}}_{t}}}{\partial{\pmb{\nu}^{\log,{n-1}}}} \in \mbf{R}^{2d_{n-1}} \text{\@ Can be calulated with linear complexity.}\\
   & \frac{\partial{\mathbf{h}^{c_1,n}_{t-1}}}{\partial{\mathbf{\pmb{\nu}}^{\log,{n-1}}}} \in \mbf{R}^{d_n \times d_{n-1}} \text{\@ Saved from previous timestep.}\\
   & \frac{\partial{\mathbf{h}^{c_2,n}_{t-1}}}{\partial{\mathbf{\pmb{\nu}}^{\log,{n-1}}}} \in \mbf{R}^{d_n \times d_{n-1}} \text{\@ Saved from previous timestep.}\\
  \end{split}
  \label{h_c1_nu_n_nminus1}
\end{equation}

\begin{equation}
  \begin{split}
    \frac{\partial{\mathbf{h}^{c_1,n}_t}}{\partial{\mathbf{\pmb{\nu}}^{\log,{n-2}}}}  & =  \mathbf{g}(\pmb{\nu}^n,\pmb{\theta}^n)\frac{\partial{\mathbf{h}^{c_1,n}_{t-1}}}{\partial{\mathbf{\pmb{\nu}}^{\log,{n-2}}}}  - {\pmb{\phi}(\pmb{\nu}^{n},\pmb{\theta}^{n})} \frac{\partial{\mathbf{h}^{c_2,n}_{t-1}}}{\partial{{\pmb{\nu}}}^{\log,{n-2}}} + \pmb{\gamma}^{n} \odot \mathbf{W}_{x}^{c_1,n} \frac{\partial{\mathbf{h}^{{n-1}}_{t}}}{\partial{\pmb{\nu}^{\log,{n-2}}}}\\
    & =  \mathbf{g}(\pmb{\nu}^n,\pmb{\theta}^n)\frac{\partial{\mathbf{h}^{c_1,n}_{t-1}}}{\partial{\mathbf{\pmb{\nu}}^{\log,{n-2}}}}  - {\pmb{\phi}(\pmb{\nu}^{n},\pmb{\theta}^{n})} \frac{\partial{\mathbf{h}^{c_2,n}_{t-1}}}{\partial{{\pmb{\nu}}}^{\log,{n-2}}} + \pmb{\gamma}^{n} \odot \mathbf{W}_{x}^{c_1,{n}}\\ 
    & \mathbf{f}'(\cdot) \begin{bmatrix}
      \mathbf{g}(\pmb{\nu}^{n-1},\pmb{\theta}^{n-1})\frac{\partial{\mathbf{h}^{c_1,n-1}_{t-1}}}{\partial{\mathbf{\pmb{\nu}}^{\log,{n-2}}}}  - {\pmb{\phi}(\pmb{\nu}^{n-1},\pmb{\theta}^{n-1})} \frac{\partial{\mathbf{h}^{c_2,n-1}_{t-1}}}{\partial{{\pmb{\nu}}}^{\log,{n-2}}} + \pmb{\gamma}^{n-1} \odot \mathbf{W}_{x}^{c_1,n-1} \frac{\partial{\mathbf{h}^{{n-2}}_{t}}}{\partial{\pmb{\nu}^{\log,{n-2}}}}\\
      \mathbf{g}(\pmb{\nu}^{n-1},\pmb{\theta}^{n-1})\frac{\partial{\mathbf{h}^{c_2,n-1}_{t-1}}}{\partial{\mathbf{\pmb{\nu}}^{\log,{n-2}}}}  + {\pmb{\phi}(\pmb{\nu}^{n-1},\pmb{\theta}^{n-1})} \frac{\partial{\mathbf{h}^{c_1,n-1}_{t-1}}}{\partial{{\pmb{\nu}}}^{\log,{n-2}}} + \pmb{\gamma}^{n-1} \odot \mathbf{W}_{x}^{c_2,n-1} \frac{\partial{\mathbf{h}^{{n-2}}_{t}}}{\partial{\pmb{\nu}^{\log,{n-2}}}}\\
    \end{bmatrix}\\
    & \frac{\partial{\mathbf{h}^{{n-2}}_{t}}}{\partial{\pmb{\nu}^{\log,{n-2}}}} \in \mbf{R}^{2d_{n-2}} \text{\@ Can be calulated with linear complexity.}\\ 
    & \frac{\partial{\mathbf{h}^{c_1,n-1}_{t-1}}}{\partial{\mathbf{\pmb{\nu}}^{\log,{n-2}}}} \in \mbf{R}^{d_{n-1} \times d_{n-2}} \text{\@ Saved from previous timestep.}\\
    & \frac{\partial{\mathbf{h}^{c_2,n-1}_{t-1}}}{\partial{\mathbf{\pmb{\nu}}^{\log,{n-2}}}} \in \mbf{R}^{d_{n-1} \times d_{n-2}} \text{\@ Saved from previous timestep.}\\
    & \frac{\partial{\mathbf{h}^{c_1,n}_{t-1}}}{\partial{\mathbf{\pmb{\nu}}^{\log,{n-2}}}} \in \mbf{R}^{d_{n} \times d_{n-2}} \text{\@ Saved from previous timestep.}\\
    & \frac{\partial{\mathbf{h}^{c_2,n}_{t-1}}}{\partial{\mathbf{\pmb{\nu}}^{\log,{n-2}}}} \in \mbf{R}^{d_{n} \times d_{n-2}} \text{\@ Saved from previous timestep.}\\
  \end{split}
\end{equation}

Let's define $J_{i,i-1} = \{\frac{\partial{\mathbf{h}^{c_1,i}_{t-1}}}{\partial{\mathbf{\pmb{\nu}}^{\log,{i-1}}}},\frac{\partial{\mathbf{h}^{c_2,i}_{t-1}}}{\partial{\mathbf{\pmb{\nu}}^{\log,{i-1}}}}\}$. 
Then, to calculate the gradient of the hidden units from layer $n$ w.r.t the parameters from layer $n-1$, we need save $J_{n,n-1}$, and to calculate the gradient of the hidden units from layer $n$ w.r.t the parameters of layer $n-2$, we need to save $J_{n,n-2}$ and $J_{n-1,n-2}$. If we keep going, to calculate the gradient of the hidden units of layer $n$ w.r.t the parameters of the first layer, we need to save $J_{n,1},J_{n-1,1},J_{n-2,1},\ldots,J_{2,1}$.

\subsection{Implementing RTRL within the reverse-mode automatic differentiation}\label{sec:forward_reverse}
For a function $f: \mathbb{R}^n \rightarrow \mathbb{R}^m$, we have the Jacobian $\partial f(x) \in \mathbb{R}^{m \times n}$ and we can calculate this Jacobian in two ways: forward-mode or reverse-mode differentiation. In the forward-mode differentiation, the chain rule is applied to each operation while traversing the computational graph in the forward pass~\citep{baydin2018automatic}. While computing the derivatives during the forward pass is appealing, we need to do $n$ forward passes to get the full Jacobian, where each forward pass would allow us to compute the derivative w.r.t only one of the inputs. i.e., with forward mode differentiation, we evaluate the Jacobian one column at a time. 
As a result, forward-mode differentiation is inefficient for neural networks; neural networks map from learnable parameters, which can be in millions, to a loss function, hence, have very wide jacobians, $n \gg m$. 

Reverse-mode differentiation, on the other hand, offers a more efficient approach. It allows us to evaluate the Jacobian one row at a time, which is particularly advantageous for neural networks. This efficiency comes at the cost of two passes through the network: a forward pass for function evaluation and a backward pass for derivative evaluation~\citep{baydin2018automatic}.

RTRL is an instance of forward-mode differentiation; during the forward pass, the gradient information is evaluated along with the recurrent function computation. As a result, there is no need to perform a backward pass for the recurrent component. To efficiently use a recurrent layer with RTRL within a larger neural network, we combine RTRL for the recurrent layer with the reverse mode for the rest of the network. We use a stop gradient operation on the recurrent layer hidden state and do a normal reverse-mode differentiation. Due to the stop gradient operation, the gradient from the reverse mode assumes no time dependencies between the recurrent states. We then use the gradient traces calculated during the forward pass of the recurrent layer to correct the gradient from the reverse mode and account for the time dependencies between the recurrent states~\cite{jax2018github}. 
\footnote{This can be implemented by defining a custom vjp for the recurrent layer, which modifies the backward pass for the recurrent layer to include the gradient traces.}

\section{Additional Details on Trace Conditioning Experiments} \label{app_trace_env}

In animal learning, \emph{Trace conditioning} is a type of experiment where animals predict the occurrence of a stimulus (e.g., food), based on the occurrence of another stimulus like a tone. There is no prior connection between the two stimuli. However, after enough repetitions of pairing them together---playing the tone and then serving the food---the animal learns to anticipate food arrival when it hears the tone~\cite{pavlov2010conditioned}. We use an open-source {\em trace conditioning} benchmark introduced in prior work \cite{rafiee2020eye}. Two signals appear sequentially: the Conditional Stimulus (CS) and the Unconditional Stimulus (US). The CS is the trigger signal, similar to the tone, and the US is the signal of interest and appears several time steps after the CS, similar to the food. The agent also observes several distractor signals which are uncorrelated with the CS and US; the agent must learn ignore them and focus only on predicting the US.\@

The agent's objective is to predict the onset of the US, which we model as a prediction of the discounted sum of the future US, $G_t$:
\begin{equation}
   G_t \doteq \sum_{k=0}^{\infty} \gamma^{k} {\text{US}}_{t+k+1},
\end{equation}
where $\gamma$ is a discount factor determining the prediction horizon. This problem is challenging because the CS appears, then disappears, and sometime later the US appears; the agent must construct an internal state that represents the time period between the two signals. 

\subsection{Additional Trace Conditioning Experiments} \label{app_trace_exp}

\textbf{Comparison to Other RTRL-based Architectures:}
We now compare RTUs to other RTRL-based approaches with similar architectures: an online version of LRU \citep{zucchet2023online} and a vanilla block diagonal RNN. The block diagonal is a recurrent formulation similar to RTU but ignores the relation between the learnable parameters. i.e., replaces $\mbf{c}_k$ in~\ref{eq:lambda_matrix} with
$\scriptsize{\mbf{c}_k = \begin{bmatrix}
      a_k & b_k\\ 
      c_k & d_k
    \end{bmatrix}}$.
  
\begin{figure}
\vspace{-0.7cm}
  \centering
  \includegraphics[width=0.6\columnwidth]{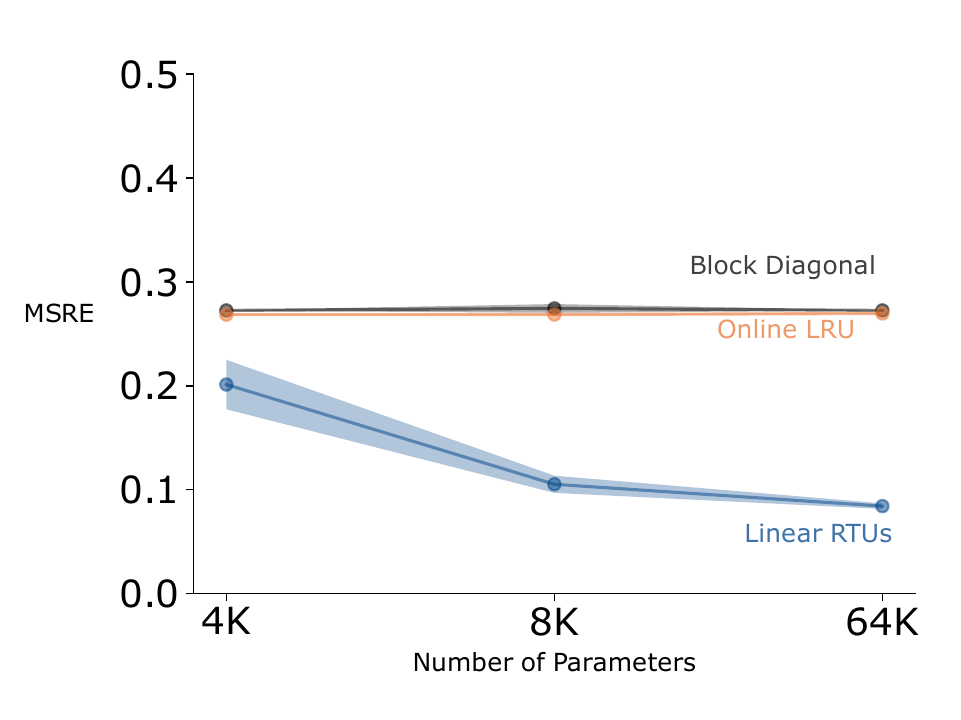}
  \caption{Comparison to a block diagonal RNN.}
  \label{fig:block_diag}
\end{figure}
The results in figure~\ref{fig:block_diag} indicate that these seemingly small differences between the diagonal RNNs can result in significantly different behavior. RTUs outperform online LRUs, with the differences discussed in-depth in Section \ref{ref_contrast}. RTUs also outperform the block diagonal RNN. We emphasized using real-valued diagonals implicitly assumes symmetric matrices, but that is for a single real-value. This block diagonal has more representational capacity than the RTU. This result suggests it is beneficial for learning to enforce these constraints on the learnable parameters, that they correspond to the rotational representation of complex numbers. 

\textbf{On the role of RTRL in RTUs:}
To highlight the role of RTRL in RTUs, we evaluated the performance of both linear and non-linear RTUs with T-BPTT. In this experiment, we all agents use the same number of parameters; the only difference is whether they use RTRL or BPTT. 

Figure~\ref{fig:rtu_w_bptt} summarises the results of this experiment. We can see that the performance of T-BPTT approaches the performance of RTRL as the truncation length increases to cover the whole context of the task.

\begin{figure}[h]
  \begin{center}
    \includegraphics[width=0.5\textwidth]{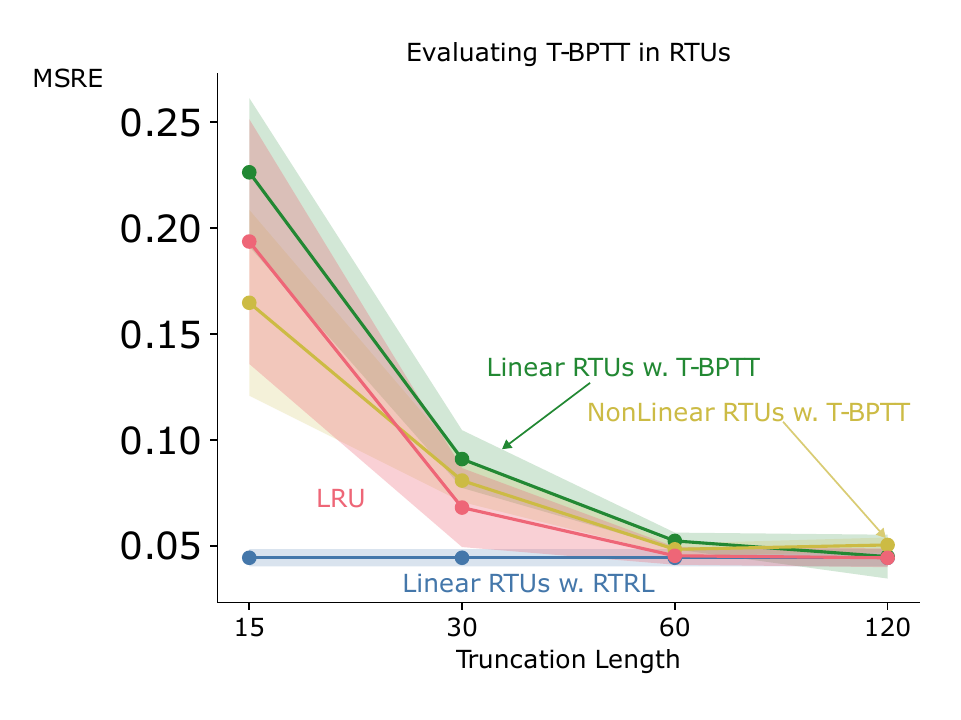}
    \caption{Evaluating RTUs with BPTT.}
    \label{fig:rtu_w_bptt}
  \end{center}
\end{figure}

%\begin{figure}[ht]
%  \centering
%  \includegraphics[width=\columnwidth]{}
%  \caption{Evaluating RTUs with BPTT.} 
%  \label{fig:rtu_w_bptt}
%\end{figure}

\subsection{Trace Conditioning Experiments Details}
All agents have one recurrent layer, either an RTU or a GRU, and one linear layer. At each time step $t$, the agent passes the observation $\mbf{o}_t$ to the recurrent layer, which outputs the recurrent state, the agent state. The recurrent state is then passed to the linear layer generating the prediction.
For each agent, we swept over the learning rate $\alpha$ used to update the network parameters, $\alpha \in \{10^{-1},10^{-2},10^{-3},10^{-4},10^{-5},10^{-6}\}$, and averaged the performance for each learning rate over $5$ independent runs. We then selected the best-performing learning rate for each agent and ran $30$ independent runs using it. For all the experiments, we ran the agents for 2 million steps, and the performance was the mean squared prediction error averaged over the 2 million steps.

\subsection{Learning Rate Sensitivity}\label{app:lr_sensitivity_tc}
We show the learning rate sensitivity for all agents in the animal learning benchmark.
\begin{figure*}[htb!]
    \vskip 0.2in
    %\begin{center}
        \includegraphics[width=\columnwidth]{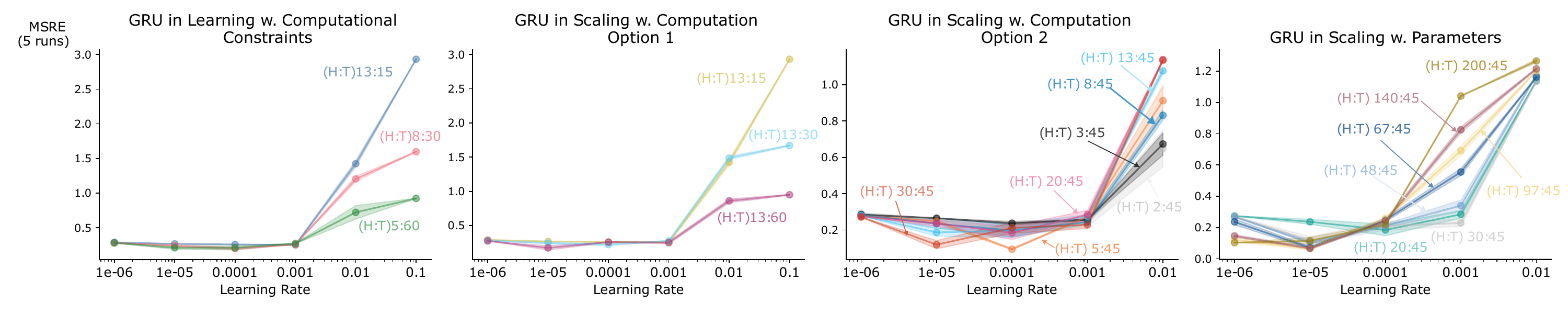}
        \caption{GRUs used in the animal learning benchmark. The \emph{(H: T)} in the label refers to the (hidden dimension: truncation length) for the GRU.}
    %\end{center}
    \vskip -0.2in\label{fig:grus}
  \end{figure*}

  \begin{figure*}[htb!]
    \vskip 0.2in
    %\begin{center}
        \includegraphics[width=\columnwidth]{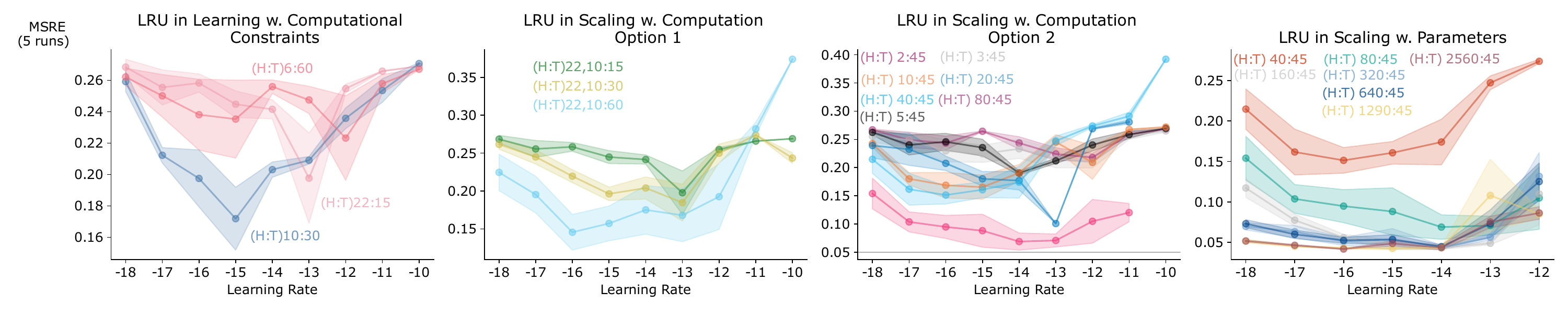}
        \caption{LRUs used in the animal learning benchmark. The \emph{(H: T)} in the label refers to the (hidden dimension: truncation length) for the GRU.}
    %\end{center}
    \vskip -0.2in\label{fig:lrus}
  \end{figure*}

  \begin{figure}[htb!]
    \vskip 0.2in
    %\begin{center}
        \includegraphics[width=\columnwidth]{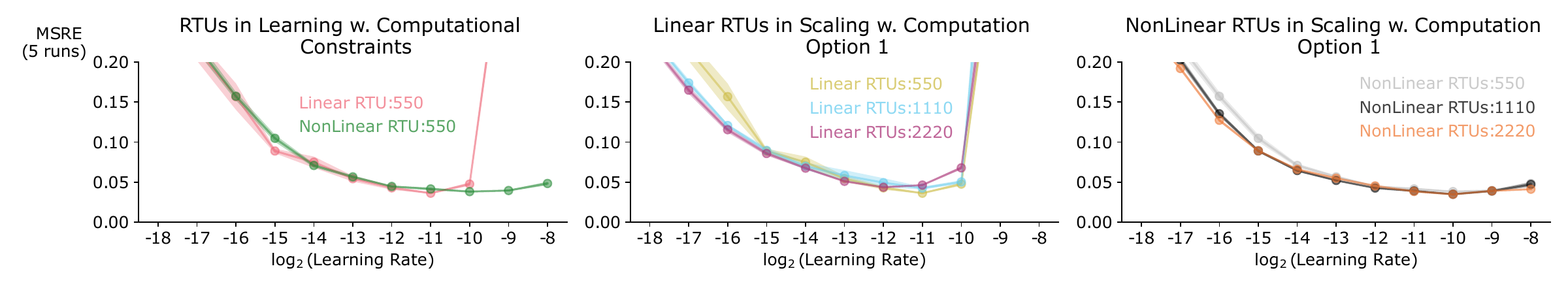}
        \caption{RTUs used in the animal learning benchmark. The number in the label refers to the number of hidden units in the RTU.}
    %\end{center}
    \vskip -0.2in\label{fig:RTUs_fig1_and_2}
  \end{figure}

  \begin{figure}[htb!]
    \vskip 0.2in
    %\begin{center}
        \includegraphics[width=\columnwidth]{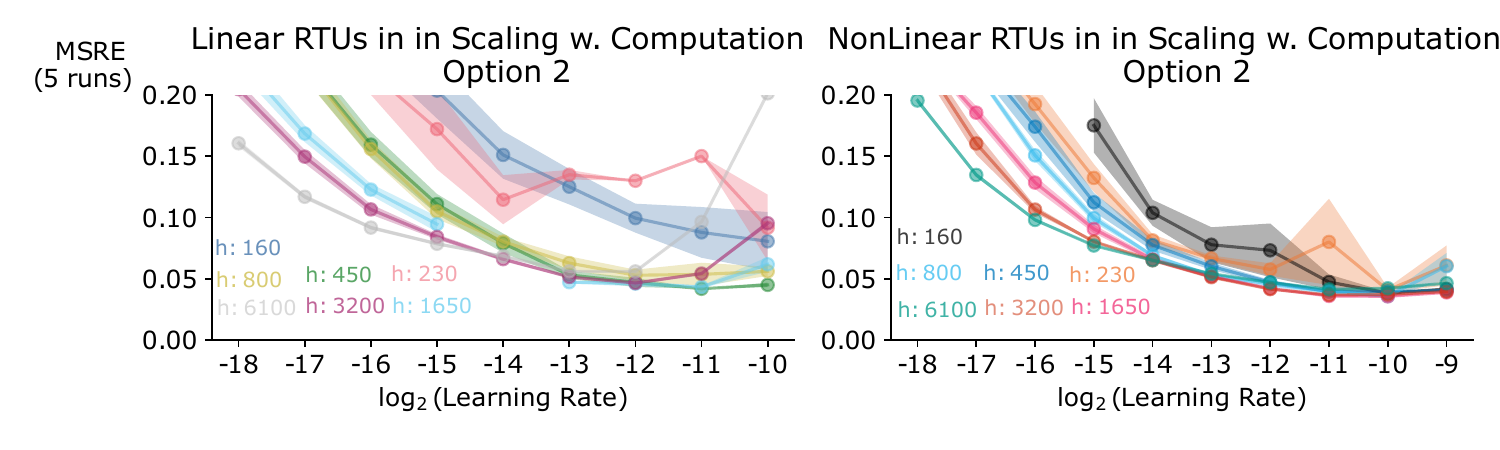}
        \caption{RTUs used in the animal learning benchmark.}
    %\end{center}
    \vskip -0.2in\label{fig:RTUs_fig3}
  \end{figure}
  \begin{figure}[htb!]
    \vskip 0.2in
    %\begin{center}
        \includegraphics[width=\columnwidth]{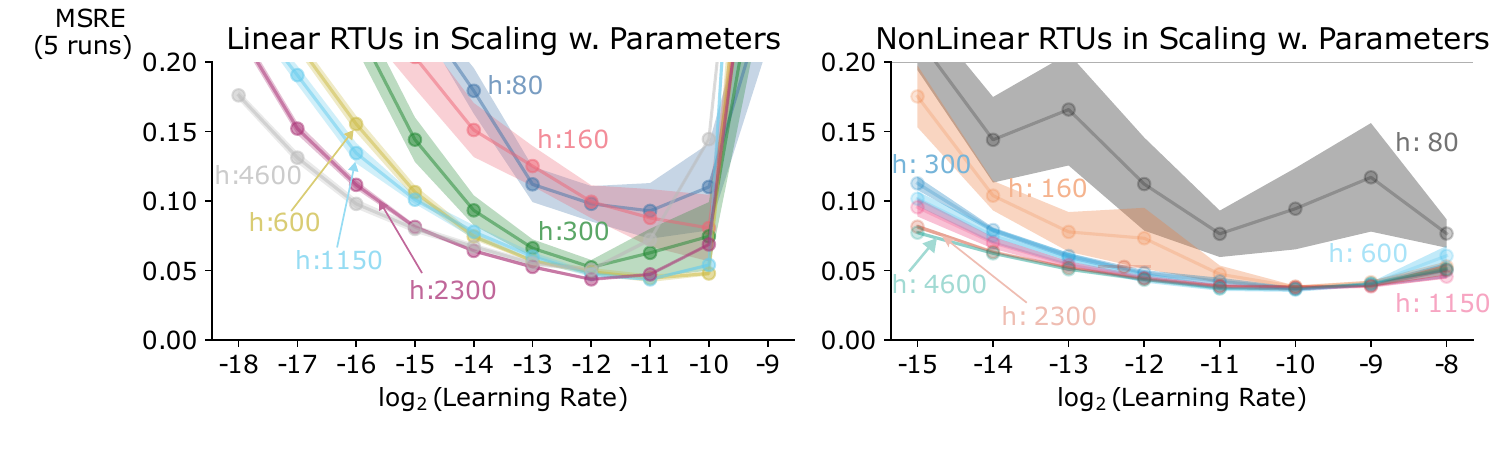}
        \caption{RTUs used in the animal learning benchmark.}
    %\end{center}
    \vskip -0.2in\label{fig:RTUs_fig4}
  \end{figure}

\section{Integrating Linear RTRL Methods with PPO}\label{app:staleness}
When performing batch updates, as with PPO, the RTRL gradients used to update the recurrent parameters will be stale, as they were calculated during the interaction with the environment w.r.t old policy and value parameters. One solution to mitigate the gradient staleness is to go through the whole trajectory after each epoch update and re-compute the gradient traces. However, this can be computationally expensive. 
In Algorithm~\ref{alg:rtrl_ppo}, we provide the pseudocode for integrating RTRL methods with PPO with optional steps for re-running the network to update the RTRL gradient traces, the value targets, and the advantage estimates. 
\begin{figure*}[h]
  %\vspace{-0.1cm}
  \begin{center}
  \includegraphics[width=0.9\textwidth]{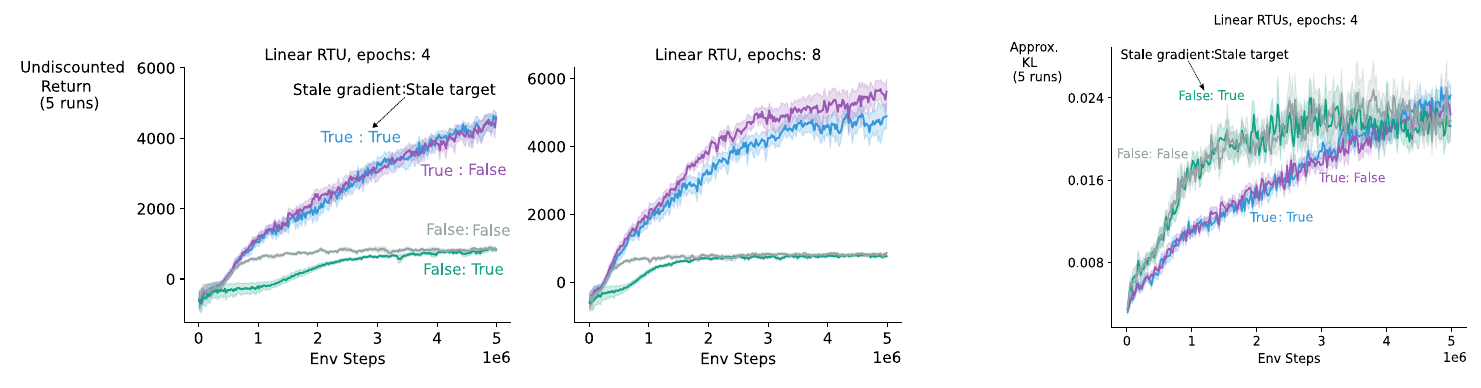}
  \caption{\label{fig:linear_rtu_staleness}The impact of stale gradients and stale targets when combining RTRL and PPO on Ant.\@}
  \end{center}
\end{figure*}

In the next experiment, we investigate the effect of the gradient staleness in RTRL when combined with PPO and how this staleness interacts with the targets and advantage estimates.
% after the first epoch updates, the value targets and advantage estimates will also be stale. 
To understand this interaction, we evaluate all combinations of stale gradient and stale targets with increasing the number of epoch updates. We perform this analysis on the Ant-P environment from the Mujoco POMDP benchmark \citep{ni2022recurrent,han2019variational,meng2021memory,ni2023transformers}. 
%\begin{figure*}[h]
%  \begin{center}
%  \includegraphics[width=\textwidth]{}
%  \caption{Approximate KL divergence between the old and new policy}
%  \end{center}\label{fig:approx_kl}
%\end{figure*}
Surprisingly, Figure~\ref{fig:linear_rtu_staleness} shows that using a stale gradient results in better performance with RTUs than re-computing the gradient traces. This performance improvement is also consistent when we increase the number of epochs from 4 to 8. It also shows that re-computing the value targets and advantage estimates has a minimal effect on the performance. We repeated the same experiments for NonLinear RTUs and Online LRU with consistent results in figures~\ref{fig:approx_kl_nrtu} and~\ref{fig:approx_kl_lru}. 

One hypothesis for the superior performance of stale gradients is that the staleness is helping PPO maintain the trust region. We investigate this hypothesis by measuring the KL divergence between the policy used to collect the trajectory and the most recent policy. We use the following estimate for the KL divergence between the two policies as $(r-1) - \log(r)$, where $r =(\pi_{\theta_{new}})/(\pi_{\theta_{old}})$. The rightmost subplot of Figure~\ref{fig:linear_rtu_staleness} shows that at the beginning of learning, agents with stale gradients move away from the old policy more slowly than agents with fresh gradients; perhaps stale gradients help with maintaining the trust region. However, this hypothesis still needs more investigation in future work. %\esraa{is this Conclusion accurate?}

\begin{figure}[h]
  %\vspace{-0.2cm}
  \begin{center}
    \begin{minipage}[b]{0.5\linewidth}
      \centering
      \includegraphics[width=\linewidth]{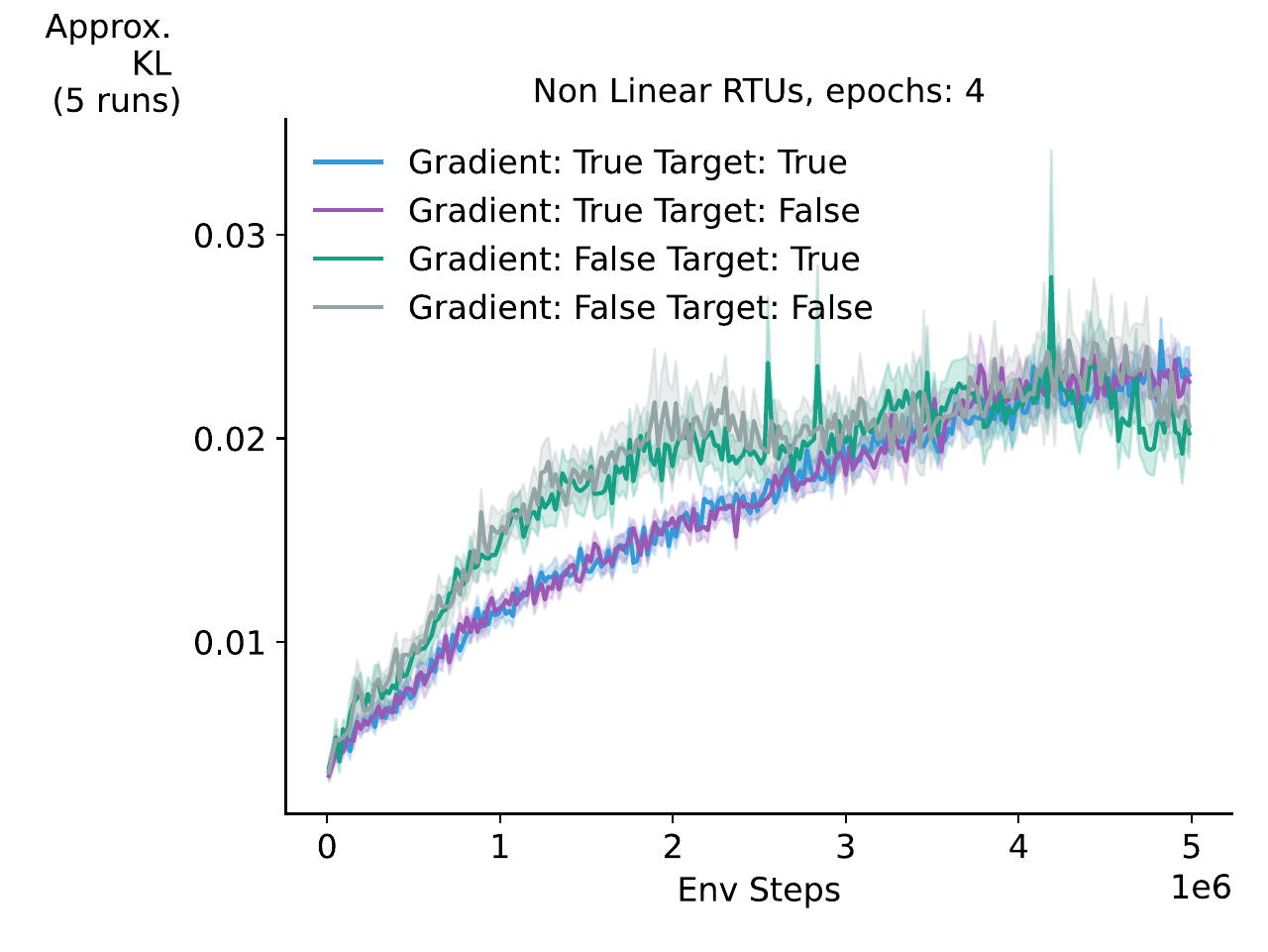}
    \end{minipage}%
    \begin{minipage}[b]{0.5\linewidth}
      \centering
      \includegraphics[width=\linewidth]{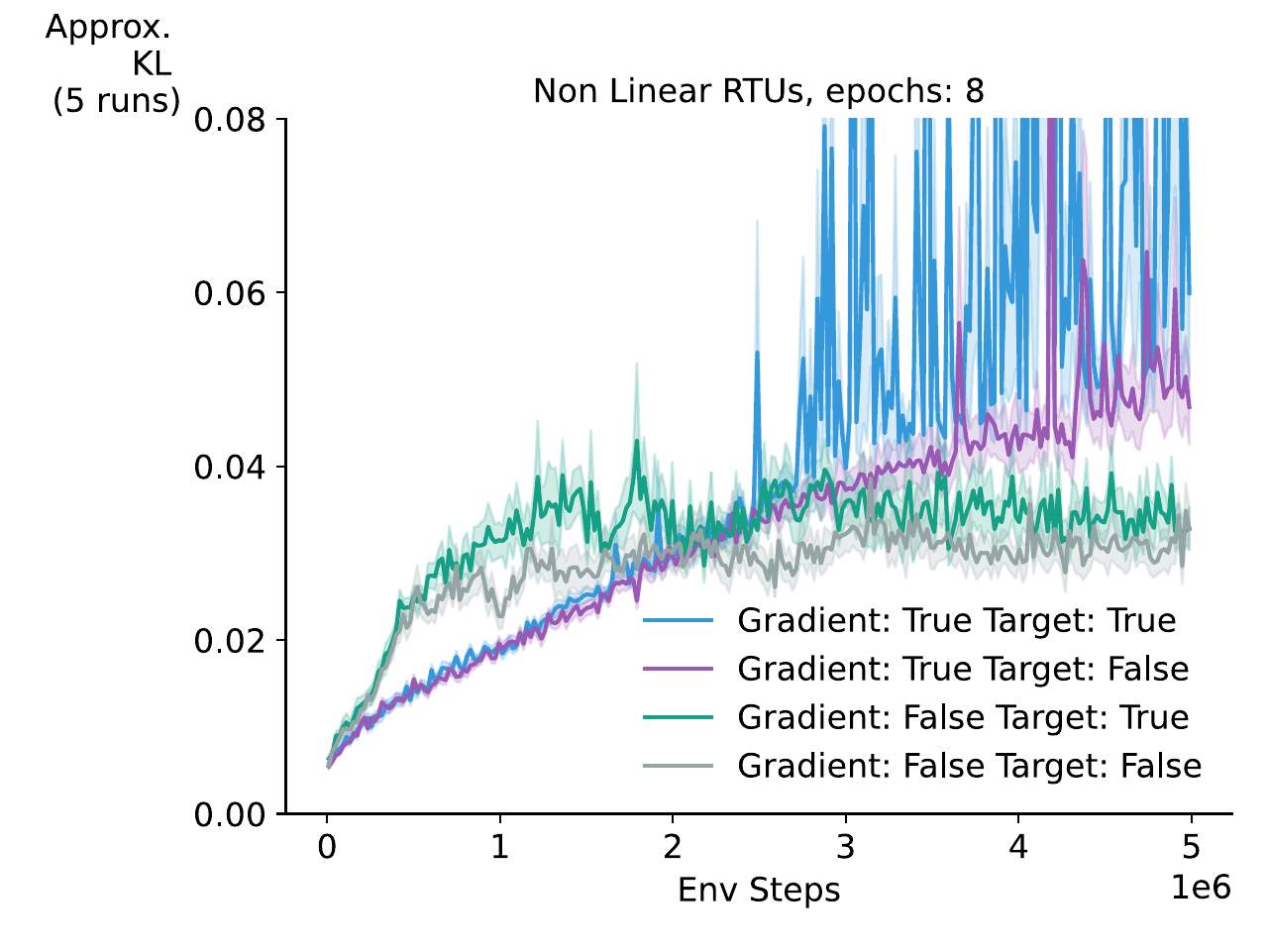}
    \end{minipage}
    %\vspace{-0.5cm}
    \caption{(a) Approximate KL divergence for NonLinear RTU with $4$ epochs.(b) Approximate KL divergence for NonLinear RTU with $8$ epochs.}
    \label{fig:approx_kl_nrtu}
  \end{center}
  %\vspace{-0.5cm}
\end{figure}

\begin{figure}[h]
  %\vspace{-0.2cm}
  \begin{center}
    \begin{minipage}[b]{0.5\linewidth}
      \centering
      \includegraphics[width=\linewidth]{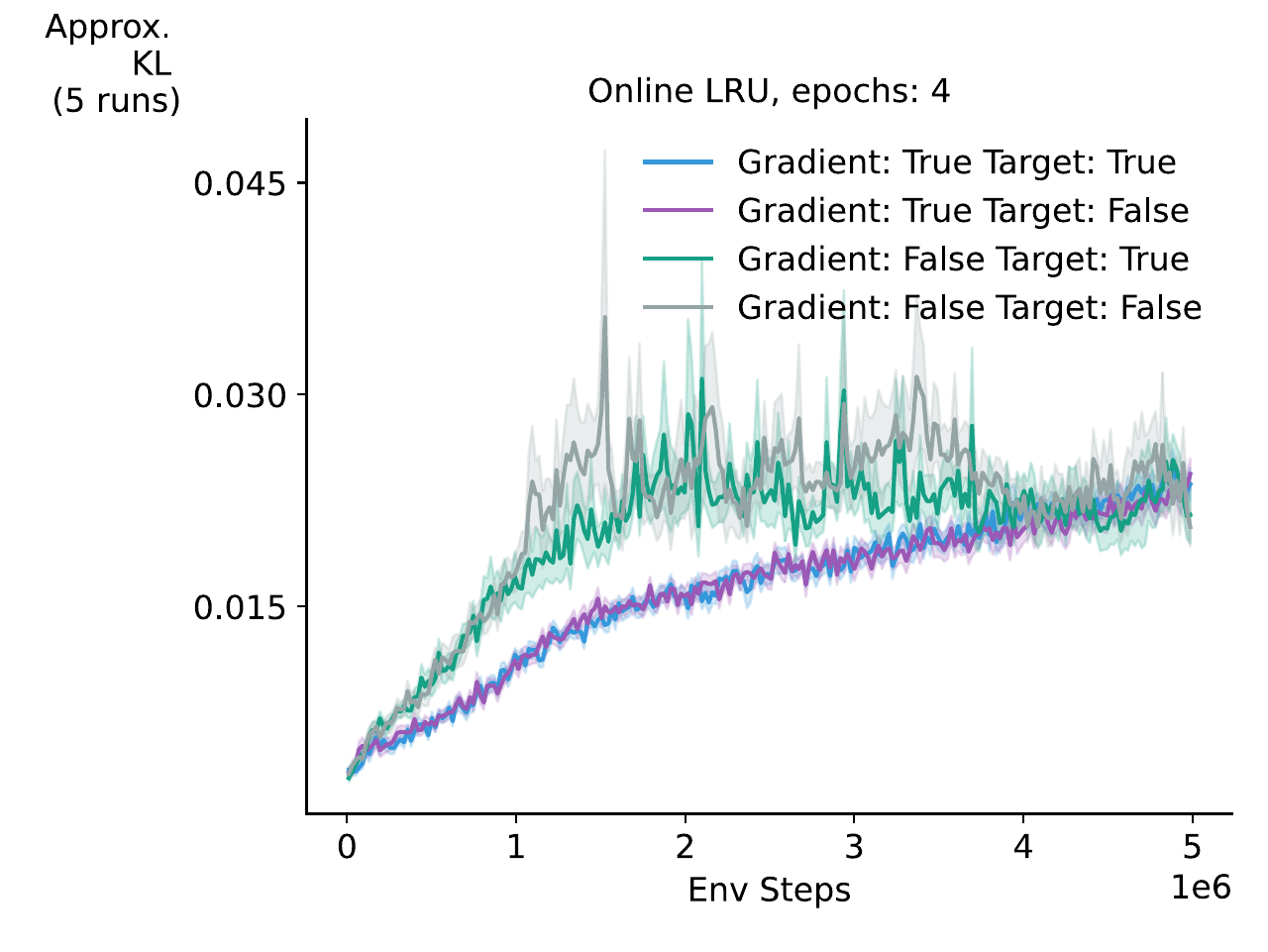}
    \end{minipage}%
    \begin{minipage}[b]{0.5\linewidth}
      \centering
      \includegraphics[width=\linewidth]{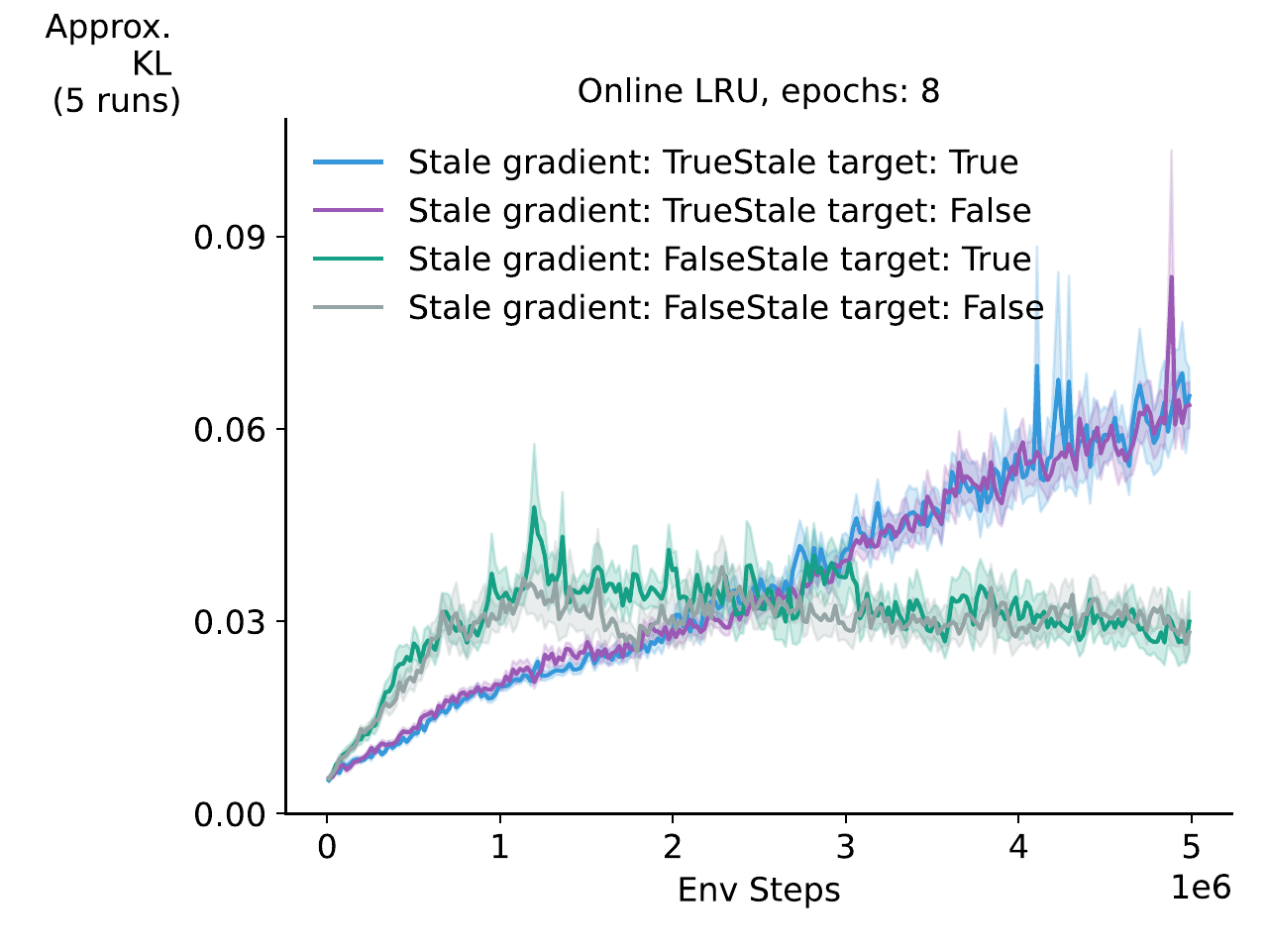}
    \end{minipage}
    %\vspace{-0.5cm}
    \caption{(a) Approximate KL divergence for LRU with $4$ epochs.(b) Approximate KL divergence for LRU with $8$ epochs.}
    \label{fig:approx_kl_lru}
  \end{center}
  %\vspace{-0.5cm}
\end{figure}

%Algorithm~\ref{alg:rtrl_ppo} highlights different options when integrating PPO with RTRL methods
\begin{algorithm}[!hbt] 
  \caption{Pseudocode for integrating RTRL methods with PPO}\label{alg:rtrl_ppo}
  \begin{algorithmic}
      \STATE{\textbf{Inputs:} a differentiable policy parametrization $\pi(a|s,\mbf{W}_p)$.}
      \STATE{\textbf{Inputs:} a differentiable state-value function parametrization $\hat{v}(s,\mbf{W}_v)$.}
      \LOOP{}
          \STATE{Generate a trajectory using the current policy $\mbf{O}_0, A_0 , R_1, \ldots , \mbf{O}_M, A_M , R_M$,}
          \STATE{Store the transition and the gradient traces for the recurrent components along the way for $i = 0, \ldots, M$}
          \STATE{Compute the advantage estimates and the target value for each timestep $t$}
          \FOR{epoch = 1, \ldots, k}
              \STATE{Divide the trajectory into minibatches and shuffle them.}
              \FOR{\text{{minibatch = 1, \ldots, m }}}
                  \STATE{Calculate PPO loss}
                  \STATE{Perform a gradient step with AutoDiff and correct it with the RTRL saved gradient as discussed in~\ref{sec:forward_reverse}}. %({\color{red}Martha: explain this more}}
              \ENDFOR{}
          \STATE{\text{\color{blue} [Optional] Re-run network to update hidden states and the gradient traces for the trajectory.}}
          \STATE{\text{\color{blue} [Optional] Update value targets and advantages estimates for the trajectory.}}
          \ENDFOR{}
      \ENDLOOP{}
  \end{algorithmic}
\end{algorithm}

%\begin{figure}[h]
%  \vspace{-0.2cm}
%    \begin{center}
%    \centerline{\includegraphics[width=\columnwidth]{}}
%    \vspace{-0.5cm}
%    \caption{}
%    \end{center}
%  \vspace{-0.5cm}
%  \end{figure}

\section{More Details on the Memory-Based Control Experiments}\label{app:brax}
\begin{figure}[h]
    \centering
    \includegraphics[width=0.8\columnwidth]{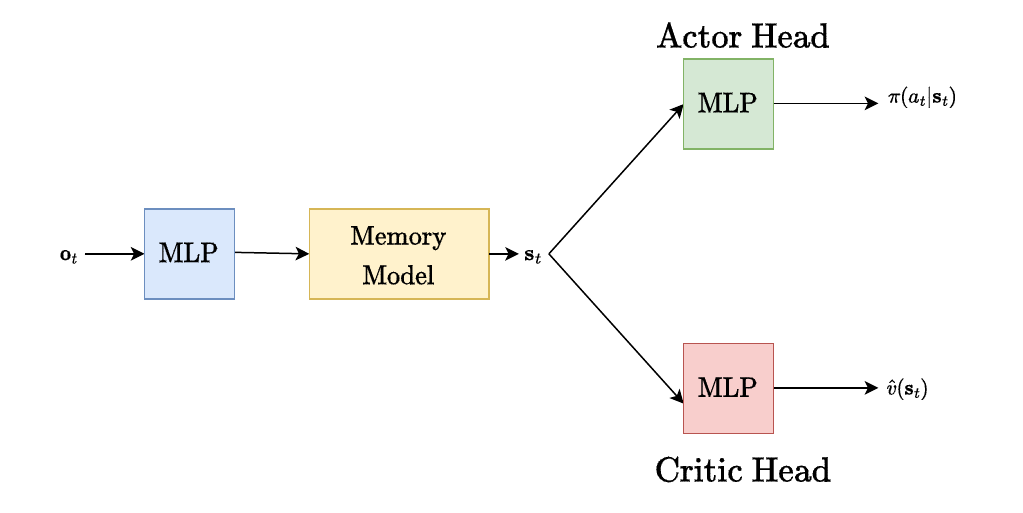}
    \caption{Agents architectures used in our control experiments.}
    \label{fig:ac}
    %\vspace{-1cm}
\end{figure}
In our implementations, we use a shared representation learning network followed by two MLPs for the actor and the critic heads, as illustrated in Figure~\ref{fig:ac}. The shared representations consist of a feedforward layer with $64$ hidden units and memory components: an RTU, LRU, or a GRU. 
The actor and the critic's heads consist of two feedforward layers with tanh activation function.

\textbf{Additional Mujoco Results:}
\begin{figure}[htb!]
  %\vskip 0.2in
  \begin{center}
  \includegraphics[width=\textwidth]{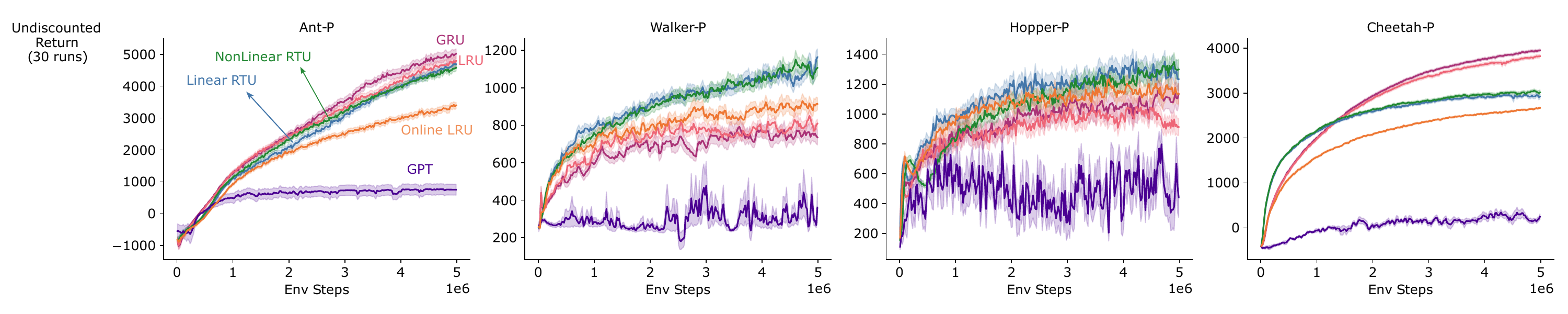}
  \caption{Additional results on Mujoco-P, where we allow GRU and LRU to use a larger truncation length than needed to solve these tasks. We also show results for GPT2.}\label{fig:large_t_p_results}
  \end{center}
  %\vspace{-1cm}
  %\vskip -0.2in\label{fig:large_t}
\end{figure}

\begin{figure}[htb!]
  %\vskip 0.2in
  \begin{center}
  \includegraphics[width=\textwidth]{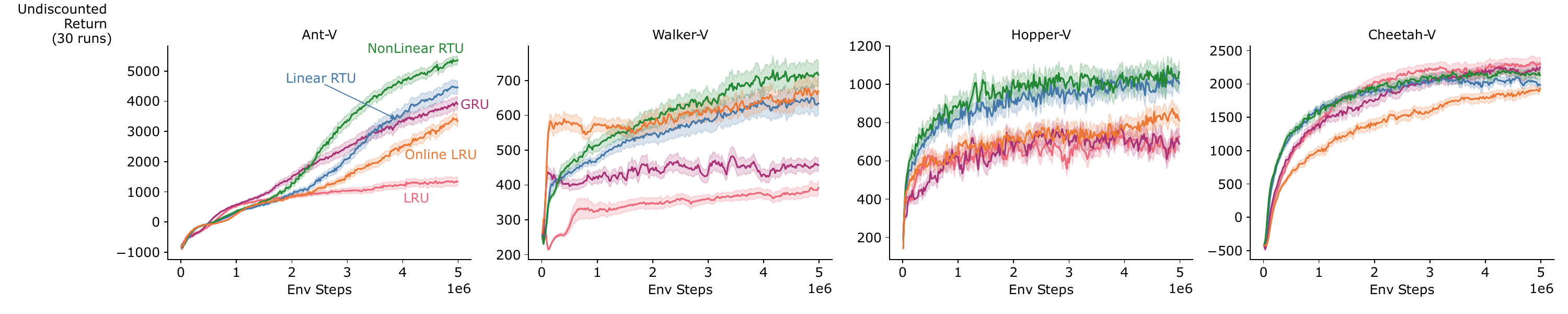}
  \caption{Additional results on Mujoco-P, where we allow GRU and LRU to use a larger truncation length than needed to solve these tasks.}\label{fig:large_t_v_results}
  \end{center}
  %\vspace{-1cm}
  %\vskip -0.2in\label{fig:large_t}
\end{figure}
In Figures~\ref{fig:large_t_p_results} and~\ref{fig:large_t_v_results}, we set the truncation length for GRU and LRU to be $64$, which is larger than needed to solve the Mujoco POMDP tasks. These results show that even when the truncation length is larger than needed, RTUs still outperform T-BPTT baselines. 
We also show that the transformer-based models, GPT2, perform worse than RNN-based models. This is consistent with previous work suggesting that transformers might not be suitable for RL tasks~\citep{ni2023transformers}.

\textbf{Learning Rate Sensitivity:}
Figures~\ref{fig:lru_control_sweep},~\ref{fig:rtu_control_sweep},~\ref{fig:gru_control_sweep}, and~\ref{fig:gpt_control_sweep} show the learning rate sensitivity for all agents in the Mujoco POMDP benchmark.
Finally, we used the default hyper-parameters for PPO \citep{schulman2017proximal} indicated in Table~\ref{Tab: ppo_hypers} for all agents.
\begin{figure}[htb!]
    %\vskip 0.2in
    %\begin{center}
    \includegraphics[width=\textwidth]{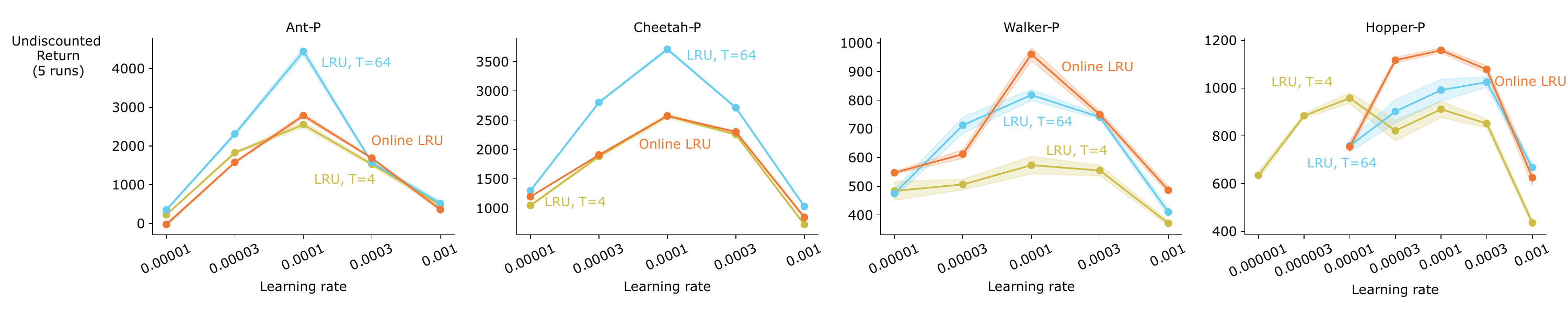}
    \caption{Learning rate sweep for LRU in the control experiments.}\label{fig:lru_control_sweep}
    %\vspace{-1cm}
    %\end{center}
    %\vskip -0.2in
\end{figure}
\begin{figure}[htb!]
    %\vskip 0.2in
    %\begin{center}
    \includegraphics[width=\textwidth]{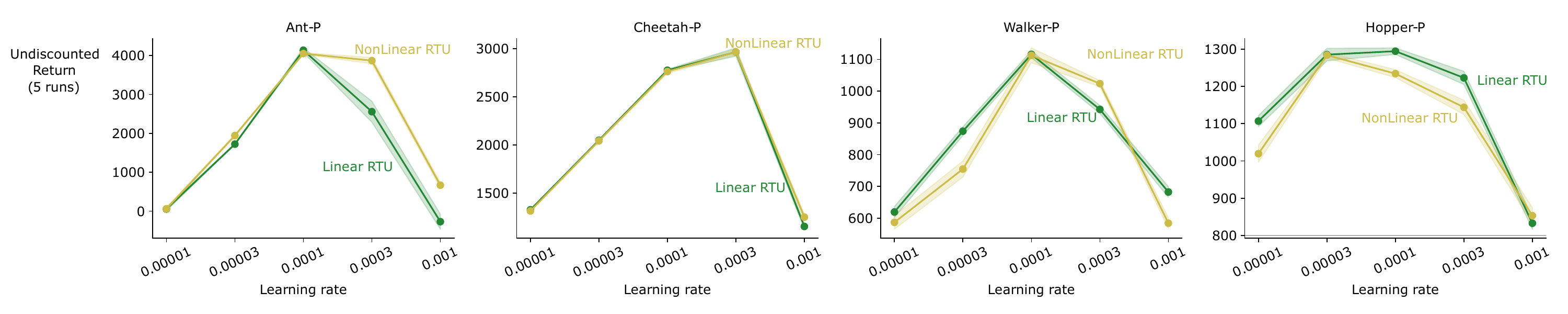}
    \caption{Learning rate sweep for RTUs in the control experiments.}\label{fig:rtu_control_sweep}
    %\end{center}
    %\vskip -0.2in
  \end{figure}

\begin{figure}[H]
  %\vskip 0.2in
  %\begin{center}
  \includegraphics[width=\textwidth]{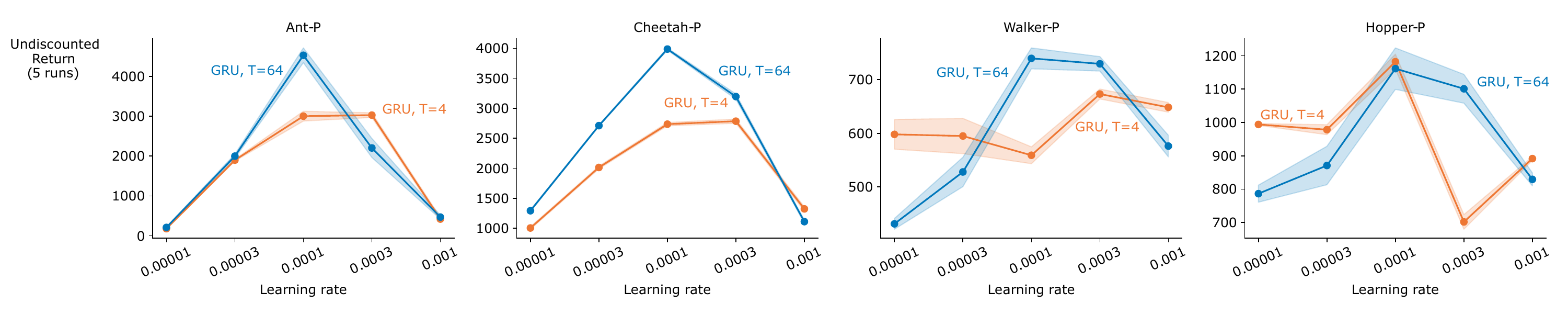}
  \caption{Learning rate sweep for GRUs in the control experiments.}\label{fig:gru_control_sweep}
  %\end{center}
  %\vskip -0.2in
\end{figure}

\begin{figure}[h]
    %\vskip 0.2in
    \begin{center}
    \includegraphics[width=\textwidth]{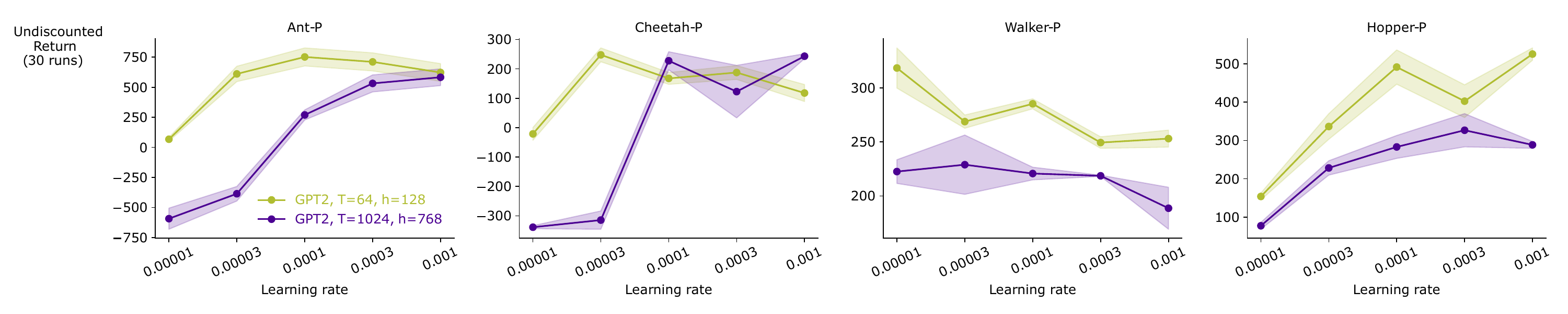}
    \caption{Learning rate sweep for GPT in the control experiments.}\label{fig:gpt_control_sweep}
    \end{center}
    %\vskip -0.2in
  \end{figure}

\begin{table}[htb!]
    \begin{center}
    \begin{tabular}{@{}ll@{}}
    \toprule
    Name       & Value                                \\ \midrule
    Buffer size                        & $2048$                               \\
    Num epochs                         & $10$                                 \\
    Number of Mini-batches             & $32$                               \\
    GAE,$\lambda$                      & $0.95$                               \\
    Discount factor, $\gamma$           & $0.99$                               \\
    policy clip parameter              & $0.2$                                \\
    Value loss clip parameter          & $0.5$                                \\
    Gradient clip parameter            & $0.5$                                \\
    Optimizer                          & Adam                                 \\
    Optimizer step size                & $[1e-05,3e-05,1e-04,3e-04,1e-03]$ \\ \bottomrule
    \end{tabular}
    \end{center}
    \caption{\label{Tab: ppo_hypers} Hyper Parameters for PPO\@.}
    \end{table}

\section{Compute resources}
We ran the Mujoco-P, Mujoco-V on NVIDIA P100 GPU. Each of the Mujoco-P and Mujoco-V trials took around $~30$ minutes to complete on a single GPU. 
For the POPGym experiments and animal learning experiments, we used a large CPU cluster.  Each trial of the POPGym experiments took around $~2$ hours to complete. 
While each run of animal learning took around $~15$ minutes to complete on a single CPU with memory less than $1$ GB.

%\input{appendix/appendix_flops.tex}
%%%%%%%%%%%%%%%%%%%%%%%%%%%%%%%%%%%%%%%%%%%%%%%%%%%%%%%%%%%%
\newpage

\section*{NeurIPS Paper Checklist}

\begin{enumerate}

\item {\bf Claims}
    \item[] Question: Do the main claims made in the abstract and introduction accurately reflect the paper's contributions and scope?
    \item[] Answer: \answerYes{} % Replace by \answerYes{}, \answerNo{}, or \answerNA{}.
    \item[] Justification: We provide detailed theoretical and experimental analysis that supports our claims.%\justificationTODO{}
    \item[] Guidelines:
    \begin{itemize}
        \item The answer NA means that the abstract and introduction do not include the claims made in the paper.
        \item The abstract and/or introduction should clearly state the claims made, including the contributions made in the paper and important assumptions and limitations. A No or NA answer to this question will not be perceived well by the reviewers. 
        \item The claims made should match theoretical and experimental results, and reflect how much the results can be expected to generalize to other settings. 
        \item It is fine to include aspirational goals as motivation as long as it is clear that these goals are not attained by the paper. 
    \end{itemize}

\item {\bf Limitations}
    \item[] Question: Does the paper discuss the limitations of the work performed by the authors?
    \item[] Answer: \answerYes{} % Replace by \answerYes{}, \answerNo{}, or \answerNA{}.
    \item[] Justification: Limitations are discussed in the conclusion section.%\
    \item[] Guidelines:
    \begin{itemize}
        \item The answer NA means that the paper has no limitation while the answer No means that the paper has limitations, but those are not discussed in the paper. 
        \item The authors are encouraged to create a separate "Limitations" section in their paper.
        \item The paper should point out any strong assumptions and how robust the results are to violations of these assumptions (e.g., independence assumptions, noiseless settings, model well-specification, asymptotic approximations only holding locally). The authors should reflect on how these assumptions might be violated in practice and what the implications would be.
        \item The authors should reflect on the scope of the claims made, e.g., if the approach was only tested on a few datasets or with a few runs. In general, empirical results often depend on implicit assumptions, which should be articulated.
        \item The authors should reflect on the factors that influence the performance of the approach. For example, a facial recognition algorithm may perform poorly when image resolution is low or images are taken in low lighting. Or a speech-to-text system might not be used reliably to provide closed captions for online lectures because it fails to handle technical jargon.
        \item The authors should discuss the computational efficiency of the proposed algorithms and how they scale with dataset size.
        \item If applicable, the authors should discuss possible limitations of their approach to address problems of privacy and fairness.
        \item While the authors might fear that complete honesty about limitations might be used by reviewers as grounds for rejection, a worse outcome might be that reviewers discover limitations that aren't acknowledged in the paper. The authors should use their best judgment and recognize that individual actions in favor of transparency play an important role in developing norms that preserve the integrity of the community. Reviewers will be specifically instructed to not penalize honesty concerning limitations.
    \end{itemize}

\item {\bf Theory Assumptions and Proofs}
    \item[] Question: For each theoretical result, does the paper provide the full set of assumptions and a complete (and correct) proof?
    \item[] Answer: \answerYes{} % Replace by \answerYes{}, \answerNo{}, or \answerNA{}.
    \item[] Justification: Details of all proofs are provided in the appendix and referenced in the main paper.%\justificationTODO{}
    \item[] Guidelines:
    \begin{itemize}
        \item The answer NA means that the paper does not include theoretical results. 
        \item All the theorems, formulas, and proofs in the paper should be numbered and cross-referenced.
        \item All assumptions should be clearly stated or referenced in the statement of any theorems.
        \item The proofs can either appear in the main paper or the supplemental material, but if they appear in the supplemental material, the authors are encouraged to provide a short proof sketch to provide intuition. 
        \item Inversely, any informal proof provided in the core of the paper should be complemented by formal proofs provided in appendix or supplemental material.
        \item Theorems and Lemmas that the proof relies upon should be properly referenced. 
    \end{itemize}

    \item {\bf Experimental Result Reproducibility}
    \item[] Question: Does the paper fully disclose all the information needed to reproduce the main experimental results of the paper to the extent that it affects the main claims and/or conclusions of the paper (regardless of whether the code and data are provided or not)?
    \item[] Answer: \answerYes{} % Replace by \answerYes{}, \answerNo{}, or \answerNA{}.
    \item[] Justification: all hyper-parameters and experiment details are shared in the appendix and referenced in the main paper.%\justificationTODO{}
    \item[] Guidelines:
    \begin{itemize}
        \item The answer NA means that the paper does not include experiments.
        \item If the paper includes experiments, a No answer to this question will not be perceived well by the reviewers: Making the paper reproducible is important, regardless of whether the code and data are provided or not.
        \item If the contribution is a dataset and/or model, the authors should describe the steps taken to make their results reproducible or verifiable. 
        \item Depending on the contribution, reproducibility can be accomplished in various ways. For example, if the contribution is a novel architecture, describing the architecture fully might suffice, or if the contribution is a specific model and empirical evaluation, it may be necessary to either make it possible for others to replicate the model with the same dataset, or provide access to the model. In general. releasing code and data is often one good way to accomplish this, but reproducibility can also be provided via detailed instructions for how to replicate the results, access to a hosted model (e.g., in the case of a large language model), releasing of a model checkpoint, or other means that are appropriate to the research performed.
        \item While NeurIPS does not require releasing code, the conference does require all submissions to provide some reasonable avenue for reproducibility, which may depend on the nature of the contribution. For example
        \begin{enumerate}
            \item If the contribution is primarily a new algorithm, the paper should make it clear how to reproduce that algorithm.
            \item If the contribution is primarily a new model architecture, the paper should describe the architecture clearly and fully.
            \item If the contribution is a new model (e.g., a large language model), then there should either be a way to access this model for reproducing the results or a way to reproduce the model (e.g., with an open-source dataset or instructions for how to construct the dataset).
            \item We recognize that reproducibility may be tricky in some cases, in which case authors are welcome to describe the particular way they provide for reproducibility. In the case of closed-source models, it may be that access to the model is limited in some way (e.g., to registered users), but it should be possible for other researchers to have some path to reproducing or verifying the results.
        \end{enumerate}
    \end{itemize}

\item {\bf Open access to data and code}
    \item[] Question: Does the paper provide open access to the data and code, with sufficient instructions to faithfully reproduce the main experimental results, as described in supplemental material?
    \item[] Answer: \answerYes{} % Replace by \answerYes{}, \answerNo{}, or \answerNA{}.
    \item[] Justification: A link for a public repo is provided.%\justificationTODO{}
    \item[] Guidelines:
    \begin{itemize}
        \item The answer NA means that paper does not include experiments requiring code.
        \item Please see the NeurIPS code and data submission guidelines (\url{https://nips.cc/public/guides/CodeSubmissionPolicy}) for more details.
        \item While we encourage the release of code and data, we understand that this might not be possible, so “No” is an acceptable answer. Papers cannot be rejected simply for not including code, unless this is central to the contribution (e.g., for a new open-source benchmark).
        \item The instructions should contain the exact command and environment needed to run to reproduce the results. See the NeurIPS code and data submission guidelines (\url{https://nips.cc/public/guides/CodeSubmissionPolicy}) for more details.
        \item The authors should provide instructions on data access and preparation, including how to access the raw data, preprocessed data, intermediate data, and generated data, etc.
        \item The authors should provide scripts to reproduce all experimental results for the new proposed method and baselines. If only a subset of experiments are reproducible, they should state which ones are omitted from the script and why.
        \item At submission time, to preserve anonymity, the authors should release anonymized versions (if applicable).
        \item Providing as much information as possible in supplemental material (appended to the paper) is recommended, but including URLs to data and code is permitted.
    \end{itemize}

\item {\bf Experimental Setting/Details}
    \item[] Question: Does the paper specify all the training and test details (e.g., data splits, hyperparameters, how they were chosen, type of optimizer, etc.) necessary to understand the results?
    \item[] Answer: \answerYes{} % Replace by \answerYes{}, \answerNo{}, or \answerNA{}.
    \item[] Justification: all hyper-parameters and experiment details are shared in the appendix and referenced in the main paper.%\justificationTODO{}
    \item[] Guidelines:
    \begin{itemize}
        \item The answer NA means that the paper does not include experiments.
        \item The experimental setting should be presented in the core of the paper to a level of detail that is necessary to appreciate the results and make sense of them.
        \item The full details can be provided either with the code, in appendix, or as supplemental material.
    \end{itemize}

\item {\bf Experiment Statistical Significance}
    \item[] Question: Does the paper report error bars suitably and correctly defined or other appropriate information about the statistical significance of the experiments?
    \item[] Answer: \answerYes{} % Replace by \answerYes{}, \answerNo{}, or \answerNA{}.
    \item[] Justification: we report error bars/shaded standard error regions in all our experiments. %\justificationTODO{}
    \item[] Guidelines:
    \begin{itemize}
        \item The answer NA means that the paper does not include experiments.
        \item The authors should answer "Yes" if the results are accompanied by error bars, confidence intervals, or statistical significance tests, at least for the experiments that support the main claims of the paper.
        \item The factors of variability that the error bars are capturing should be clearly stated (for example, train/test split, initialization, random drawing of some parameter, or overall run with given experimental conditions).
        \item The method for calculating the error bars should be explained (closed form formula, call to a library function, bootstrap, etc.)
        \item The assumptions made should be given (e.g., Normally distributed errors).
        \item It should be clear whether the error bar is the standard deviation or the standard error of the mean.
        \item It is OK to report 1-sigma error bars, but one should state it. The authors should preferably report a 2-sigma error bar than state that they have a 96\% CI, if the hypothesis of Normality of errors is not verified.
        \item For asymmetric distributions, the authors should be careful not to show in tables or figures symmetric error bars that would yield results that are out of range (e.g. negative error rates).
        \item If error bars are reported in tables or plots, The authors should explain in the text how they were calculated and reference the corresponding figures or tables in the text.
    \end{itemize}

\item {\bf Experiments Compute Resources}
    \item[] Question: For each experiment, does the paper provide sufficient information on the computer resources (type of compute workers, memory, time of execution) needed to reproduce the experiments?
    \item[] Answer: \answerYes{} % Replace by \answerYes{}, \answerNo{}, or \answerNA{}.
    \item[] Justification: Compute details were provided in the appendix.%\justificationTODO{}
    \item[] Guidelines:
    \begin{itemize}
        \item The answer NA means that the paper does not include experiments.
        \item The paper should indicate the type of compute workers CPU or GPU, internal cluster, or cloud provider, including relevant memory and storage.
        \item The paper should provide the amount of compute required for each of the individual experimental runs as well as estimate the total compute. 
        \item The paper should disclose whether the full research project required more compute than the experiments reported in the paper (e.g., preliminary or failed experiments that didn't make it into the paper). 
    \end{itemize}
    
\item {\bf Code Of Ethics}
    \item[] Question: Does the research conducted in the paper conform, in every respect, with the NeurIPS Code of Ethics \url{https://neurips.cc/public/EthicsGuidelines}?
    \item[] Answer: \answerYes{} % Replace by \answerYes{}, \answerNo{}, or \answerNA{}.
    \item[] Justification: The research conducted follows the NeurIPS code of ethics.%\justificationTODO{}
    \item[] Guidelines:
    \begin{itemize}
        \item The answer NA means that the authors have not reviewed the NeurIPS Code of Ethics.
        \item If the authors answer No, they should explain the special circumstances that require a deviation from the Code of Ethics.
        \item The authors should make sure to preserve anonymity (e.g., if there is a special consideration due to laws or regulations in their jurisdiction).
    \end{itemize}

\item {\bf Broader Impacts}
    \item[] Question: Does the paper discuss both potential positive societal impacts and negative societal impacts of the work performed?
    \item[] Answer: \answerNA{} % Replace by \answerYes{}, \answerNo{}, or \answerNA{}.
    \item[] Justification: This paper presents work whose goal is to advance the field of Machine Learning. There are many potential societal consequences of our work, none which we feel must be specifically highlighted here.%\justificationTODO{}
    \item[] Guidelines:
    \begin{itemize}
        \item The answer NA means that there is no societal impact of the work performed.
        \item If the authors answer NA or No, they should explain why their work has no societal impact or why the paper does not address societal impact.
        \item Examples of negative societal impacts include potential malicious or unintended uses (e.g., disinformation, generating fake profiles, surveillance), fairness considerations (e.g., deployment of technologies that could make decisions that unfairly impact specific groups), privacy considerations, and security considerations.
        \item The conference expects that many papers will be foundational research and not tied to particular applications, let alone deployments. However, if there is a direct path to any negative applications, the authors should point it out. For example, it is legitimate to point out that an improvement in the quality of generative models could be used to generate deepfakes for disinformation. On the other hand, it is not needed to point out that a generic algorithm for optimizing neural networks could enable people to train models that generate Deepfakes faster.
        \item The authors should consider possible harms that could arise when the technology is being used as intended and functioning correctly, harms that could arise when the technology is being used as intended but gives incorrect results, and harms following from (intentional or unintentional) misuse of the technology.
        \item If there are negative societal impacts, the authors could also discuss possible mitigation strategies (e.g., gated release of models, providing defenses in addition to attacks, mechanisms for monitoring misuse, mechanisms to monitor how a system learns from feedback over time, improving the efficiency and accessibility of ML).
    \end{itemize}
    
\item {\bf Safeguards}
    \item[] Question: Does the paper describe safeguards that have been put in place for responsible release of data or models that have a high risk for misuse (e.g., pretrained language models, image generators, or scraped datasets)?
    \item[] Answer: \answerNA{} % Replace by \answerYes{}, \answerNo{}, or \answerNA{}.
    \item[] Justification: The research conducted here doesn't include large models or scraped datasets.%\justificationTODO{}
    \item[] Guidelines:
    \begin{itemize}
        \item The answer NA means that the paper poses no such risks.
        \item Released models that have a high risk for misuse or dual-use should be released with necessary safeguards to allow for controlled use of the model, for example by requiring that users adhere to usage guidelines or restrictions to access the model or implementing safety filters. 
        \item Datasets that have been scraped from the Internet could pose safety risks. The authors should describe how they avoided releasing unsafe images.
        \item We recognize that providing effective safeguards is challenging, and many papers do not require this, but we encourage authors to take this into account and make a best faith effort.
    \end{itemize}

\item {\bf Licenses for existing assets}
    \item[] Question: Are the creators or original owners of assets (e.g., code, data, models), used in the paper, properly credited and are the license and terms of use explicitly mentioned and properly respected?
    \item[] Answer: \answerYes{} % Replace by \answerYes{}, \answerNo{}, or \answerNA{}.
    \item[] Justification: all related work has been properly cited.%\justificationTODO{}
    \item[] Guidelines:
    \begin{itemize}
        \item The answer NA means that the paper does not use existing assets.
        \item The authors should cite the original paper that produced the code package or dataset.
        \item The authors should state which version of the asset is used and, if possible, include a URL.
        \item The name of the license (e.g., CC-BY 4.0) should be included for each asset.
        \item For scraped data from a particular source (e.g., website), the copyright and terms of service of that source should be provided.
        \item If assets are released, the license, copyright information, and terms of use in the package should be provided. For popular datasets, \url{paperswithcode.com/datasets} has curated licenses for some datasets. Their licensing guide can help determine the license of a dataset.
        \item For existing datasets that are re-packaged, both the original license and the license of the derived asset (if it has changed) should be provided.
        \item If this information is not available online, the authors are encouraged to reach out to the asset's creators.
    \end{itemize}

\item {\bf New Assets}
    \item[] Question: Are new assets introduced in the paper well documented and is the documentation provided alongside the assets?
    \item[] Answer: \answerNA{} % Replace by \answerYes{}, \answerNo{}, or \answerNA{}.
    \item[] Justification: The paper does not introduce new assets.%\justificationTODO{}
    \item[] Guidelines:
    \begin{itemize}
        \item The answer NA means that the paper does not release new assets.
        \item Researchers should communicate the details of the dataset/code/model as part of their submissions via structured templates. This includes details about training, license, limitations, etc. 
        \item The paper should discuss whether and how consent was obtained from people whose asset is used.
        \item At submission time, remember to anonymize your assets (if applicable). You can either create an anonymized URL or include an anonymized zip file.
    \end{itemize}

\item {\bf Crowdsourcing and Research with Human Subjects}
    \item[] Question: For crowdsourcing experiments and research with human subjects, does the paper include the full text of instructions given to participants and screenshots, if applicable, as well as details about compensation (if any)? 
    \item[] Answer: \answerNA{} % Replace by \answerYes{}, \answerNo{}, or \answerNA{}.
    \item[] Justification: The research conducted here doesn't include human subjects.%\justificationTODO{}
    \item[] Guidelines:
    \begin{itemize}
        \item The answer NA means that the paper does not involve crowdsourcing nor research with human subjects.
        \item Including this information in the supplemental material is fine, but if the main contribution of the paper involves human subjects, then as much detail as possible should be included in the main paper. 
        \item According to the NeurIPS Code of Ethics, workers involved in data collection, curation, or other labor should be paid at least the minimum wage in the country of the data collector. 
    \end{itemize}

\item {\bf Institutional Review Board (IRB) Approvals or Equivalent for Research with Human Subjects}
    \item[] Question: Does the paper describe potential risks incurred by study participants, whether such risks were disclosed to the subjects, and whether Institutional Review Board (IRB) approvals (or an equivalent approval/review based on the requirements of your country or institution) were obtained?
    \item[] Answer: \answerNA{} % Replace by \answerYes{}, \answerNo{}, or \answerNA{}.
    \item[] Justification: The research conducted here doesn't include human subjects.
    \item[] Guidelines:
    \begin{itemize}
        \item The answer NA means that the paper does not involve crowdsourcing nor research with human subjects.
        \item Depending on the country in which research is conducted, IRB approval (or equivalent) may be required for any human subjects research. If you obtained IRB approval, you should clearly state this in the paper. 
        \item We recognize that the procedures for this may vary significantly between institutions and locations, and we expect authors to adhere to the NeurIPS Code of Ethics and the guidelines for their institution. 
        \item For initial submissions, do not include any information that would break anonymity (if applicable), such as the institution conducting the review.
    \end{itemize}

\end{enumerate}

\end{document}